\title{Finite Sample Identification of Wide Shallow Neural Networks with Biases}
\author{ Massimo Fornasier$^1$, Timo Klock$^{2}$,  Marco Mondelli$^{3,}$\thanks{Marco Mondelli was partially supported by the 2019 Lopez-Loreta prize.}\,, Michael Rauchensteiner$^{4}$}
\date{%
    $^1$Department of Mathematics, Technical University of Munich,\\ Bolzmannstra\ss e 3, 85748, Garching, Germany,\\ Email: \url{massimo.fornasier@ma.tum.de}\\%
    $^2$Deeptech Consulting, Oslo, Norway \\Email: \url{timo@deeptechconsulting.no}\\%
    $^3$Institute of Science and Technology Austria (ISTA),\\ Am Campus 1, 3400 Klosterneuburg\\Email: \url{marco.mondelli@ist.ac.at}\\%
    $^4$Department of Mathematics, Technical University of Munich,\\Bolzmannstra\ss e 3, 85748, Garching, Germany, \\Email: \url{michael.rauchensteiner@ma.tum.de}\\[2ex]%
    \today
}
\begin{document}
\maketitle
\begin{abstract}
  Artificial neural networks are functions depending on a finite number of parameters typically encoded as weights and biases.
  The identification of the parameters of the network from finite samples of input-output pairs is often referred to as the \emph{teacher-student model}, and this model has represented a popular framework for understanding training and generalization. 
  Even if the problem is NP-complete in the worst case, a rapidly growing literature -- after adding suitable distributional assumptions --  has established finite sample identification of two-layer networks with a number of neurons $m=\mathcal O(D)$, $D$ being the input dimension.
  For the range $D<m<D^2$ the problem becomes harder, and truly little is known for networks parametrized by biases as well. This paper fills the gap by providing constructive methods and theoretical guarantees of finite sample identification for such wider shallow networks with biases. Our approach is based on a two-step pipeline: first, we recover the direction of the weights, by exploiting second order information; next, we identify the signs by suitable algebraic evaluations, and we recover the biases by empirical risk minimization via gradient descent. Numerical results demonstrate the effectiveness of our approach. 
\end{abstract}

\section{Introduction}\label{sec:introduction}

Training a neural network is an NP-complete \cite{JUDD1988177,BLUM1992117} and non-convex optimization problem which exhibits spurious and disconnected local minima \cite{Auer96, SafranShamir2018,Yun2019}. However, highly over-parameterized networks are routinely trained to zero loss and generalize well over unseen data \cite{zhang2016understanding}. In an effort to understand these puzzling phenomena, a line of work has focused on the implicit bias of gradient descent methods  \cite{arora2019convergence,NIPS2019_8960,bah2019,moroshko2020implicit,neyshabur2015,
soudry2018implicit,woodworth2020kernel}. Another popular framework to characterize training and generalization is the so-called
{\it teacher-student model} \cite{brutzkus2017globally,tian2017analytical,sedghi2014provable,li2017convergence,
du2018convolutional,du2018gradient,soltanolkotabi2017learning,soltanolkotabi2018theoretical,zhang2019learning,fu2020guaranteed,fornasier2018identification,fornasier2019robust,Fornasier2012,anandkumar15,Lin20, MondMont18,zhong2017recovery}. Here, the training data of a so-called {\it student network} are assumed to be realizable by an unknown {\it teacher network}, which interpolates them. 
This model is justified by the wide literature -- both classical and more recent -- on memorization capacity 
\cite{cover1965geometrical,pinkus_1999,1189626,Andrea2020,bubeck2020network,DBLP:conf/nips/YunSJ19,doi:10.1137/20M1314884,bombari2022memorization}, which shows that generic data can be realized by slightly over-parametrized networks. Furthermore, it has also been proved that, in certain settings, small generalization errors necessarily require identification of the parameters \cite{MondMont18}.
This leads to the fundamental question of understanding when the minimization of the empirical error simultaneously promotes the identification of the teacher parameters and consequently 
the perfect generalization beyond training data.

Existing results mostly focus on the identification of the {\it weights} of shallow (i.e., two-layer) networks with a number of neurons $m$ scaling linearly in the input dimension $D$ (see the related work discussed below). There is also evidence of the average-case hardness of the regime $D^{3/2}<m<D^2$, as weight identification can be reduced to tensor decomposition \cite{MondMont18}. Let us also highlight that, even
if the role of biases is often neglected, 
most of the known universal approximation results would not hold without biases
\footnote{If the activation is odd, then it evaluates to $0$ in $0$ and one can only represent functions which are $0$ in $0$.}.

\paragraph{Main contributions.} In this paper, we give theoretical guarantees on the recovery of both {\it weights} and {\it biases} from finite samples in the regime $D <m<D^2$, under rather mild assumptions on the smoothness of the activation function, incoherence of the weights, and boundedness of the biases. More specifically, the teacher network is given by
\begin{align}
	\label{eq:snn_general}
	f: \R^D \to \R, \quad {f}(x) := \sum_{j=1}^{m}  g(\langle w_j, x\rangle +  \tau_j),
\end{align}
where $w_1,\ldots,w_m$ are unit-norm weights and $\tau_1,\ldots,\tau_m$ are bounded biases. 
We propose a two-step parameter recovery pipeline that decouples the learning of the weights from the recovery of the remaining network parameters. In the first step, we use
second order information to recover the weights $w_1,\ldots,w_m$ up to signs. 
The method (cf. Section \ref{sec:weight_identification})
comes with provable guarantees of recovery up to $m\log^2(m) = O(D^2)$ weights,
provided that \emph{(i)} the weights are sufficiently incoherent, and \emph{(ii)} second order derivatives of $f$ carry \emph{enough information}.
Our approach is based on the observation that $\nabla^2 f(x) =\sum_{j=1}^{m}  g^{(2)}(\langle w_j, x\rangle +  \tau_j) w_j \otimes w_j  \in  \CW = \operatorname{span}\{w_1\otimes w_1, \dots, w_m \otimes w_m\}$ and, hence,  multiple samples of independent Hessians allow to compute an approximating subspace $\wCW \approx \CW$. 
The construction of such a subspace is based exclusively on second order information and differs from the tensor approach in \cite{anandkumar15}, which uses higher order tensor decomposition.
The identification of the weights is then performed by projected gradient ascent, the so-called {\it subspace power method} \cite{fiedlerStableRecoveryEntangled2021,kileelSubspacePowerMethod2021,kileelLandscapeAnalysisImproved2021a}, seeking for solutions of
\begin{align}\label{eq:spm_objective_program0}
\max_{u \in \bbS^{D-1}} \norm{P_{\wCW} (u\otimes u)}^2_F \approx \max_{u \in \bbS^{D-1}}  \norm{P_{\CW} (u\otimes u)}^2_F.
\end{align}

In the second step (cf. Section \ref{sec:rem}), we show how to identify the {\it signs} by suitable algebraic evaluations and the  {\it biases} by empirical risk minimization via gradient descent. For this part, we give a suitable initialization of the algorithm and provide convergence guarantees to the ground-truth biases.
The convergence proof is based on a linearization argument inspired by the {\it neural tangent kernel} (NTK) approach. However, delicate technical adaptations are needed, in order to \emph{(i)} ensure that biases do not compromise the well-posedness of the linearized system, and \emph{(ii)} control the effect on the gradient descent iteration of the errors accumulated in the weight identification step.
The theoretical findings of this paper can be summarized in the following informal statement.
\begin{theorem}[Informal]
	Let $f$ be the shallow network \eqref{eq:snn_general} with $D$ inputs and $m$ neurons such that $m \log^2 m  = O(D^2)$. Then, for sufficiently large $D$, there exists a constructive algorithm recovering all weights and shifts of the network with high probability from $O(D m^2 \log^2 m)$ network queries. 
\end{theorem}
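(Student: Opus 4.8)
The plan is to prove the two stages of the pipeline separately, with quantitative error bounds, and then compose them while carefully tracking how the weight-recovery error feeds into the bias-recovery stage. For the first step (weight directions), I would start from the identity $\nabla^2 f(x) = \sum_{j=1}^m g^{(2)}(\langle w_j, x\rangle + \tau_j)\, w_j \otimes w_j$, which confines every Hessian to $\CW$. The first task is to show that a moderate number $N$ of Hessians sampled at i.i.d.\ random inputs spans $\CW$ and that the empirical second moment $\frac{1}{N}\sum_i \mathrm{vec}(\nabla^2 f(x_i))\,\mathrm{vec}(\nabla^2 f(x_i))^\top$, viewed as an operator on symmetric matrices, concentrates around its population counterpart, whose range is exactly $\CW$. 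Here I would invoke matrix concentration (matrix Bernstein) to bound $\|P_{\wCW}-P_{\CW}\|$ in terms of $N$, with the spectral gap governed by incoherence (assumption (i)), which gives a quantitative lower bound on the smallest singular value of the Gram matrix of the $w_j\otimes w_j$, and by assumption (ii), which ensures the coefficients $g^{(2)}(\langle w_j, x\rangle+\tau_j)$ do not systematically vanish so that all $m$ directions are excited. Each Hessian (or Hessian sketch) is itself accessible only through finite-difference queries of $f$, and the product of $N$ with the per-point query cost must be reconciled with the target budget $O(Dm^2\log^2 m)$.

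Next, with $\wCW$ in hand, I would analyze the subspace power method applied to \eqref{eq:spm_objective_program0}. The population objective $\norm{P_{\CW}(u\otimes u)}^2_F$ has, by incoherence, its relevant local maximizers close to the true directions $\pm w_j$; I would establish that the landscape is benign near the sphere (no spurious maximizers competing with the $\pm w_j$) and that projected gradient ascent converges linearly to a neighborhood of some $\pm w_j$. The subspace error $\|P_{\wCW}-P_{\CW}\|$ then perturbs these critical points by a controllable amount, yielding estimates $\hat w_j = \pm w_j + O(\|P_{\wCW}-P_{\CW}\|)$, and choosing $N$ large enough drives this residual below the tolerance required downstream.

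For the second step, sign recovery comes first: with $\hat w_j \approx \pm w_j$ fixed, I would evaluate $f$ (or a suitable directional derivative) at points chosen so that the contribution of neuron $j$ dominates, and read off the sign from whether the observed value matches $g$ with $+$ or $-$; incoherence again isolates the $j$-th term up to small cross-contamination. With signs and directions fixed, the biases are recovered by gradient descent on the empirical risk, linearized in the NTK spirit around a good initialization. The crux is to lower bound the smallest singular value of the associated feature matrix, whose columns are the samples of $g'(\langle w_j,\cdot\rangle+\tau_j)$, so that the linearized least-squares problem is strongly convex and gradient descent converges linearly to the true $\tau_j$; this uses the smoothness of $g$ together with boundedness of the biases to keep the activation inputs in a good range.

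The main obstacle I expect sits at the seam between the two steps and is twofold. First, in the wide regime with $m$ up to $D^2$, the tensors $w_j\otimes w_j$ nearly saturate the $\binom{D+1}{2}$-dimensional space of symmetric matrices, so the spectral gap controlling $\|P_{\wCW}-P_{\CW}\|$ degrades and the incoherence estimate must be pushed to its quantitative limit; making the concentration and landscape arguments survive at $m=\Theta(D^2/\log^2 m)$ is the delicate part. Second, the residual weight error from the first step enters the bias linearization as a perturbation of the feature matrix, and one must show it does not destroy the conditioning established for exact weights, so the two error budgets have to be balanced for the NTK argument to close. Reconciling all of this with the clean query bound $O(Dm^2\log^2 m)$ is where the technical care concentrates.
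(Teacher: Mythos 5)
Your first stage is essentially the paper's proof of Theorem \ref{thm:weight_recovery}: Hessians at Gaussian inputs, matrix concentration plus a Wedin-type bound to control $\norm{P_{\CW}-P_{\wCW}}_{F\to F}$, then the subspace power method with the incoherence-driven landscape analysis. Your description of the gradient-descent stage (lower-bounding the smallest eigenvalue of the kernel built from $g^{(1)}(\langle w_j,\cdot\rangle+\tau_j)$, then showing the residual weight error does not destroy conditioning) is likewise the content of Theorem \ref{thm:local_result}. One caveat there: the spectral gap you propose to \emph{derive} from incoherence is, in the paper, the standing assumption \ref{enum:learnability}, whose derivation is explicitly left open, so a self-contained proof would owe real work at that point. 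The genuine gap, however, sits exactly at the seam you flag, and it is not merely a matter of balancing error budgets: your mechanism for recovering the signs, and above all the missing construction of the bias initialization, would not close the argument in the regime $D<m<D^2$.

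First, reading signs off ``points where neuron $j$ dominates'' is not available here. With $m$ incoherent unit-norm weights, an order-$n$ directional derivative along $\hw_j$ carries the signal $g^{(n)}(\tau_j)\langle w_j,\hw_j\rangle^n=\Theta(1)$ plus a contamination $\sum_{k\neq j}s_k^n g^{(n)}(\tau_k)\langle w_k,\hw_j\rangle^n$. Absent cancellation this is of size $m(\log m/D)^{n/2}$ (and generically at least $m D^{-n/2}$ for $n$ even, where all cross terms are nonnegative), which for the orders $n\le 3$ permitted by the $C^3$ assumption \ref{enum:activation} and the differentiation scheme \ref{enum:numerical_diff} is not small: for $n=2$ it is at least of order $m/D\ge 1$ throughout the regime, and for $n=3$ it exceeds the signal once $m\gtrsim D^{3/2}$. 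The paper never needs a single neuron to dominate: it treats all $m$ directional derivatives at $x=0$ jointly as the linear system $\tilde G_n\,\mathcal{C}_n=T_n$ with $(\tilde G_n)_{\ell k}=\langle\hw_\ell,s_k w_k\rangle^n$ and \emph{inverts} it, with well-posedness supplied by \ref{enum:GInverse} together with the Hadamard-product eigenvalue bound of Lemma \ref{lemma:grammian_order_nondec_minev} --- not by smallness of the cross terms. Second, and decisively, this same algebraic step is what produces the initialization your proposal lacks: $\mathcal{C}_{2,k}=g^{(2)}(\tau_k)$ is inverted through the strict monotonicity of $g^{(2)}$ to give $\htau$ with the accuracy of Proposition \ref{prop:initialization}, and the sign $s_k$ is read from $\mathcal{C}_{3,k}$ because $g^{(3)}$ has constant sign on $[-\tau_\infty,\tau_\infty]$. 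You invoke gradient descent ``around a good initialization'' but never construct one; since Theorem \ref{thm:local_result} is only local, requiring $\norm{\htau^{(0)}-\tau}_2+\Delta_W\lesssim 1/\sqrt{m}$ as in \eqref{eq:assumption_thm_local}, and since plain empirical-risk SGD demonstrably fails in this regime (the paper's own baseline experiments), the linear-system step is not an optional refinement but the load-bearing bridge between the two stages; without it the pipeline, and the proof, do not close.
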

A few comments on the complexity are in order. For $m=\mathcal{O}(D)$, the proposed pipeline has polynomial complexity in $D,m$. For $D<m<D^2$, while the pipeline is still guaranteed to converge globally, our findings clarify precisely how the hardness of the problem consists in distinguishing  local maximizers of  \eqref{eq:spm_objective_program0}. This fine geometrical description is novel, and it could pave the way towards a more refined understanding of the hardness of the network identification problem. In fact, our numerical experiments (cf. Section \ref{sec:numerics_main_nips}) consistently show that the network recovery remains surprisingly successful with low complexity up to the information-theoretic upper bound\footnote{This information-theoretic upper bound holds for all methods employing order-2 tensors. In fact, for $m\approx D^2/2$, $\operatorname{span}\{w_1\otimes w_1, \dots, w_m \otimes w_m\}$ coincides with the space of all symmetric matrices, making it impossible to distinguish $w_j\otimes w_j$ from any other rank-1 matrix.} $m \approx D^2/2$. 

\paragraph{Related work.} A line of work spanning three decades has considered network identification under the assumption of being able to access exactly \emph{all} possible input-output pairs (shallow networks in \cite{SUSSMANN1992589,Albertini93uniquenessof}, fully-connected deep networks in \cite{Fefferman1994ReconstructingAN} and, most recently, deep networks without clone nodes and piecewise $C^1$ activations with bounded-variation derivative in \cite{DBLP:journals/corr/abs-2006-11727}). However, as a neural network remains fully determined by a {\it finite} number of parameters, it is not at all expected to  generically require an infinite amount of training samples. This has motivated the rapidly growing literature on the teacher-student model. A popular setup is to minimize the population risk by assuming a Gaussian distribution of the weights: a two-layer ReLU network with a single neuron is considered in \cite{tian2017analytical}, a single convolutional filter in \cite{du2018convolutional,brutzkus2017globally}, multiple convolutional filters with no overlap in \cite{du2018gradient}, and residual networks in \cite{li2017convergence}. Gradient descent methods have also been widely studied: \cite{soltanolkotabi2017learning} considers a single ReLU unit; \cite{soltanolkotabi2018theoretical} provide a global convergence result for shallow networks with quadratic activations and a local convergence result for more general activations; and gradient descent is combined with an initialization based on tensor decomposition in \cite{zhong2017recovery,zhang2019learning,fu2020guaranteed}. A local convergence analysis for student networks containing at least as many neurons as the teacher network is provided by \cite{zhou2021local}. Let us highlight that these results neglect the role of biases, and the convergence guarantees are either local or, when global, require a number of neurons $m=\mathcal O(D)$. Inspired by papers dating back to the 1990s \cite{BUHMANN1999103,PE98,CHUI1992131}, the works \cite{sedghi2014provable,fornasier2018identification,Fornasier2012,anandkumar15,Lin20, MondMont18,zhong2017recovery} have explored the connection between differentiation of shallow networks and symmetric tensor decompositions. Once the weights have been identified, the computation of {\it biases} has also been considered by direct estimation \cite{fornasier2018identification} for $m\leq D$, or by Fourier methods \cite{anandkumar15} for mildy overparametrized network. These results, however, do not offer rigorous guarantees for the regime $D<m<D^2$. Finally, a line of work from statistical physics, which started with \cite{saad1995line,saad1995exact,saad1996learning,biehl1995learning,riegler1995line}
and recently culminated with \cite{goldt2019dynamics}, has characterized the dynamics of one-pass stochastic gradient descent via a set of ordinary differential equations, thus providing insights on the generalization behavior.

\paragraph{Technical tools and innovations.} \emph{\underline{Weight identification:}} Instead of considering higher order tensor decompositions (and, hence, higher order differentiation of the network) as done by \cite{anandkumar15}, here we follow the strategy of  \cite{fornasier2018identification}, which exploits the information coming from the Hessians. However, while \cite{fornasier2018identification} require the weights to be linearly independent (and, thus, $m\leq D$), we tackle the challenging overcomplete case $m>D$. Furthermore, for the identification of the weights we use \eqref{eq:spm_objective_program0}, namely a robust non-linear program over vectors, which is significantly less computationally expensive than the  minimum rank selection of \cite{fornasier2018identification}. Our analysis improves upon \cite{fiedlerStableRecoveryEntangled2021} by allowing to go beyond a linear scaling between $m$ and $D$, and it takes advantage of the new insights provided by \cite{kileelLandscapeAnalysisImproved2021a} on the subspace power method.

\emph{\underline{Shift identification:}} Differently from \cite{fornasier2018identification, anandkumar15}, we set up an empirical risk minimization problem, and we solve it via gradient descent. Our proof of convergence is based on certain kernel matrices, which are reminiscent of those appearing in the {\it neural tangent kernel (NTK)} theory \cite{jacot2018neural}. The NTK perspective has been used to prove global convergence of gradient descent for shallow 
\cite{DuEtal2018_ICLR, OymakMahdi2019, SongYang2020,wu2019global,Andrea2020,song2021subquadratic} and deep neural networks
\cite{AllenZhuEtal2018,DuEtal2019,zou2020gradient,ZouGu2019,QuynhMarco2020,nguyen2021proof,bombari2022memorization}. The technical innovations of our paper with respect to this line of work are as follows. First, we exchange the role between input variable $x$ and weights: we consider the Jacobian of the network with respect to {\it its input $x$}, and not to its parameters. This allows us to keep fixed the size of the network and to analyze the NTK spectrum for large input samples. Second, we extend the NTK theory to handle networks with biases. Finally, as the accuracy of the linearization argument depends on the errors accumulated in the weight identification step, we carry out a delicate perturbation analysis.

\paragraph{Notation.} Given two vectors $u$ and $v$, let $u\otimes v$ be their Kronecker product and $u \odot v$ their element-wise product. Given a vector $u$, let $\norm{u}_2$ be its $\ell_2$ norm and $\operatorname{diag}(v)$ the diagonal matrix with $v$ on diagonal. Given a matrix $A$, let $\norm{A}$ be its operator norm, $\norm{A}_F$ its Frobenius norm, and $\norm{A}_{F\to F}=\sup_{\| X \|_F = 1} \norm{A X}_F$. Let $\operatorname{Sym}(\mathbb R^{d\times d})$ be the space of symmetric matrices in $\R^{d\times d}$, $\CC^{n}(\R)$ the space of functions in $\R$ with $n$ continuous derivatives, $\textrm{Uni}(\bbS^{D-1})$ the uniform distribution on the $D$-dimensional sphere $\bbS^{D-1}$, and $\Id_p$ the identity matrix in $\R^{p\times p}$. Given a function $g$, let $g^{(n)}$ be its $n$-th derivative.
Given a vector $v$ and a permutation $\pi$, let $v_\pi$ be the vector obtained by permuting the entries of $v$ according to $\pi$.

\section{Network model and main result}\label{sec:network_model}
We consider the parameter recovery of a planted shallow neural network $f:\R^D \to \R$ of the form \eqref{eq:snn_general}.
We assume that the weights are drawn uniformly at random from the sphere, i.e.,  $w_1,\ldots,w_m \sim_{\rm i.i.d.} \textrm{Uni}(\bbS^{D-1})$, and the shifts are contained in a given interval, i.e., $\tau_1, \ldots, \tau_m \in [-\tau_{\infty},+\tau_{\infty}]$. We also make the following assumptions on the activation function $g$ and on the Hessians of $f$.

\begin{enumerate}[label=\textbf{(M\arabic*)}]
\item\label{enum:activation}  $g \in \CC^{3}(\R)$ and 
\begin{align}\label{eq:def_kappa}
	\kappa &:= \max_{n\in [3]} \norm{g^{(n)}}_\infty < \infty.
	\end{align}
Furthermore, $g^{(2)}$ is strictly monotonic on $(-\tau_{\infty},+{\tau}_{\infty})$, $g^{(1)}$ is strictly positive or negative on $(-\tau_{\infty},+{\tau}_{\infty})$ and 
there exists $s \in \{-1, +1\}$ such that for all $\tau \in [-\tau_{\infty},+{\tau}_{\infty}]$ we have 
\begin{align*}
	s= \operatorname{sgn}\left(\int_{\R} g^{(1)}(t + \tau) \exp(- t^2 / 2 ) dt \right).
\end{align*}

\item\label{enum:nonpolynomoial}  $g^{(1)}$ is not a polynomial of degree 3 or less {and  
$\int_{\R} g(t)^2 \exp(-t^2/2)dt < \infty$.} 
\item\label{enum:learnability} The Hessians of $f$ have sufficient information for weight recovery, i.e.,
\begin{align}\label{eq:learnability_condition}
\lambda_{m}\left(\E_{X \sim \CN(0, \Id)}[\opvec(\nabla^2 f(X))^{\otimes 2}]\right) \geq \alpha > 0.
\end{align}
\end{enumerate}
The size of the interval $[-\tau_{\infty}, +\tau_{\infty}]$ 
does not depend on $m$ or $D$, but only on $g$ via \ref{enum:activation}. This assumption is satisfied by common activations, such as $g(x)=\tanh(x)$ for ${\tau}_{\infty}\approx  0.6$ and the sigmoid $g(x)=1/(1+\exp(-x))$ for ${\tau}_{\infty} \approx 1.5$.
Condition \ref{enum:learnability} guarantees that combining Hessians of $f$ at sufficiently many generic inputs provides enough information to recover all individual weights. A potential way to show that \eqref{eq:learnability_condition} holds is as follows. First, note that $\nabla^2 f(x) = \sum^m_{k=1} g^{(2)}(w_k^\top x + \tau_k) w_k \otimes w_k \in \operatorname{span}\{w_1\otimes w_1, \dots, w_m \otimes w_m \}$. Hence, 
by exploiting the incoherence of $w_1, \dots, w_m \sim \operatorname{Uni}(\bbS^{D-1})$, one could relate the smallest eigenvalue in \eqref{eq:learnability_condition} to that of the matrix with entries $(\E_{X \sim \CN(0, \Id)} [g^{(2)}(\langle w_k, X \rangle + \tau_k )  g^{(2)}(\langle w_\ell, X \rangle + \tau_\ell ) ])_{k, \ell}.$ This last quantity may then be bounded using the tools developed in Section \ref{sec:gradient_descent}. Making these passages rigorous is beyond the scope of this work, and we leave it as an open question. We also highlight that assumption \ref{enum:learnability} is common in the related literature \cite{fornasier2019robust, fornasier2018identification, fiedlerStableRecoveryEntangled2021, Anandkumar2014GuaranteedNT}.

We also assume the ability to evaluate the network $f$ and to approximate its derivatives.

\begin{enumerate}[label=\textbf{(G\arabic*)}]
		\item\label{enum:active_sampling} We can query the teacher network ${f}$ and the activation $g$ at any point without noise, and the number of neurons $m$ is known.
		\item\label{enum:numerical_diff} We assume access to a numerical differentiation method, denoted by $\Delta^n[\cdot]$, that computes the derivatives for $n=1,2,3$ up to an accuracy $\epsilon>0$. To be more precise, we require that the derivatives of $g$ with respect to a vector input $x\in \R^D$ fulfill
		\begin{align}
		\norm{\nabla^n g(w^\top x) - \Delta^n[g(w^\top x)]}_F  
		\leq C_{\Delta} \norm{w^{\otimes n}}_F \epsilon,
		\end{align}
		where $C_\Delta$ is a universal constant only depending on the activation through $\kappa$ (see \eqref{eq:def_kappa}). Furthermore, for any $b, t_0 \in \R$ the derivatives of $t \mapsto g(b \, t)$ can be approximated as
		 \begin{align}\label{eq:numerical_diff_prop_sapprox}
					\snorm{ \left.  \frac{d^n}{d t^n} g(b \cdot t)\right|_{t=t_0}  -  \Delta^{n}[g(b \cdot)](t_0) } \leq C_{\Delta}b^{n+2} \epsilon.
				\end{align}
		We also assume that the numerical differentiation method is linear, i.e.,
		\begin{align}
		\Delta^n [a \cdot g + h] = a \cdot \Delta^n [g] + \Delta^n[h],
		\end{align}
		for any functions $g,h$ and scalar $a\in \R$.
		 Finally, the numerical differentiation algorithm requires a number of queries equal to the dimension of the approximated derivative, i.e.,
		 $\mathcal{O}(1)$ for partial derivatives and $\mathcal{O}(D^n)$ for $n$-th order derivative tensors.
\end{enumerate}
We note that all the properties in \ref{enum:numerical_diff} are fulfilled by a standard central finite difference scheme. 

Our proposed algorithm for the recovery of the parameters of the planted model \eqref{eq:snn_general} is based on a two-step procedure. In the first step, we learn the weight vectors (up to a sign) from the space spanned by Hessian approximations of $f$ (cf. Section \ref{sec:weight_identification}). Recovering the weights provides access to vectors $\hw_k$, which satisfy
$ s_k \hw_k \approx w_k$ for some signs $s_1,\ldots, s_m \in \{-1,1\}$. In the second step, we identify the signs $s = ( s_1,\ldots,s_m)$
and shifts $\tau = (\tau_1,\dots,\tau_m)$ (cf. Section \ref{sec:rem}). We begin by finding $s$ and an initialization of the shifts $\hat \tau \approx \tau$ by a linearization through higher order (numerical) differentiation along the previously computed weight approximations. The shift approximation $\hat\tau$ is then refined by empirical risk minimization. More precisely, we consider the parametrization
\begin{align}\label{eq:studnet}
	\hat f(x, \hat \tau) := \sum_{k=1}^{m} g(s_k \langle \hw_k, x\rangle + \htau_k),
\end{align} 
which is fit against the planted model $f(x)$ defined in  \eqref{eq:snn_general} by minimizing the least squares objective
\begin{align}
	\label{eq:def:loss}
	J(\hat \tau) = \frac{1}{2N_{\text{train}}}\sum^{N_{\text{train}}}_{i=1} \Big(f(x_i) - \hat{f}(x_i, \htau) \Big)^2 
\end{align}
via gradient descent, where $x_1,\ldots,x_{N_{\text{train}}}\sim_{\rm i.i.d.} \CN(0,\Id_D)$.
Provided that the activation function satisfies 
\ref{enum:activation}-\ref{enum:nonpolynomoial}
, we show that gradient descent is guaranteed to converge locally to the
ground truth shifts up to an error depending only on the accuracy of the initial weight estimates $\hat w_k \approx \pm w_k$. The combination of these two steps leads to Algorithm \ref{alg:pipeline} and to our main result, stated below. Its proof is deferred to Section \ref{app:pfmain} of the supplementary materials, and it  follows as a combination of Theorem \ref{thm:weight_recovery}, Proposition \ref{prop:initialization}, and Theorem \ref{thm:local_result} (discussed in the rest of the paper).

\begin{algorithm}[t]
\KwIn{Teacher neural network $f$ defined in \eqref{eq:snn_general} with known number of neurons $m$, numerical differentiation method $\Delta^n [\cdot]$ with accuracy $\epsilon$, number of Hessian locations $N_h$ and gradient descent samples $N_{\text{train}}$, number of steps for refinement via gradient descent $N_{\text{GD}}$.}
Compute weights $\wW = [\hw_1| \dots | \hw_m]$ by PCA of Hessians followed by iterations of the subspace power method (cf. Algorithm \ref{alg:recover_weights} in the supplementary materials, 
and discussion in Section \ref{sec:weight_identification});\\
Find signs $\hat s$ and initial shifts $\hat{\tau}\in \R^m$ by linearization through higher order differentiation along approximated weight vectors (cf. Algorithm \ref{alg:initialization}  in the supplementary materials, 
and discussion in Section \ref{sec:initialization});\\
Set $\wW \leftarrow \wW\operatorname{diag}(\hat s)$ and construct a student network $\hat f$ as in \eqref{eq:studnet} with 
parameters $\wW, \htau$;\\
Draw samples $x_1,\dots, x_{N_{\text{train}}} \sim \CN(0, \Id_D)$ and refine the shifts of $\hat f$ by minimizing $J(\htau)$ (cf. \eqref{eq:def:loss})
via gradient descent for $N_{\text{GD}}$ steps (cf. Section \ref{sec:gradient_descent}). Denote by $\htau^{[N_{\text{GD}}]}$ the final iterate.\\
\KwOut{Weights $\wW$ and final shifts $\htau^{[N_{\text{GD}}]}$ of $\hat f$.}
\caption{\textbf{Network reconstruction}}
\label{alg:pipeline}
  \end{algorithm}
\begin{theorem}[Main result on network reconstruction]\label{thm:main_theorem}
Consider the teacher network $f$ defined in \eqref{eq:snn_general}, where $w_1, \dots, w_m  \sim \textrm{Uni}(\bbS^{D-1})$ and ${\tau}_1,\ldots,{\tau}_m \in [-\tau_{\infty},\tau_{\infty}]$. Assume $g$ satisfies \ref{enum:activation} - \ref{enum:nonpolynomoial}
and $f$ satisfies the learnability condition \ref{enum:learnability} for some $\alpha>0$. Assume we run
Algorithm \ref{alg:pipeline} with $N_h > t(m + m^2 \log(m) / D)$ for some $t \geq 1$ and $N_{\text{train}}  > m\sqrt{D}$.
Then, there exists $D_0 \in \N$ and a constant $C > 0$ only depending on $g$ and ${\tau}_{\infty}$ such that the following holds with probability at least
$1- m^{-1} - 2 D^2 \exp\left(-\min\{\alpha,1\} t /C  \right) - Cm^2 \exp(- \sqrt{D}/C)$:
If $m \geq D\geq D_0$, $C m \log^2 m \leq D^2$, and the numerical differentiation accuracy $\epsilon$ satisfies
	\begin{align}\label{eq:eps_bound_main}
  \epsilon \leq \frac{D^{1/2} \min\{1,\alpha^{1/2}\} }{C m^{9/2} \log(m)^{3/2}},
	\end{align}
then Algorithm \ref{alg:pipeline} returns weights and shifts $(\wW = [\hw_1|\dots | \hw_m],\htau^{[N_{GD}]})$ that fulfill
\begin{align}\label{eq:errw}
\max_{k \in [m]} \|\hw_{\pi(k)} - {w}_k\|_2 &\leq C (m/ \alpha)^{1/4} \epsilon^{1/2}, \end{align}
\begin{align}\label{eq:errs}
\|\htau^{[N_{GD}]}_{\pi} - {\tau}\|_2 &\leq  C \left( \frac{m^{7/4} D^{1/4} \epsilon^{1/2}}{ \alpha^{1/4} N_{\text{train}}^{1/2}}  +  \frac{\xi^{N_{GD}}}{m^{1/2}}+\Delta_{W,1}\right), 
\end{align}
for some permutation $\pi$ and some constant $\xi \in [0,1)$ where 
\begin{align}\label{eq:def_delta_w1}
	\Delta_{W,1} &:= 	\frac{ m^{1 / 2} \log(m)^{3/4}}{D^{1 / 4}}  \cdot \left( 
		\| \wW - W\|_F  + \frac{\Delta_{W, O}^{1/2}}{D^{1 / 2}} + \left\| \sum^m_{k=1} w_k - \hw_k \right\|_2 \right) ,\\
		\Delta_{W, O} &:= \sum^m_{k \neq k'} \left|\inner{w_k - \hw_{k}, w_{k'} - \hw_{k'}}\right|. \label{eq:def_delta_wo}
\end{align}
\end{theorem}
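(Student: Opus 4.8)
The plan is to derive Theorem~\ref{thm:main_theorem} by composing the three building blocks of the pipeline in order, conditioning each stage on the success of the previous one, accumulating failure probabilities through a single union bound, and checking that the lone accuracy requirement \eqref{eq:eps_bound_main} is stringent enough to drive all three stages at once. Throughout I fix the permutation $\pi$ produced by the weight-recovery stage and carry it through unchanged, so that the per-weight, per-sign, and per-shift guarantees all refer to the same matching between estimated and planted neurons; I abbreviate $\delta := C(m/\alpha)^{1/4}\epsilon^{1/2}$ for the per-weight accuracy in \eqref{eq:errw}.

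First I would apply Theorem~\ref{thm:weight_recovery} with $N_h > t(m + m^2\log(m)/D)$. Using the incoherence of $w_1,\dots,w_m \sim \textrm{Uni}(\bbS^{D-1})$ and the learnability condition~\ref{enum:learnability} to certify $\wCW \approx \CW$, together with the landscape guarantees for the subspace power program \eqref{eq:spm_objective_program0}, this yields estimates $\hw_k$ obeying \eqref{eq:errw} up to the permutation $\pi$ and some signs, and it contributes the failure terms $m^{-1}$ and $2D^2\exp(-\min\{\alpha,1\}t/C)$. The hypothesis $Cm\log^2 m \leq D^2$ is exactly what guarantees that all $m$ directions, rather than merely $O(D)$ of them, are identifiable.

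Conditioned on this event, I would invoke Proposition~\ref{prop:initialization} on the recovered directions; this step is deterministic given the success of the first, so it adds no probability to the union bound. Its role is to return the correct signs $\hat s$ and an initial shift vector $\hat\tau$ inside the region of local convergence of $J$. The decisive point is that \eqref{eq:eps_bound_main} forces $\delta$ below the threshold at which the algebraic sign test is exact and the linearized shift estimate is valid, while the strict monotonicity of $g^{(2)}$ and the definiteness of $g^{(1)}$ on $(-\tau_\infty,\tau_\infty)$ from~\ref{enum:activation} are what make that linearization invertible. With correct signs and a valid initialization secured, I would then apply Theorem~\ref{thm:local_result} to the gradient-descent refinement of $J(\hat\tau)$. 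This yields the contraction rate $\xi\in[0,1)$ and the bound \eqref{eq:errs}, whose three summands are respectively the finite-sample and numerical error of fitting $\hat f$ over $N_{\text{train}} > m\sqrt{D}$ inputs, the residual $\xi^{N_{\text{GD}}}/m^{1/2}$ from running only $N_{\text{GD}}$ steps, and the bias $\Delta_{W,1}$ inherited from building $\hat f$ on $\hw_k$ instead of $w_k$. This stage contributes the remaining failure probability $Cm^2\exp(-\sqrt{D}/C)$, and a union bound over the two randomized stages then produces the stated overall success probability.

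The main obstacle I anticipate is not any individual estimate but the interfacing between stages: I must verify that the output quality of each step genuinely meets the input hypotheses of the next, and in particular that the single threshold \eqref{eq:eps_bound_main} dominates the most restrictive of the three stage-specific $\epsilon$-conditions---the sign-test margin, the basin-of-attraction radius, and the perturbation tolerance of the NTK-type linearization underlying Theorem~\ref{thm:local_result}. A secondary subtlety, should one want to convert \eqref{eq:errs} into an explicit rate in $\epsilon$, is controlling the aggregate quantities inside $\Delta_{W,1}$ from \eqref{eq:def_delta_w1}--\eqref{eq:def_delta_wo}: the per-weight bound \eqref{eq:errw} gives only $\|\wW - W\|_F \lesssim \sqrt{m}\,\delta$ and the crude estimate $\Delta_{W,O} \lesssim m^2\delta^2$, so preserving sharp rates requires exploiting the near-orthogonality of the residual vectors $w_k - \hw_k$ so that $\Delta_{W,O}$ and $\|\sum_k (w_k - \hw_k)\|_2$ exhibit cancellation rather than growing linearly in $m$.
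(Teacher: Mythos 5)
Your proposal is correct and follows essentially the same route as the paper's own proof: compose Theorem~\ref{thm:weight_recovery}, Proposition~\ref{prop:initialization} (which is indeed deterministic and costs nothing in the union bound), and Theorem~\ref{thm:local_result}, verifying at each interface that the accuracy threshold \eqref{eq:eps_bound_main} dominates the stage-specific requirements --- including the crude worst-case bounds $\|\wW - W\|_F \leq m^{1/2}\delta_{\max}$, $\Delta_{W,O} \leq m^2\delta_{\max}^2$, and $\|\sum_k (w_k - \hw_k)\|_2 \leq m\delta_{\max}$ used to check hypothesis \eqref{eq:assumption_thm_local}, exactly as the paper does (with the choice $t = D^{1/2}$ and $N_{\text{train}} > m\sqrt{D}$ yielding the first error term and the final probability). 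Your closing remark about needing cancellation in $\Delta_{W,O}$ and $\|\sum_k(w_k-\hw_k)\|_2$ is correctly flagged as optional: the theorem keeps $\Delta_{W,1}$ as an unexpanded quantity in \eqref{eq:errs}, so no such refinement is required.
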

By choosing an appropriate numerical accuracy $\epsilon$, \eqref{eq:eps_bound_main} is satisfied and the error on the weights in \eqref{eq:errw} can be made arbitrarily small. The error on the shifts in \eqref{eq:errs} depends on three terms. The first term scales with $\sqrt{\epsilon / N_{\text{train}}}$, hence it is controlled by taking a large number of training samples. 
The second term vanishes exponentially with the number of gradient steps $N_{GD}$. Thus, for large enough $N_{\text{train}}$ and $N_{GD}$, the dominant factor is $\Delta_{W,1}$. This last term decreases with the weight approximation error, i.e., if $\wW = W$, then $\Delta_{W,1}=0$. In fact, $\Delta_{W,1}$ scales with $\epsilon^{1/2}$, hence it can be reduced by improving the numerical accuracy.

It is natural to compare the residual error term $\Delta_{W,1}$ \emph{after gradient descent} with the error on the shifts \emph{before gradient descent}, i.e., at initialization as given by Proposition \ref{prop:initialization} (cf. \eqref{eq:hattau}). If we assume randomness on the weight errors (with variance matching the upper bound in \eqref{eq:errw}), i.e., $\hw_{\pi(k)} - {w}_k \sim_{i.i.d.} \mathcal{N}(0, (m/\alpha)^{(1/2)} \epsilon / D \cdot \Id_D )$ then, up to poly-logarithmic factors, $\Delta_{W,1}$ scales as
	\begin{equation}\label{eq:impapp}
	    \frac{\epsilon^{1/2}}{\alpha^{1/4}}\left(\frac{m^{5/4}}{D^{1/4}} + \frac{m^{7/4}}{D}\right).
	\end{equation}
This last quantity is provably smaller than the error \eqref{eq:hattau} at initialization, see the discussion after Proposition \ref{prop:initialization}. In the worst case, when all weight errors are aligned, $\Delta_{W,1}$ is dominated by $\left\| \sum^m_{k=1} w_k - \hw_k \right\|_2 = \mathcal{O}(m^{5/4} \alpha^{-1/4} \epsilon^{1/2})$, which would not lead to a provable improvement over \eqref{eq:hattau}. However, in Section \ref{sec:numerics_main_nips}, we numerically observe that this type of error accumulation does not occur: the term $\left\| \sum^m_{k=1} w_k - \hw_k \right\|_2$ is negligible and $\Delta_{W,1}$ is significantly smaller than \eqref{eq:hattau}, see Figure \ref{fig:scaling} and the related discussion.

\section{Identification of the weights}
\label{sec:weight_identification}
\begin{definition}[RIP]
	\label{def:RIP}
	Let $W \in \R^{D\times m}$, $1\leq p \leq m$ be an integer, and $\delta \in (0,1)$.
	We say that $W$ is \textit{$(p, \delta)$-RIP} if every $D\times p$ submatrix $W_p$ of $W$ satisfies
	$\|W_p^\top W_p - \Id_p\|_{2}\leq \delta$.
\end{definition}

\begin{definition}[Properties of isotropic random weights]
\label{def:assumptions_overcomplete}
Let $W := [w_1|\ldots|w_m]$ and $(G_n)_{k\ell} := \langle w_k, w_\ell\rangle^{n}$. We define the following incoherence properties:
\begin{enumerate}[leftmargin=1.5cm,label=\textbf{(A\arabic*)}]
	\item\label{enum:RIP} There exists $c_1 > 0$, depending only on $\delta$, such that $W$ is $(\lceil c_1 D/\log(m) \rceil,\delta)$-RIP.
	\item\label{enum:correlation} There exists  $c_2 > 0$, independent of $m,D$, so that $\max_{i\neq j}\langle w_i, w_j\rangle^2 \leq c_2\log(m)/D$.
	\item \label{enum:GInverse} There exists $c_3 > 0$, independent of $m,D$, so that $\norm{G_n^{-1}} \leq c_3$,  for all $n\geq 2$.
\end{enumerate}
\end{definition}
\noindent
If the number of weights $m$ is $o(D^2)$, 
weights drawn from the uniform spherical distribution fulfill  \ref{enum:RIP} - \ref{enum:GInverse} with high probability. This follows from a result due to \cite{kileelLandscapeAnalysisImproved2021a} (cf. Proposition \ref{prop:incoherence_holds_for_uniform} in the supplementary materials). We are going to use the properties of Definition \ref{def:assumptions_overcomplete} throughout our analysis.

The weight recovery consists of two steps. First, we leverage the fact that approximated Hessians of the network expose the weights according to
\begin{align*}
\Delta^2 f(x) \approx \nabla^2 f(x) = \sum^m_{k=1} g^{(2)}(\langle w_k, x\rangle + {\tau}_k)  w_k \otimes  w_k,
\end{align*}
such that independent sampling of Hessian locations eventually spans (approximately) the space
\begin{align}
\wCW \approx \CW := \operatorname{span}\left\lbrace  w_1 \otimes  w_1 , \dots,   w_m \otimes  w_m\right\rbrace,
\end{align}
with $\wCW,\CW \subset \operatorname{Sym}(\R^{D\times D})$.
This holds w.h.p. for Hessian locations $x_1, \dots, x_{N_h}$ drawn as  standard Gaussians as a consequence of \ref{enum:learnability}, provided $N_h$ is sufficiently large. The resulting approximation error $\norm{P_{\CW} - P_{\wCW}}_{F\rightarrow F}$ can be controlled by the accuracy of the numerical differentiation $\epsilon$, see Lemma \ref{lem:subspace_pertubation} in the supplementary materials.

Next, the weights are uniquely identified (up to a sign) as the $2m$ local maximizers of the program \eqref{eq:spm_objective_program0},
which belong to a certain level set $\{ u \in \bbS^{D-1} | \norm{P_{\wCW}(u\otimes u) }_F^2 \geq \beta \}$ of the underlying objective. This follows as a special case from the theory within \cite{kileelSubspacePowerMethod2021,fiedlerStableRecoveryEntangled2021, kileelLandscapeAnalysisImproved2021a}. More specifically, \cite{kileelSubspacePowerMethod2021} study the problem in the unperturbed case, \cite{fiedlerStableRecoveryEntangled2021} extend the subspace power method to the perturbed objective but their analysis is limited to $m<2D$, and finally \cite{kileelLandscapeAnalysisImproved2021a}  go for $2$-tensor decompositions up to $m=o(D^2)$ for the perturbed objective.
Then, the local maximizers of \eqref{eq:spm_objective_program0} are computed via a projected gradient ascent algorithm that iterates 
\begin{align}\label{eq:pgd_iteration}
  u_{j+1} =  P_{\bbS^{D-1}}(  u_j+ 2\gamma P_{\wCW}((   u_j)^{\otimes 2})  u_j),
\end{align}
where $\gamma$ is the step-size and $P_{\bbS^{D-1}}/P_{\wCW}$ denote the projections on $\bbS^{D-1}/\wCW$. The iteration \eqref{eq:pgd_iteration} starts from a random initialization $u_0 \in \bbS^{D-1}$, and it was introduced by \cite{kileelSubspacePowerMethod2021} as a subspace power method (SPM). 
By iterating \eqref{eq:pgd_iteration} until convergence repeatedly from independent starting points, one can collect all $m$ local maximizers of \eqref{eq:spm_objective_program0} and thereby learn (approximately) all weights up to sign. 
Assuming the retrieval of every local maximizer is equally likely, the average number of repetitions needed to recover all local maximizers follows from the analysis of the coupon collection problem and grows like $\Theta(m \log m)$ (see also \cite{Fornasier2012}).
The theorem below provides a bound on the uniform approximation error for the weights. Its proof, as well as the description of Algorithm \ref{alg:recover_weights} summarizing the overall procedure of weight identification, is deferred to Section \ref{sec:proofs_weight_recovery} of the supplementary materials.
\begin{theorem}[Weight recovery]\label{thm:weight_recovery}
Consider the teacher network $f$ defined in \eqref{eq:snn_general}, where $w_1, \dots, w_m  \sim \textrm{Uni}(\bbS^{D-1})$ and ${\tau}_1,\ldots,{\tau}_m \in [-\tau_{\infty},\tau_{\infty}]$. Assume $g$ satisfies \ref{enum:activation} - \ref{enum:nonpolynomoial}
and $f$ satisfies the learnability condition \ref{enum:learnability} for some $\alpha>0$.
Then, there exists $D_0 \in \N$ and a constant $C>0$ depending only on $g,{\tau}_{\infty}$, such that, for all $D\geq D_0$ and $C m \log^2 m \leq D^2$, the following holds with probability at least 
$1- m^{-1} - D^2 \exp\left(-\min\{\alpha,1\} t / C \right) - C\exp(- \sqrt{m}/C)$:
(i) The weights $w_1,\ldots,w_m$ fulfill \ref{enum:RIP} - \ref{enum:GInverse}, and
(ii) if we run Algorithm \ref{alg:recover_weights} with numerical differentiation accuracy $\epsilon \leq \frac{\sqrt{\alpha}}
{C \sqrt{m}}$ and using  $N_h > t(m + m^2 \log(m) /D)$ Hessian locations for some $t \geq 1$,
we obtain a set of approximated weights $\CU \subset \bbS^{D-1}$ such that, for all $\hw \in \CU$, there exists a $k \in [m]$ and a sign $s\in \left\lbrace -1, +1\right\rbrace$ for which
\begin{align}
\norm{w_k - s \hw_k}_2 \leq C (m / \alpha )^{1/4} \epsilon^{1/2}.
\end{align}
\end{theorem}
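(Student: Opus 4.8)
\emph{Proof plan.} The statement splits into the incoherence claim (i) and the recovery claim (ii). For (i), since the hypothesis $Cm\log^2 m\le D^2$ forces $m=o(D^2)$, I would directly invoke the incoherence result for isotropic weights (Proposition~\ref{prop:incoherence_holds_for_uniform}, due to \cite{kileelLandscapeAnalysisImproved2021a}): with probability at least $1-C\exp(-\sqrt m/C)$ the uniform weights $w_1,\dots,w_m$ satisfy \ref{enum:RIP}--\ref{enum:GInverse}. This furnishes one failure term of the stated probability, and I would carry out all later deterministic estimates on the event that \ref{enum:RIP}--\ref{enum:GInverse} hold. The recovery claim (ii) I would then factor into a \emph{subspace-estimation} step and an \emph{SPM-landscape} step.

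For the subspace step, consider the population matrix $M:=\E_{X\sim\CN(0,\Id)}[\opvec(\nabla^2 f(X))^{\otimes2}]$. Since $\nabla^2 f(x)\in\CW$ and, under \ref{enum:RIP}--\ref{enum:GInverse}, the $w_k\otimes w_k$ are linearly independent, $M$ has rank $m$ with range exactly $\CW$; the learnability assumption \ref{enum:learnability} then gives the spectral gap $\lambda_m(M)-\lambda_{m+1}(M)\ge\alpha$. The empirical matrix built from the $N_h$ approximate Hessians deviates from $M$ through (a) a deterministic numerical-differentiation bias, controlled by \ref{enum:numerical_diff} and scaling linearly in $\epsilon$, and (b) a random sampling fluctuation, controlled by a matrix-Bernstein inequality. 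The point specific to the overcomplete regime is that, using the incoherence bounds \ref{enum:correlation}--\ref{enum:GInverse}, the variance proxy of $\opvec(\nabla^2 f(X))^{\otimes2}$ has effective dimension $O(m+m^2\log m/D)$ rather than the ambient $D^2$; this is why $N_h>t(m+m^2\log m/D)$ suffices, and the Bernstein tail, with $D^2$ as the dimensional prefactor and relative accuracy governed by $\alpha$, produces the failure term $D^2\exp(-\min\{\alpha,1\}t/C)$. Feeding both error sources into a Davis--Kahan/Wedin bound with gap $\alpha$ then yields (this is Lemma~\ref{lem:subspace_pertubation})
\begin{align*}
\|P_{\CW}-P_{\wCW}\|_{F\to F}\ \lesssim\ \sqrt{m/\alpha}\,\epsilon\ +\ (\text{sampling term}),
\end{align*}
and the hypothesis $\epsilon\le\sqrt\alpha/(C\sqrt m)$ keeps the right-hand side below the constant threshold required by the landscape theory.

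With the subspace controlled, I would invoke the landscape analysis of the subspace power method \cite{kileelSubspacePowerMethod2021,fiedlerStableRecoveryEntangled2021,kileelLandscapeAnalysisImproved2021a}, valid up to $m=o(D^2)$ under \ref{enum:RIP}--\ref{enum:GInverse}: on the relevant level set the $2m$ local maximizers of the perturbed program are in one-to-one correspondence with $\pm w_1,\dots,\pm w_m$, the projected gradient ascent \eqref{eq:pgd_iteration} converges to one of them from a random start, and the accompanying stability estimate turns the subspace error into a weight error via a square root, $\|w_k-s\hw_k\|_2\lesssim\sqrt{\|P_{\CW}-P_{\wCW}\|_{F\to F}}\lesssim(m/\alpha)^{1/4}\epsilon^{1/2}$, which is the claimed bound. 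A coupon-collector argument over $\Theta(m\log m)$ independent restarts guarantees that all $m$ maximizers are collected, contributing the $m^{-1}$ failure term, and a union bound over the incoherence, Bernstein, and restart events gives the stated probability.

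The step I expect to be the main obstacle is the subspace estimation in the overcomplete regime: pushing the sample complexity down to $N_h=O(m+m^2\log m/D)$ requires a tight variance bound for $\opvec(\nabla^2 f(X))^{\otimes2}$ that genuinely exploits the near-orthogonality of the $w_k\otimes w_k$ through \ref{enum:correlation}--\ref{enum:GInverse}; the naive ambient bound of order $D^2$ would be off by polynomial factors in $m/D$ and would not close against the stated $\epsilon$ threshold. A secondary delicate point is verifying that the perturbed level set still isolates exactly the $2m$ weight directions and that the stability estimate loses only a square root, but this I would import from \cite{kileelLandscapeAnalysisImproved2021a}.
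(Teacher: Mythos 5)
Your overall skeleton (incoherence of random weights via Proposition~\ref{prop:incoherence_holds_for_uniform}, a subspace-perturbation lemma, then the SPM landscape theorem of \cite{kileelLandscapeAnalysisImproved2021a} with its square-root stability estimate) matches the paper. However, your subspace-estimation step contains a genuine gap. You set it up as covariance estimation: population matrix $\E_{X}[\opvec(\nabla^2 f(X))^{\otimes 2}]$, empirical version from samples, matrix Bernstein for the fluctuation, and Davis--Kahan/Wedin with gap $\alpha$, arriving at
\begin{align*}
\norm{P_{\CW}-P_{\wCW}}_{F\to F}\ \lesssim\ \sqrt{m/\alpha}\,\epsilon\ +\ (\text{sampling term}).
\end{align*}
The sampling term is fatal to the claimed conclusion: it decays only in $N_h$ (and with $N_h = t(m+m^2\log m/D)$ for constant $t\geq 1$ it is at best a constant), whereas $\epsilon$ is a free parameter that can be taken arbitrarily small. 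Hence your final bound $\norm{w_k - s\hw_k}_2 \lesssim (m/\alpha)^{1/4}\epsilon^{1/2}$ does not follow from your subspace bound --- for small $\epsilon$ the sampling fluctuation dominates and is not controlled by any power of $\epsilon$. The paper's Lemma~\ref{lem:subspace_pertubation} has \emph{no} sampling term, and this is not an accident of bookkeeping but rests on a structural observation your proposal misses: every \emph{exact} Hessian $\nabla^2 f(X_i)$ lies in $\CW$ deterministically, so the span of the sampled exact Hessians is always a subspace of $\CW$; randomness never perturbs the subspace itself. The only role of the sampling randomness is to guarantee that the sampled Hessians span \emph{all} of $\CW$ with good conditioning, i.e.\ a lower bound $\sigma_m(M) \gtrsim \sqrt{N_h\alpha}$, which the paper obtains from the learnability condition \ref{enum:learnability} plus a one-sided matrix Chernoff bound (this is also where the incoherence-based bound $K \leq \kappa^2(m + c_2 m^2\log m/D)$ and hence the sample size $N_h > t(m+m^2\log m/D)$ enter, essentially as you anticipated). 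Wedin's inequality is then applied only to the perturbation $M \mapsto \widehat M$ caused by numerical differentiation, giving $\norm{P_{\CW}-P_{\wCW}}_{F\to F} \leq \norm{M - \widehat M}_F/\sigma_m(\widehat M) \lesssim \sqrt{N_h m}\,\epsilon / \sqrt{N_h \alpha} = \sqrt{m/\alpha}\,\epsilon$, purely proportional to $\epsilon$, which is exactly what the square-root stability estimate needs.

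Two smaller bookkeeping issues. First, the $m^{-1}$ in the failure probability does not come from a coupon-collector argument; it comes from Proposition~\ref{prop:incoherence_holds_for_uniform} (the incoherence event for the random weights). The theorem as stated only claims that every vector returned by Algorithm~\ref{alg:recover_weights} is close to some $\pm w_k$; the while-loop structure (restart until $m$ distinct maximizers are found) makes collection a matter of runtime, not of the stated success probability, and the $\Theta(m\log m)$ coupon-collector count only controls the expected number of restarts. Second, you must also verify that the superlevel-set threshold in Theorem~\ref{thm:kilel_et_all_over_main} is achievable, i.e.\ that $C m\log^2(m)/D^2 + 5\norm{P_{\CW}-P_{\wCW}}_{F\to F}$ stays below a fixed constant so that the accepted maximizers are exactly the good ones; this is where the hypotheses $Cm\log^2 m \leq D^2$ and $\epsilon \leq \sqrt{\alpha}/(C\sqrt m)$ are consumed in the paper's proof, a step your proposal gestures at but does not carry out.
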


\section{Identification of the signs and shifts}\label{sec:rem}

By leveraging the fact that
differentiation exposes the weights of the network as components of the tensor
$\nabla^n f(x) = \sum^m_{k=1} g^{(n)}(x^\top w_k+ {\tau}_k)w_k^{\otimes n}$ for $n=2$, Theorem \ref{thm:weight_recovery} gives that 
$\hw_k\approx {s}_k {w}_k$ for some signs ${s}_k \in \{-1,+1\}$. In this section, we show how to recover 
the remaining parameters (shifts and signs) for a given set of ground truth weights $\{w_1, \dots, w_m\}\subset \bbS^{D-1}$ which are sufficiently incoherent and approximated by $\{\hw_1, \dots, \hw_m\}\subset \bbS^{D-1}$ up to a sign. This recovery can be broken down into two steps. First, we find the correct signs and good initial shifts (cf. Section \ref{sec:initialization}); once the parameters are known, a student network can be initialized from these starting values. Second, the shifts of the student network are refined by empirical risk minimization via gradient descent (cf. Section \ref{sec:gradient_descent}). 
\subsection{Parameter initialization}\label{sec:initialization}

Our initialization strategy is centered around the recovery of the quantities $\mathcal{C}_2=(\mathcal{C}_{2, 1}, \ldots, \mathcal{C}_{2, m})$ and $\mathcal{C}_3=(\mathcal{C}_{3, 1}, \ldots, \mathcal{C}_{3, m})$, where 
\begin{align}
\mathcal{C}_{n,k} :=   s_k^n  g^{(n)}({\tau}_k), \quad \text{ for } k \in [m],\quad n\in \{2, 3\}.
\end{align}
If $g$ satisfies \ref{enum:activation}, then $g^{(3)}$ does not change sign on the interval $(-\tau_{\infty}, \tau_{\infty})$ due to the monotonicity of $g^{(2)}$. Hence, we can infer the sign $s_k$ from $\mathcal{C}_{3, k}$. Furthermore, as $g^{(2)}$ is monotone on $[-\tau_\infty, \tau_\infty]$, it admits an inverse, which allows for the recovery of $\tau_k$ from $\mathcal{C}_{2,k}$.  To learn $\mathcal{C}_2, \mathcal{C}_3$, we rely on numerical approximations of the quantities $\langle \nabla^n f(x), \hw_k^{\otimes n} \rangle$, namely, the directional derivatives of the network $f$ along the approximated weights. We consider the following linear system representation of the directional derivatives. Computing the derivative for $x=0$ reveals 
\begin{align*}
	\langle \nabla^n f(0), \hw_\ell^{\otimes n} \rangle = \sum^m_{k = 1} s_k^n g^{(n)}(\tau_k ) \inner{ s_k w_k, \hw_\ell}^n.
\end{align*}
Denote by $\tilde{G}_n \in \R^{m\times m}$ the matrix with entries $(\tilde{G}_n)_{\ell, k} = {\langle  \hw_\ell, s_k w_k \rangle}^n$. 
Then, we have
\begin{align}\label{eq:Tn}
\tilde{G}_n \cdot \mathcal{C}_n 
= \begin{bmatrix}
 	\langle \nabla^n f(0), \hw_1^{\otimes n} \rangle 	\\
 \vdots\\
 	\langle \nabla^n f(0), \hw_m^{\otimes n} \rangle
\end{bmatrix} := T_n.
\end{align}
In \eqref{eq:Tn}, $T_n$ is a vector containing all directional derivatives of $f$ evaluated at $0$ along the recovered weights $\hw_1, \dots, \hw_m$. These directional derivatives can be approximated from only $\mathcal{O}(n)$ evaluations of the network by numerical differentiation (cf. \ref{enum:numerical_diff}), which allows us to compute $\tilde{T}_n  \approx T_n$. Provided the weight approximations are sufficiently accurate and incoherent in the sense of Definition \ref{def:assumptions_overcomplete}, the matrix $\tilde{G}_n$ is invertible and can  be estimated by $(\widehat{G}_n)_{\ell, k} := \langle \hw_\ell, \hw_k \rangle^n$.   
Therefore, we obtain $\mathcal{C}_n\approx \tilde{G}_n^{-1} \tilde{T}_n  \approx \widehat{G}_n^{-1} \tilde{T}_n$. This strategy is summarized in Algorithm \ref{alg:initialization} detailed in supplementary materials, and the robustness analysis of Proposition \ref{prop:initialization} makes all the approximations rigorous. 
This procedure could be carried out for any order of directional derivatives, allowing us to benefit from the higher incoherence of $\langle w_k^{\otimes n}, w_\ell^{\otimes n}\rangle = \langle w_k, w_\ell\rangle^n$. 
However, for the sake of simplicity and to be more aligned with our network model, we combine only the second and third order directional derivatives.
\begin{proposition}[Parameter initialization]\label{prop:initialization}  Consider the teacher network $f$ defined in \eqref{eq:snn_general}, where the weights $\{w_k \in \bbS^{D-1}, \, k\in [m] \}$ satisfy \ref{enum:correlation} - \ref{enum:GInverse} with constants $c_2,c_3$ and the activation $g$ satisfies \ref{enum:activation}. Then, there exist constants $C>0$ only depending on $g, c_2,c_3, {\tau}_{\infty}$ and $D_0\in \N$, such that, for $ m \geq D\geq D_0, m \log^2 m \leq D^2$, the following holds. Given $\hw_1, \dots, \hw_m \in \bbS^{D-1}$ such that 
	\begin{align}\label{eq:dmax}
	\delta_{\max} &:= \max_{k\in[m]} \min_{s\in \{-1,1\}} \norm{w_k -  s\hw_k}_2 \leq  \frac{D^{1/2}}{ C m \sqrt{  \log m}  },
	\end{align}
	Algorithm \ref{alg:initialization} returns a set of shifts $\hat{\tau}$ such that
		\begin{align}\label{eq:hattau}
			 \norm{\hat{\tau} - {\tau}}_2 \leq  C\sqrt{m}\epsilon + C m^{3/2} \left( \frac{\log m}{D}\right)^{3/4} \delta_{\max} 	,
		\end{align}
	where $\epsilon>0$ is the accuracy of the numerical differentiation method. Furthermore, once the RHS of \eqref{eq:hattau} is smaller than $1$ and $\epsilon \leq (C m )^{-1}$, the signs returned by Algorithm \ref{alg:initialization} are identical to the ground truth signs. 
	\end{proposition}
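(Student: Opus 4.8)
The plan is to read Algorithm \ref{alg:initialization} as an approximate solver of the linear systems \eqref{eq:Tn} for $n\in\{2,3\}$, and to carry out a perturbation analysis that separates the two sources of error: replacing the unknown coefficient matrix $\tilde{G}_n$ by the computable $\widehat{G}_n$, and replacing the exact directional derivatives $T_n$ by their numerical approximations $\tilde{T}_n$. Fix for each $k$ the sign $s_k$ attaining the minimum in \eqref{eq:dmax} and set $e_k:=s_k w_k-\hw_k$, so that $\|e_k\|_2\le\delta_{\max}$ and $\hw_k=s_k w_k-e_k$. The algorithm returns $\widehat{\mathcal{C}}_n=\widehat{G}_n^{-1}\tilde{T}_n$, while the ground-truth coefficients $\mathcal{C}_n$ satisfy $\tilde{G}_n\mathcal{C}_n=T_n$ by \eqref{eq:Tn}; subtracting gives the identity
\[
\widehat{\mathcal{C}}_n-\mathcal{C}_n=\widehat{G}_n^{-1}(\tilde{T}_n-T_n)+\widehat{G}_n^{-1}(\tilde{G}_n-\widehat{G}_n)\mathcal{C}_n.
\]
This reduces the proof to four estimates: a lower bound on the spectrum of $\widehat{G}_n$ (equivalently $\|\widehat{G}_n^{-1}\|=\mathcal{O}(1)$), the numerical error $\|\tilde{T}_n-T_n\|_2$, the perturbation $\|\tilde{G}_n-\widehat{G}_n\|$, and the trivial bound $\|\mathcal{C}_n\|_2\le\sqrt{m}\,\kappa$ coming from \ref{enum:activation}.

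For the first estimate I would expand $\langle\hw_\ell,\hw_k\rangle=s_\ell s_k\langle w_\ell,w_k\rangle+\rho_{\ell k}$ with $|\rho_{\ell k}|\lesssim\delta_{\max}$, so that $\widehat{G}_n=D_s^n G_n D_s^n+\mathcal{R}_n$ with $D_s=\operatorname{diag}(s)$ orthogonal. Since $\|G_n^{-1}\|\le c_3$ by \ref{enum:GInverse} and $\|(D_s^n G_n D_s^n)^{-1}\|=\|G_n^{-1}\|$, a Neumann series argument yields $\|\widehat{G}_n^{-1}\|\le 2c_3$ as soon as $\|\mathcal{R}_n\|\le 1/(2c_3)$; the bound \eqref{eq:dmax} is calibrated precisely so that this smallness holds. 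The main obstacle is the spectral estimate of $\tilde{G}_n-\widehat{G}_n$, where entrywise or Gershgorin bounds are too lossy in the overcomplete regime $m\gg D$. The key is a \emph{factored} representation: for $n=2$ one has $(\tilde{G}_2-\widehat{G}_2)_{\ell k}=2\langle\hw_\ell,\hw_k\rangle\langle\hw_\ell,e_k\rangle+\langle\hw_\ell,e_k\rangle^2=2\langle\hw_\ell^{\otimes 2},\hw_k\otimes e_k\rangle+\langle\hw_\ell^{\otimes 2},e_k^{\otimes 2}\rangle$, so the leading part equals $P^\top Q$ with $P=[\hw_1^{\otimes 2}|\cdots|\hw_m^{\otimes 2}]$ and $Q=[\hw_1\otimes e_1|\cdots|\hw_m\otimes e_m]$. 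Here $\|P\|^2=\|\widehat{G}_2\|$ and $\|Q\|^2=\|(\wW^\top\wW)\circ(\mathcal{E}^\top\mathcal{E})\|\le\max_k\|e_k\|^2\,\|\wW^\top\wW\|$ by the Schur product theorem, with $\mathcal{E}=[e_1|\cdots|e_m]$. Controlling the Gram norms by the incoherence \ref{enum:correlation} (transferred to $\hw$ via \eqref{eq:dmax}) gives $\|\widehat{G}_2\|\lesssim m\log m/D$ and $\|\wW^\top\wW\|\lesssim m\sqrt{\log m/D}$, whence $\|\tilde{G}_2-\widehat{G}_2\|\lesssim m(\log m/D)^{3/4}\delta_{\max}$, which is exactly the exponent appearing in \eqref{eq:hattau}; the quadratic remainder is absorbed using \eqref{eq:dmax}, and the case $n=3$ is analogous with one extra factor of $\langle\hw_\ell,\hw_k\rangle$ (hence an even smaller perturbation).

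It remains to bound the numerical error and to convert coefficient errors into the stated conclusions. Each entry of $T_n$ is the directional derivative $\langle\nabla^n f(0),\hw_\ell^{\otimes n}\rangle=\sum_k g^{(n)}(\tau_k)\langle w_k,\hw_\ell\rangle^n$; applying \ref{enum:numerical_diff} to the one-dimensional restrictions $t\mapsto g(\langle w_k,\hw_\ell\rangle\,t+\tau_k)$ and using linearity, the per-entry error is $\lesssim\epsilon\sum_k|\langle w_k,\hw_\ell\rangle|^{n+2}$, and incoherence together with $m\log^2 m\le D^2$ makes this sum $\mathcal{O}(1)$, so $\|\tilde{T}_n-T_n\|_2\lesssim\sqrt{m}\,\epsilon$. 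Plugging the four estimates into the displayed identity yields $\|\widehat{\mathcal{C}}_n-\mathcal{C}_n\|_2\lesssim\sqrt{m}\,\epsilon+m^{3/2}(\log m/D)^{3/4}\delta_{\max}$. Finally, since $\mathcal{C}_{2,k}=g^{(2)}(\tau_k)$ and \ref{enum:activation} makes $g^{(2)}$ strictly monotone with $|g^{(3)}|$ bounded below on $[-\tau_\infty,\tau_\infty]$, the inverse $(g^{(2)})^{-1}$ is Lipschitz, giving $\|\hat\tau-\tau\|_2\le L\,\|\widehat{\mathcal{C}}_2-\mathcal{C}_2\|_2$ and hence \eqref{eq:hattau}. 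For the signs, $\mathcal{C}_{3,k}=s_k g^{(3)}(\tau_k)$ with $g^{(3)}$ of constant sign and magnitude bounded below on the interval, so $\operatorname{sgn}(\widehat{\mathcal{C}}_{3,k})=\operatorname{sgn}(\mathcal{C}_{3,k})$ --- and thus $s_k$ is recovered exactly --- once the entrywise error on $\mathcal{C}_3$ falls below that lower bound, which holds as soon as the right-hand side of \eqref{eq:hattau} is below $1$ and $\epsilon\le(Cm)^{-1}$.
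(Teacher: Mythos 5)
Your proposal is correct and follows essentially the same route as the paper: the error identity $\widehat{\mathcal{C}}_n-\mathcal{C}_n=\widehat{G}_n^{-1}(\tilde{T}_n-T_n)+\widehat{G}_n^{-1}(\tilde{G}_n-\widehat{G}_n)\mathcal{C}_n$ is exactly the decomposition used in Lemma \ref{lemma:approximation_S_error}, the numerical-differentiation bound $\|\tilde T_n-T_n\|_2\lesssim\sqrt{m}\,\epsilon$, the bound $\|\mathcal{C}_n\|_2\le\sqrt m\,\kappa$, the Lipschitz/mean-value inversion of $g^{(2)}$, and the thresholding argument for the signs via $\mathcal{C}_3$ all mirror the paper's proof of Proposition \ref{prop:initialization}. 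The only local departure is how you prove the two Gram-matrix estimates from Lemma \ref{lemma:incoherence_apx_weigths}: you control $\|\tilde G_n-\widehat G_n\|$ by the factorization $P^\top Q$ together with the Schur-product bound $\|(\wW^\top\wW)\circ(\mathcal{E}^\top\mathcal{E})\|\le\delta_{\max}^2\|\wW^\top\wW\|$ and get $\|\widehat G_n^{-1}\|=\mathcal{O}(1)$ by a Neumann series, whereas the paper uses Gershgorin row sums combined with Cauchy--Schwarz and $\|\wW\|$, plus Weyl's inequality and the Hadamard-product lemma; both devices produce the same rate $m(\log m/D)^{(2n-1)/4}\delta_{\max}$ and hence the exponent $3/4$ in \eqref{eq:hattau}.
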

The proof is postponed to Section \ref{sec:proofs_initialization} of the supplementary materials. By Theorem \ref{thm:weight_recovery}, we have that $\delta_{\max}$ scales as $(m/\alpha)^{1/4}\epsilon^{1/2}$. Thus, by taking a suitably small $\epsilon$, \eqref{eq:dmax} is satisfied and, after omitting poly-logarithmic factors, the dominant term in \eqref{eq:hattau} scales as
\begin{equation}\label{eq:impapp2}
 \frac{\epsilon^{1/2}}{\alpha^{1/4}}\frac{m^{7/4}}{D^{3/4}}.   
\end{equation}
By comparing \eqref{eq:impapp} and \eqref{eq:impapp2} and recalling that $m$ scales at least linearly in $D$, it is clear that gradient descent improves upon its initialization, under a random model for the weight errors. This improvement is also evident if we evaluate $\Delta_{W, 1}$ on the \emph{actual} weights errors coming from the proposed algorithmic pipeline (cf. Figure \ref{fig:scaling}).

\subsection{Local convergence of gradient descent}\label{sec:gradient_descent}

So far, we have obtained weight approximations $\wW \approx W$ and shift approximations $\htau \approx  \tau$ of the shallow teacher network $f$ defined in \eqref{eq:snn_general}. These parameters $(\wW, \htau)$ allow us to define the neural network $\hat f$ in \eqref{eq:studnet} and, depending on the accuracy of the previous steps, we would expect already a strong similarity between realizations of $\hat f$ and $f$. In this section, we explore to what degree the approximation $\hat f$ can further be improved by tuning the shifts $\htau$ in a teacher-student setting. Assume $x_1,\dots, x_{N_\text{train}}$ generic inputs and access to ${N_\text{train}}$ input-output pairs $(x_i, y_i)_{i\in[N_\text{train}]} = (x_i, f(x_i))_{i\in[N_\text{train}]}$ of the network $f$. Based on the initial network configuration of $\hat f$, we seek to learn the shifts $\tau$ attributed to ${f}$ by minimizing the least-squares objective \eqref{eq:def:loss}
via the gradient descent scheme
\begin{align}\label{eq:gd_iteration}
\htau^{(n+1)} = \htau^{(n)} - \gamma \nabla J(\htau^{(n)}).
\end{align}
Here, $\gamma>0$ represent the step-size of the gradient updates. For the case $\wW =  W$, we show that w.h.p. the gradient descent iteration \eqref{eq:gd_iteration} produces a sequence $(\tau^{(n)})_{n\in \N}$ that converges linearly to $\tau$ provided that $\|\htau^{(0)} - {\tau}\|_2 = \mathcal{O}(m^{-1/2})$. In the perturbed case where $\wW \approx  W$, we provide an analysis that estimates the error of the shifts w.r.t. \emph{(i)} the Frobenius error $\|\wW -  W\|_F$, \emph{(ii)} the alignment between the individual weight errors $\Delta_{W, O}$ (cf. \eqref{eq:def_delta_wo}), and \emph{(iii)} $\|\sum^m_{k=1} w_k - \hw_k \|_2$.
More precisely, for sufficiently many training samples $N_\text{train}$, the gradient descent iteration will settle within distance $\Delta_{W,1}$ of the optimal solution.
\begin{theorem}[Local convergence]\label{thm:local_result} 
Consider the teacher network $f$ defined in \eqref{eq:snn_general}, with shifts ${\tau}_1,\dots, {\tau}_m \in [-\tau_{\infty}, +\tau_{\infty}]$ and weights $w_1, \dots, w_m  \sim \textrm{Uni}(\bbS^{D-1})$ that are incoherent according to Definition \ref{def:assumptions_overcomplete}. Assume $g$ satisfies \ref{enum:activation}-\ref{enum:nonpolynomoial}, and consider the least-squares objective $J$ in \eqref{eq:def:loss} constructed with ${N_\text{train}}\geq m$  network evaluations $y_1,\dots, y_{N_\text{train}}$ of ${f}$, where $y_i = {f}(X_i)$ and  $X_1,\dots, X_{N_\text{train}} \sim \CN(0,\Id_D)$. Let $\hat f$ be parameterized by $\wW$ and $\htau$, as in \eqref{eq:studnet}. 
Then, there exists a constant $C>0$ depending only on $g,\tau_{\infty}$ and $D_0>0$ such that the following holds with probability at least $1- m \exp(-{N_\text{train}} / Cm)-2m^2\exp\left( -t / C\right)$ for $t>0$: Assume $C m \log^2 m \leq D^2, m\geq D\geq D_0$ and
	\begin{align}\label{eq:assumption_thm_local}
			\norm{\tau - {\htau}}_2 + \Delta_W  \leq \frac{1}{C \sqrt{m}},
	\end{align}
	where $\Delta_W = \Delta_{W,1} + \left(\frac{m^3 \delta_{\max}^2 t}{{N_{\text{train}}}}\right)^{1/2}$ and $\Delta_{W,1}$ is given by \eqref{eq:def_delta_w1}.
	Then, the gradient descent iteration \eqref{eq:gd_iteration} with sufficiently small step-size $\gamma>0$ started from $\htau^{(0)} = \htau$ satisfies 
	\begin{align}
	\|\htau^{(n)} - {\tau}\|_2 \leq 2 \xi^n \|\htau^{(0)} -  {\tau}\|_2+C(1-\xi^n )\Delta_W.
\end{align}
\end{theorem}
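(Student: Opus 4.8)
The plan is to analyze the gradient descent iteration \eqref{eq:gd_iteration} as a perturbed fixed-point scheme around the ground-truth shifts $\tau$, exploiting a linearization of the residual $f(x_i) - \hat f(x_i,\htau)$ in the shift variable. First I would compute the gradient $\nabla J(\htau)$ explicitly. Since $\partial_{\htau_k}\hat f(x,\htau) = g^{(1)}(s_k\langle\hw_k,x\rangle + \htau_k)$, the gradient is
\begin{align*}
\nabla J(\htau) = -\frac{1}{N_{\text{train}}}\sum_{i=1}^{N_{\text{train}}} \big(f(x_i)-\hat f(x_i,\htau)\big)\, v(x_i,\htau),
\end{align*}
where $v(x_i,\htau)\in\R^m$ has entries $g^{(1)}(s_k\langle\hw_k,x_i\rangle+\htau_k)$. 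The central object is the empirical kernel (Jacobian Gram) matrix $H(\htau) := \frac{1}{N_{\text{train}}}\sum_i v(x_i,\htau)v(x_i,\htau)^\top$, which plays the role of the NTK but with respect to the shift parameters and with the roles of input and weights exchanged, as emphasized in the technical-innovations paragraph. The key spectral fact I would establish is that, in the idealized case $\wW=W$ and in expectation, the population kernel $\E_{X\sim\CN(0,\Id)}[v(X,\tau)v(X,\tau)^\top]$ is well-conditioned: its diagonal entries are $\E[g^{(1)}(\langle w_k,X\rangle+\tau_k)^2]$, and its off-diagonals decay with the incoherence $\langle w_k,w_\ell\rangle^2 = O(\log m/D)$. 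Here assumption \ref{enum:nonpolynomoial} (that $g^{(1)}$ is not a low-degree polynomial, so it has nonvanishing Hermite coefficients of high order) is what forces strict positive-definiteness, while \ref{enum:activation} controls the off-diagonal mass through \ref{enum:correlation}. A Gershgorin/diagonal-dominance argument then lower-bounds $\lambda_{\min}(H)$ by a constant (call it $\mu>0$) and upper-bounds $\lambda_{\max}(H)$ by $O(\kappa^2)$.

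Second, I would pass from population to empirical via a matrix concentration inequality (matrix Bernstein or a covering argument over $\htau$), which is where the event $1 - m\exp(-N_{\text{train}}/Cm) - 2m^2\exp(-t/C)$ originates: the $m\exp(-N_{\text{train}}/Cm)$ factor comes from concentrating each of the $m$ coordinates of the random feature vectors (subgaussian tails of $g^{(1)}(\langle w_k,X\rangle+\tau_k)$), and the $m^2\exp(-t/C)$ factor from the $O(m^2)$ off-diagonal entries controlling $\Delta_{W,O}$ and the alignment terms. With $N_{\text{train}}\gtrsim m$, the empirical kernel inherits the spectral bounds $\mu/2 \le \lambda_{\min}(H) \le \lambda_{\max}(H) \le C$ uniformly over the neighborhood $\{\htau : \|\htau-\tau\|_2 \le 1/(C\sqrt m)\}$, where smoothness of $g^{(1)}$ (via $\|g^{(2)}\|_\infty\le\kappa$) gives the Lipschitz control needed to make the bound uniform rather than pointwise.

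Third, with the well-conditioned kernel in hand, I would set up the contraction. Writing $e^{(n)} := \htau^{(n)}-\tau$ and Taylor-expanding the residual, $f(x_i)-\hat f(x_i,\htau^{(n)}) = -\langle v(x_i,\tau), e^{(n)}\rangle + r_i^{(n)} + b_i$, where $r_i^{(n)}$ is a second-order remainder bounded by $\kappa\|e^{(n)}\|^2$ and $b_i$ is the irreducible bias coming from $\wW\neq W$ (i.e.\ $f$ uses $w_k$ while $\hat f$ uses $\hw_k$). Plugging into the gradient update gives
\begin{align*}
e^{(n+1)} = (\Id - \gamma H)\,e^{(n)} + \gamma\,\text{(remainder)} + \gamma\,\text{(weight-error bias)}.
\end{align*}
Choosing $\gamma$ small enough that $\|\Id-\gamma H\| \le \xi := 1-\gamma\mu/2 < 1$ yields the linear-rate term $2\xi^n\|e^{(0)}\|$, while the geometric series $\sum_n \xi^n$ absorbs the bias into the additive floor $C(1-\xi^n)\Delta_W$. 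The hypothesis \eqref{eq:assumption_thm_local} is precisely what keeps $e^{(n)}$ inside the neighborhood so that the quadratic remainder stays dominated by the linear term (a standard induction on $n$), and it also ensures the second-order term $\kappa\|e^{(n)}\|^2$ does not overwhelm the contraction.

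The main obstacle I anticipate is the careful quantification of the weight-error bias term, i.e.\ proving that $\|b\|$ translates into exactly the structured bound $\Delta_W = \Delta_{W,1} + (m^3\delta_{\max}^2 t/N_{\text{train}})^{1/2}$. The subtlety is that $b_i = \sum_k [g(\langle w_k,x_i\rangle+\tau_k) - g(s_k\langle\hw_k,x_i\rangle+\tau_k)]$ must be projected through the (pseudo-)inverse of the kernel, and naively bounding each summand by $\|w_k-\hw_k\|$ would give an error scaling with $\sum_k\|w_k-\hw_k\|$ rather than the tighter $\|\sum_k(w_k-\hw_k)\|_2$ and the cross-term $\Delta_{W,O}$. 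Extracting this finer dependence requires expanding $g$ to first order, recognizing that the linear part of the bias couples to the kernel's near-null structure, and separating the coherent error $\|\sum_k(w_k-\hw_k)\|_2$ (which can cancel) from the incoherent $\ell_2$ aggregate $\Delta_{W,O}^{1/2}/D^{1/2}$ and the Frobenius error $\|\wW-W\|_F$. This is the delicate perturbation analysis flagged in the technical-innovations paragraph, and it is where \ref{enum:correlation}-\ref{enum:GInverse} together with a second matrix concentration step (producing the $(m^3\delta_{\max}^2 t/N_{\text{train}})^{1/2}$ contribution) must be combined.
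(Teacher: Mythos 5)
Your proposal follows essentially the same route as the paper's proof: an idealized quadratic objective built from the NTK-style kernel evaluated at the true shifts, well-posedness of its expectation via Hermite expansion (using \ref{enum:nonpolynomoial} to keep the order-$\geq 4$ coefficients alive) combined with a Gershgorin/incoherence bound on the off-diagonal, matrix Chernoff plus sub-Gaussian Hoeffding concentration producing exactly the two probability terms, and a perturbed-contraction induction that keeps the iterates in the neighborhood and yields the bound $2\xi^n\|\htau^{(0)}-\tau\|_2 + C(1-\xi^n)\Delta_W$. The one step you explicitly leave as an obstacle --- converting the weight-error bias into the structured floor $\Delta_{W,1} + (m^3\delta_{\max}^2 t/N_{\text{train}})^{1/2}$ rather than a naive $\sum_k\|w_k-\hw_k\|_2$ bound --- is indeed where the bulk of the paper's proof lives, and its machinery (Hermite-series bounds on the cross terms $S_{r,\ell}$, the tail estimate for $r\geq 9$, the sign argument for the first Hermite coefficient giving the $\|\sum_k (w_k-\hw_k)\|_2$ dependence, and the Hoeffding step for the sampling term) implements precisely the coherent/incoherent decomposition you outline.
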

Note that \eqref{eq:assumption_thm_local} can always be satisfied within our framework as all factors depend on $\epsilon$ which can be chosen freely. The proof of Theorem is in Section \ref{sec:proofs_local} of the supplementary materials.
The idea is to express $J$ as a quadratic form $J(\htau) = (\htau-\tau)^\top A(\htau)(\htau-\tau)$, where $A(\htau)$ denotes the Jacobian obtained by taking derivatives w.r.t. the input features. Then, we linearize around the true solution by replacing $A(\htau)$ with $A(\tau)$. Analyzing the idealized objective $(\htau-\tau)^\top A(\tau)(\htau-\tau)$ requires to guarantee the well-posedness of $A(\tau)$, which we prove by using techniques from the NTK literature adapted to our setting (Section \ref{subsec:well_posedness_idealized}). The error due to the replacement of $A(\htau)$ with $A(\tau)$ depends on the error in the weight approximation, and we control it via a delicate argument exploiting Hermite expansions and the incoherence of the weights (Section \ref{subsec:controlling_perturbation}).

\section{Numerical results}\label{sec:numerics_main_nips}
We corroborate our theoretical results by testing the pipeline of Algorithm \ref{alg:pipeline}, in order to identify parameters of shallow networks of the type ${f}(x) = \sum^m_{k=1} \tanh({w}_k^\top x + {\tau}_k )$. As assumed in the theory, the weights are given by $w_1,\dots, w_m \sim_{\text{i.i.d.}}\textrm{Uni}(\bbS^{D-1})$,
the shifts are sampled according to $\tau_1,\dots, \tau_m \sim_{\text{i.i.d.}} \operatorname{Uni}(-0.5, 0.5)$ and the activation satisfies \ref{enum:activation}-\ref{enum:nonpolynomoial}. The number of neurons $m$ depends on the input dimension $D$ according to the rule $m =  \lceil \frac{2}{5} D^\beta \rceil$, where the  exponent  $1/2 \leq \beta \leq 2$ is referred to  as the \emph{order of neurons}.
The accuracy is evaluated via the following metrics: \emph{(i)} the uniform error (of the approximating network), computed as $E_\infty = m^{-1}\max_{i } \vert f(x_i) - \hat{f}(x_i)\vert$ on a set of $10^6$ unseen Gaussian inputs, \emph{(ii)} the worst weight approximation, i.e., $\max_{k \in [m]} \min_{s \in \{-1,1\}} \|{w}_k - s \hw_k\|_2$, and \emph{(iii)} the error of the shift approximation, i.e., $ m^{-1/2} \|{\tau}-\htau\|_2$. The scaling $m^{-1}$ of $E_{\infty}$ normalizes for the fact that the range of $f(x)$ grows with $m$ according to our network model \eqref{eq:snn_general}. All experiments were performed using one NVIDIA Tesla\textsuperscript{\textregistered} P100 16GB/GPU in a NVIDIA DGX-1.

\paragraph{Baseline.} As a baseline for our pipeline, we first try to identify the network parameters in a standard teacher-student setup. The teacher network is fit by empirical risk minimization via SGD applied to a student network of identical architecture. Using 8 minutes of training time with Tensorflow and the hardware as stated above ($N_{\text{train}}= \frac{5}{2} m \cdot D^2$ teacher evaluations, mini-batch size of 64 and learning rate $0.005$), we obtain the uniform error depicted in the top row on the left in Figure  \ref{fig:nips_main}. These results are averaged over four repetitions. The experiment shows that SGD manages to identify the network parameters and achieve a low uniform error as long as the number of neurons $m$ is small, in particular much smaller than a quadratic scaling such as $m =\lceil \frac{2}{5}D^2\rceil$. Furthermore, the results worsen for growing dimension $D$ despite higher incoherence of the network weights, possibly due to the fixed training time and learning rate. In an attempt to improve these results, we additionally run SGD for 50 minutes and several different learning rates, fixing the case $D = 50$. The results, shown in the top row on the right in Figure \ref{fig:nips_main}, indicate an improvement of SGD for certain hyperparameter combinations, yet we were not able to find a suitable tuning for $D = 50, m= 1000$. For this experiment, we choose $\tau_1,\dots, \tau_m \sim_{\text{i.i.d.}} \mathcal{N}(0, 0.05)$, thus the ground-truth shifts are closer to the initialization (set at $0$) than if they are uniform in $[-0.5, 0.5]$, which should facilitate the task of the SGD algorithm. 

{
\begin{figure}
 		\begin{center}
   		\includegraphics[width=0.65\linewidth]{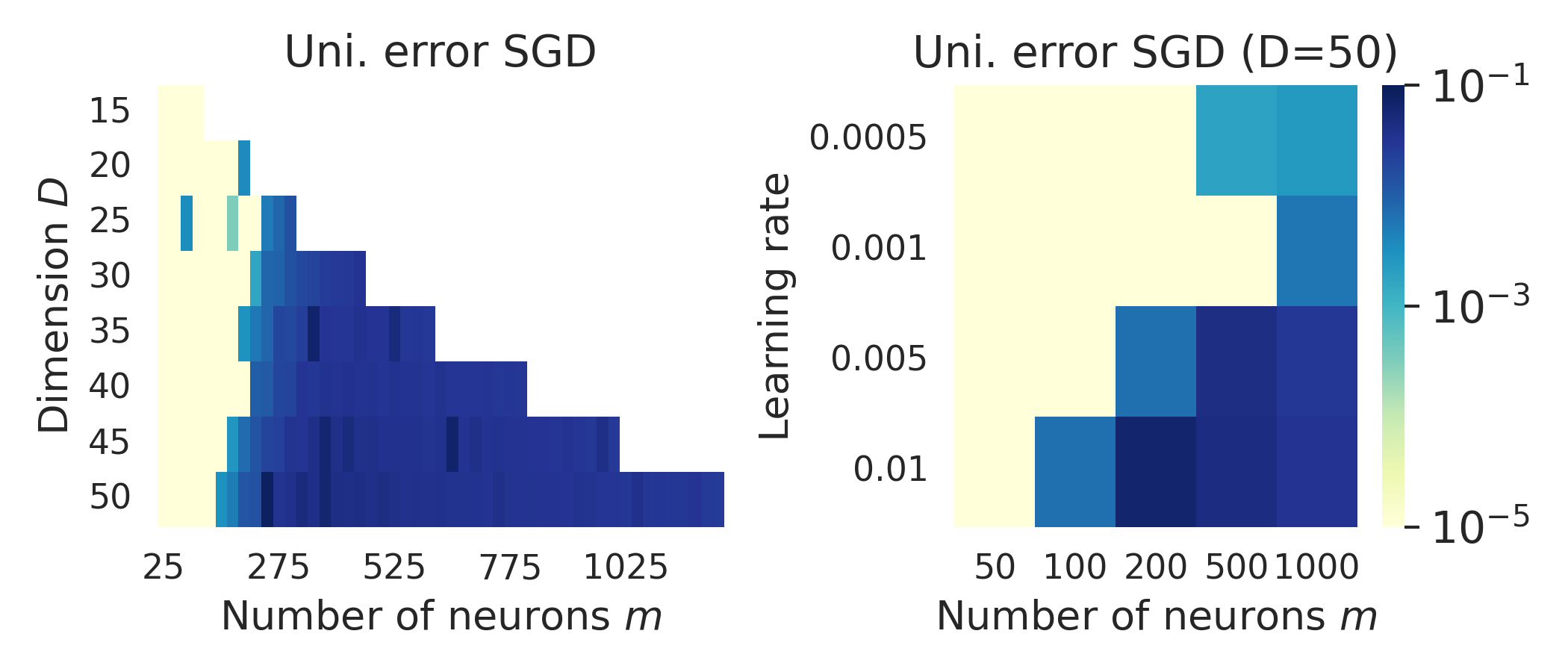}\\
  		\includegraphics[width=0.65\linewidth]{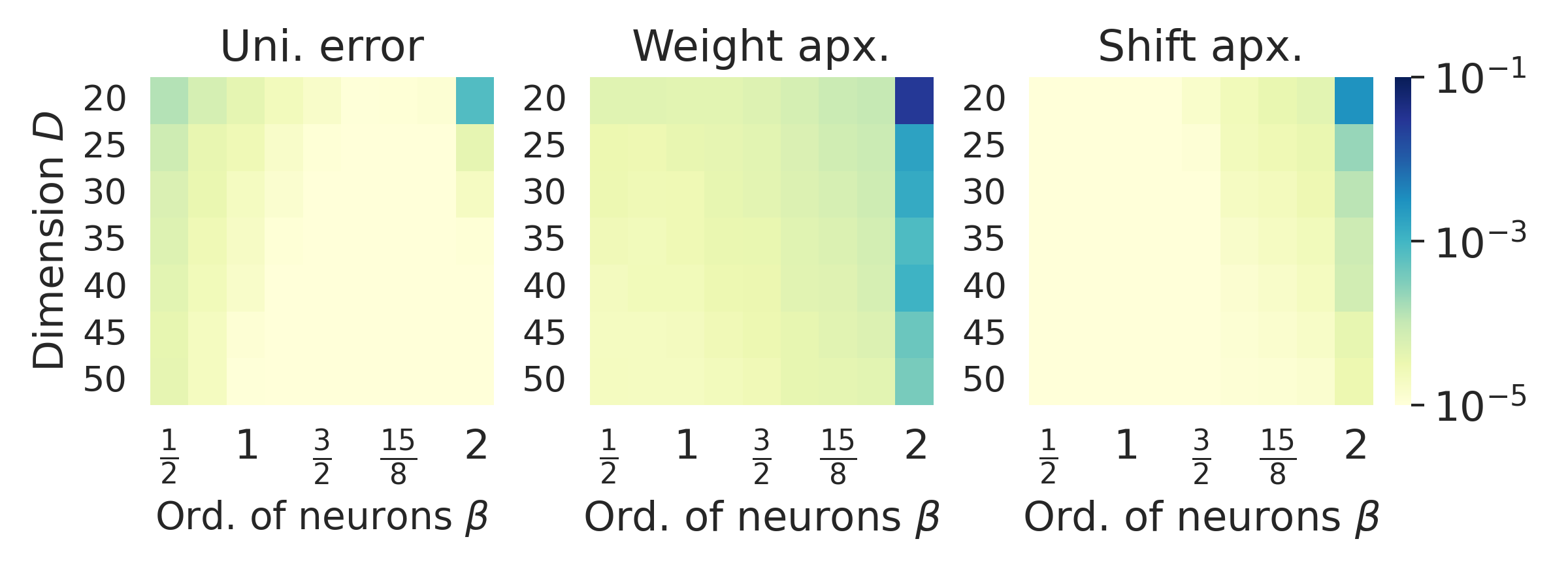}%
   	\end{center}
   	\vspace{-20pt}
   	\caption{
   	Performance of parameter identification of shallow networks with $\tanh$ activations, $m$ neurons and input size $D$ via SGD for shifts $\tau_k \sim \mathcal{N}(0, 0.05)$ (top row), our pipeline for shifts $\tau_k \sim \operatorname{Uni}(-0.5, 0.5)$ (bottom row).  
   	}
  \label{fig:nips_main}
  \vspace{-10pt}
   \end{figure}
   }

\paragraph{Recovery pipeline.} We now discuss the results of our recovery pipeline in Algorithm \ref{alg:pipeline} to identify shallow networks with $\tanh$ activation. For the weight recovery, we use $N_h = \lceil \log(D) m  \rceil$ Hessian approximations, which are computed via central finite differences with step-size $\epsilon_{\text{FD}}=0.01$ and are anchored at evaluations $x_1,\dots, x_{N_h} \sim_{\text{i.i.d.}} \CN(0, \Id_D)$. Then, we run $R=5m\log(m)$ SPM iterations \eqref{eq:pgd_iteration} in parallel for $10^3$ steps with step-size $\gamma =2$. The initial shifts computed by the parameter initialization are finalized via (stochastic) gradient descent as described in Section \ref{sec:rem}. We use $N_{\text{train}} = m \cdot D^2$ samples, learning rate $\gamma = 10^{-3}$ and batch size $64$. The training input points are drawn from a standard Gaussian distribution. The refinement of the shifts (by gradient descent) is timed out after $180$ seconds, or once we reach a training error below $10^{-8}$. 

The results of our pipeline in the bottom row of Figure \ref{fig:nips_main} demonstrate successful recovery of all weights and shifts consistently over $10$ repetitions, and for all combinations of $m,D$. For $\beta=2$ (or $m=\lceil\frac{2}{5} D^2\rceil$) the performance of the weight recovery is worse for small $D$. The causes of this effect may be two-fold: the weights do not yet behave statistically as in the average case scenario for larger $D$; and the gap between $m=\frac{2}{5} D^2$ and the theoretical limit for weight recovery, $m = D(D-1)/2$, decreases in $D$. Moreover, we emphasize that the time spent for the weight recovery (which includes the time necessary to approximate all Hessian matrices by numerical differentiation) is in the order of seconds, reaching a maximum of $112s$ for $D=50, \beta=2$. The overall runtime of the pipeline is below 5 minutes over all individual runs. %

{
\begin{figure}
 		\begin{center}
   		\includegraphics[width=0.65\linewidth]{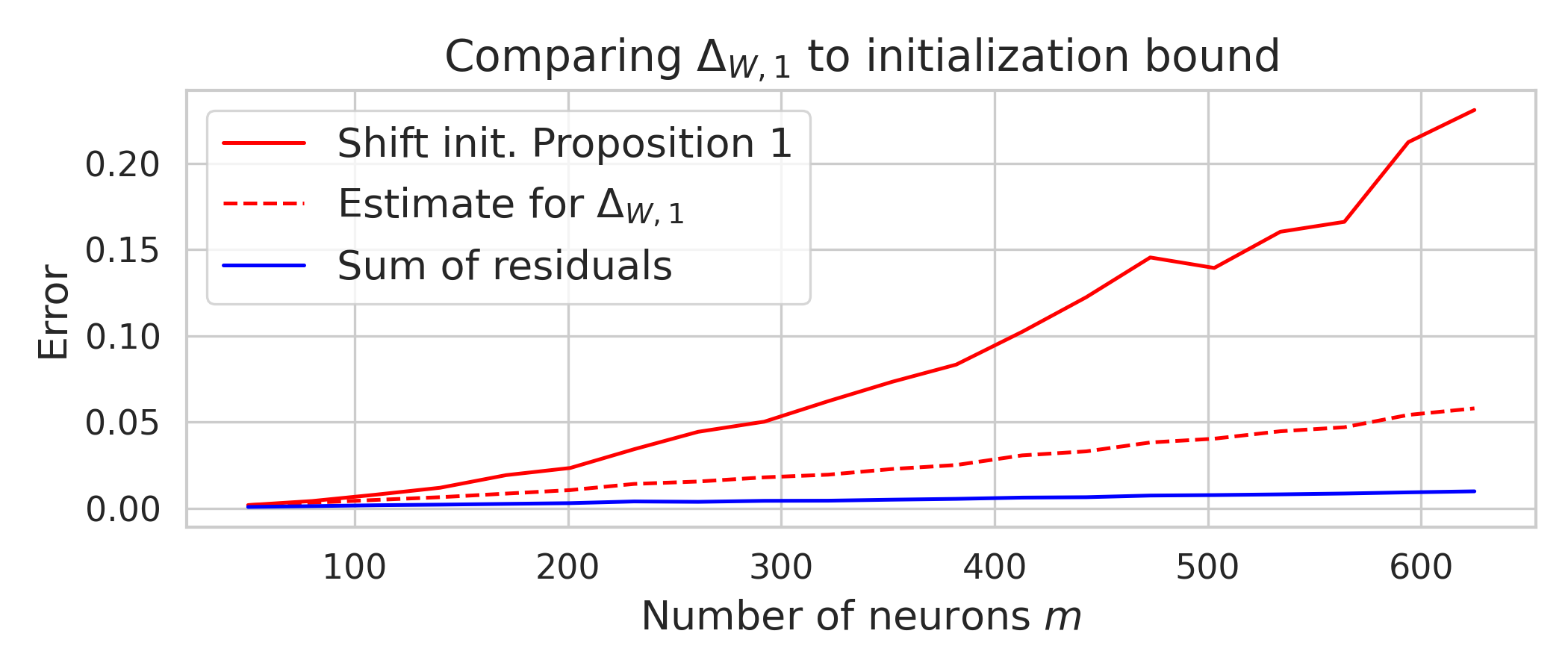}
   	\end{center}
   	\vspace{-20pt}
   	\caption{
   	Comparison between the guaranteed accuracy of the shift initialization (red), the term $\Delta_{W,1}$ (dashed red) and the sum of residual errors $\| \sum^m_{k=1} w_k - \hw_k \|_2$ (blue) for weights approximated by our pipeline and $D = 50$. 
   	}
  \label{fig:scaling}
  \vspace{-10pt}
   \end{figure}
   }
\paragraph{Improvement of the shifts by GD.} In Figure \ref{fig:scaling}, we compare the error bound \eqref{eq:hattau} on the initial shifts with $\Delta_{W,1}$ (cf. \eqref{eq:def_delta_w1}), where for simplicity the constants $C$ are taken to be $1$ in both statements. The results are averaged over $10$ realizations. The plot shows that \emph{(i)} the sum of the residuals $\| \sum^m_{k=1} w_k - \hw_k \|_2$ in blue has only a negligible contribution to $\Delta_{W,1}$, and hence \emph{(ii)} by settling within distance $\Delta_{W,1}$ of the true shifts, GD will improve over the initialization.

\section{Concluding remarks}

In this paper, we give an algorithm with provable guarantees for the  finite sample identification of shallow networks \emph{with biases}, where the number of neurons $m$ is roughly $\mathcal O(D^2)$. By doing so, we improve upon previous work \cite{fornasier2018identification,anandkumar15}, which neglects the role of biases and is limited to $m=\mathcal O(D)$. Let us finally mention that \cite{fornasier2019robust,fiedlerStableRecoveryEntangled2021} have provided partial results on finite sample identification of deep networks. Thus, giving a complete pipeline for the deep case, which keeps into full consideration also the important role of biases, is an interesting future direction. 

\appendix

\section{Proofs: Weight recovery}\label{sec:proofs_weight_recovery}

Algorithm \ref{alg:recover_weights} summarizes the first step of the reconstruction pipeline which is the weight recovery. For more details on the exact procedure we refer to Section \ref{sec:weight_identification}. This section is concerned with the proof of Theorem \ref{thm:weight_recovery}, which provides a uniform bound on the approximation error associated with the weight recovery. Additionally, we characterize the incoherence of the resulting approximated weights in terms of the numerical accuracy. A large part of the proofs in this section will operate under the assumptions that vectors, which are drawn uniformly from a high-dimensional sphere, are well separated. To make this more concrete, we rely on a result due to \cite{kileelLandscapeAnalysisImproved2021a} which allows the application of the deterministic incoherence properties \ref{enum:RIP} -\ref{enum:GInverse} stated in Definition \ref{def:assumptions_overcomplete} to the ground truth weights $w_1, \dots, w_m \in \bbS^{D-1}$ which are modeled by a uniform spherical distribution according to our network model (cf. Section \ref{sec:network_model}). 
\begin{proposition}[{\cite[Prop. 13]{kileelLandscapeAnalysisImproved2021a}}]\label{prop:incoherence_holds_for_uniform}
Let $w_1, \dots, w_m$ be drawn independently from  $\textrm{Uni}(\bbS^{D-1})$. If $m = o(D^2)$, then, for any arbitrary constant $\delta \in (0,1)$, there exist constants $C>0$ and $D_0 \in \N$ depending only on $\delta$ such that for all $D\geq D_0$, and with probability at least \begin{align}
1 - m^{-1} - 2 \exp(-C \delta^2 D) - C\left(\frac{e \cdot D}{\sqrt{m}}\right)^{-C \sqrt{m}}
\end{align}
conditions \ref{enum:RIP} - \ref{enum:GInverse} hold with constants $c_2,c_3 < C$.
\end{proposition}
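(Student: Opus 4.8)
The plan is to establish the three incoherence properties \ref{enum:RIP}--\ref{enum:GInverse} separately, since they concern rather different random quantities, and to allocate one of the three failure-probability terms to each. Properties \ref{enum:correlation} and \ref{enum:RIP} follow from by-now-standard high-dimensional concentration, whereas \ref{enum:GInverse} --- the uniform conditioning of the powered Gram matrices $G_n$ --- is the genuinely delicate point, and it is precisely there that the regime $m = o(D^2)$ enters.

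For \ref{enum:correlation}, I would condition on $w_i$ and use that, for a uniform $w_j$, the scalar $\langle w_i, w_j\rangle$ has the same law as a single coordinate of a uniform point on $\mathbb{S}^{D-1}$, so that $\mathbb{P}[\,|\langle w_i, w_j\rangle| \geq t\,] \leq 2\exp(-(D-1)t^2/2)$. Choosing $t = \sqrt{c_2 \log(m)/D}$ makes each pair fail with probability at most $2\,m^{-c_2(D-1)/(2D)}$; a union bound over the at most $m^2$ pairs, with $c_2$ taken large enough, yields the $m^{-1}$ term. For \ref{enum:RIP}, I would fix a support $S$ with $|S| = p := \lceil c_1 D/\log m\rceil$ and show that the near-orthonormal block $W_S$ satisfies $\|W_S^\top W_S - \operatorname{Id}_p\| \leq \delta$ with failure probability at most $9^p \exp(-c\delta^2 D)$, via an $\epsilon$-net over $\mathbb{S}^{p-1}$ together with concentration of $\|W_S x\|_2^2$ around its mean. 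A union bound over the $\binom{m}{p} \leq (em/p)^p$ supports then closes the exponent precisely when $p\,(\log(em/p)+\log 9) \lesssim c\delta^2 D$, i.e.\ when $c_1$ is chosen as a small multiple of $\delta^2$; this produces the $2\exp(-C\delta^2 D)$ term.

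The heart of the argument is \ref{enum:GInverse}, where I would reduce the claim $\|G_n^{-1}\| \leq c_3$ to a lower bound $\lambda_{\min}(G_n) \geq c_3^{-1}$ uniform over $n \geq 2$. The key computation is $\mathbb{E}[G_n] = (1-\mu_n)\operatorname{Id} + \mu_n\,\mathbf{1}\mathbf{1}^\top$ with $\mu_n = \mathbb{E}[\langle w_1, w_2\rangle^n]$, which vanishes for odd $n$ and satisfies $0 \leq \mu_n \leq \mu_2 = 1/D$ for even $n$; the rank-one spike $\mu_n\mathbf{1}\mathbf{1}^\top$ is positive semidefinite and only enlarges the top eigenvalue, so $\lambda_{\min}(\mathbb{E}[G_n]) = 1-\mu_n \geq 1 - 1/D$. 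It then suffices to control the fluctuation $\|G_n - \mathbb{E}[G_n]\|$. For $n = 2$ I would exploit the Gram-of-lifts identity $\langle w_k, w_\ell\rangle^2 = \langle w_k^{\otimes 2}, w_\ell^{\otimes 2}\rangle$, writing $G_2 = V^\top V$ with $V = [\,w_1^{\otimes 2}|\cdots|w_m^{\otimes 2}\,]$ a matrix whose columns live in the $\Theta(D^2)$-dimensional symmetric space, and bound $\|G_2 - \mathbb{E}[G_2]\|$ via an $\epsilon$-net over $\mathbb{S}^{m-1}$ combined with a moment/Bernstein tail for the quadratic form $x^\top(G_2 - \mathbb{E}[G_2])x$; this is exactly the step that forces $m = o(D^2)$, so that $\|G_2-\mathbb{E}[G_2]\| \lesssim \sqrt{m}/D = o(1)$, and it produces the remaining $C(eD/\sqrt{m})^{-C\sqrt{m}}$ failure term. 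For the higher orders I would argue uniformly: on the event where \ref{enum:correlation} holds, $\rho := \max_{k\neq\ell}|\langle w_k,w_\ell\rangle| \leq \sqrt{c_2\log(m)/D} < 1$, so $\rho^n < 1/(2m)$ once $n$ exceeds a threshold $n^\ast = O(\log m/\log D) = O(1)$ (using $\log m < 2\log D$); for such $n$ a bare Gershgorin bound already gives $\|G_n - \operatorname{Id}\| \leq (m-1)\rho^n < 1/2$, while the constantly many remaining orders $2 \leq n < n^\ast$ are each dispatched by the $n=2$ concentration argument and a final union bound. Intersecting the three events then yields $\lambda_{\min}(G_n) \geq 1/2 \geq c_3^{-1}$ for all $n \geq 2$.

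I expect the main obstacle to be the operator-norm concentration of $G_2 - \mathbb{E}[G_2]$ in the near-critical regime $m \uparrow D^2$. The entries $\langle w_k, w_\ell\rangle^2$ are strongly dependent, since each $w_k$ appears throughout a whole row and column, so crude tools fail: a Gershgorin or entrywise estimate only gives $\|G_2 - \operatorname{Id}\| \lesssim m\log(m)/D$, which blows up well before $m = D^2$. The proof must instead extract the cancellation hidden in the lifted frame $\{w_k^{\otimes 2}\}$, which is what lets the bound survive up to $m = o(D^2)$ and what dictates the peculiar $(eD/\sqrt{m})^{-C\sqrt{m}}$ shape of the tail. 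By comparison, the bookkeeping needed to make the estimate uniform over all $n \geq 2$ is routine, since increasing $n$ only shrinks the off-diagonal mass.
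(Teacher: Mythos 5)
First, a point of reference: this paper does not prove Proposition~\ref{prop:incoherence_holds_for_uniform} at all — it is imported verbatim as \cite[Prop.~13]{kileelLandscapeAnalysisImproved2021a} — so your sketch can only be judged on its own soundness, not against an in-paper argument. Your treatment of \ref{enum:correlation} (spherical-cap tail for $\langle w_i,w_j\rangle$ plus a union bound over pairs, absorbing the failure into the $m^{-1}$ term) and of \ref{enum:RIP} (net over $\bbS^{p-1}$, concentration for unit-norm subgaussian columns, union bound over the $\binom{m}{p}$ supports with $c_1$ a small multiple of $\delta^2$) is standard and essentially correct.

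The genuine gap is in \ref{enum:GInverse}, at exactly the step you identify as the crux. You propose to bound $\|G_2-\E[G_2]\|$ by a single $\epsilon$-net over $\bbS^{m-1}$ together with a Bernstein tail for $x^\top(G_2-\E[G_2])x$ at each net point. This cannot work in the regime $D \lesssim m \ll D^2$, which is precisely the regime this paper cares about. Take $x=(e_k+e_\ell)/\sqrt{2}$: then $x^\top(G_2-\E[G_2])x = \langle w_k,w_\ell\rangle^2 - 1/D$, and the probability that this exceeds a fixed constant is of order $e^{-cD}$ and no smaller — the lifted columns $w_k^{\otimes 2}$ are order-two chaos, hence subexponential rather than subgaussian in the lifted space, so the best uniform per-point tail at constant deviation is $e^{-cD}$. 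A net on $\bbS^{m-1}$ has cardinality exponential in $m$, so the union bound is $e^{Cm}\cdot e^{-cD}$, which is vacuous as soon as $m\gtrsim D$. The target estimate $\|G_2-\E[G_2]\|\lesssim \sqrt{m}/D$ is plausibly true, but the stated mechanism does not prove it: the valid route (and the actual content of the cited result, which is also what shapes the odd-looking $(eD/\sqrt{m})^{-C\sqrt{m}}$ tail) stratifies the sphere by support size — vectors with small support are handled by a union bound over the relatively few supports, where the $e^{-cD}$ per-event tail suffices because $\log m \ll D$, while well-spread vectors enjoy genuinely stronger (up to $e^{-cD^2}$) tails; a single uniform per-point bound can never beat the worst, nearly $2$-sparse, direction. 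Separately, your handling of $n\geq 3$ (Gershgorin beyond a threshold $n^*$, re-running the concentration for the constantly many $2\leq n<n^*$) is correct but unnecessarily laborious: once $\lambda_{\min}(G_2)\geq c$ is in hand, the Schur product theorem, $\lambda_{\min}(A\odot B)\geq \min_i a_{ii}\,\lambda_{\min}(B)$ (cf.\ Lemma~\ref{lemma:grammian_order_nondec_minev} of this paper), gives $\lambda_{\min}(G_n)=\lambda_{\min}(G_{n-2}\odot G_2)\geq\lambda_{\min}(G_2)$ for all $n\geq 2$ in one line.
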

\begin{algorithm}
	\KwIn{Shallow neural network $f$, number of neurons $m$, number of Hessian locations $N_h$, stepsize $\gamma > 0$, $\beta$ threshold for rejection of spurious local maximizers}
	Draw independent samples $x_1, \dots, x_N \sim \CN(0, \Id)$. \\
	Construct the matrix
	\begin{align*} \widehat{M} :=
		\begin{bmatrix}
			\opvec(\Delta^2 {f}(x_1))& \dots  & \opvec(\Delta^2 {f}(x_N))
		\end{bmatrix} \in \R^{D^2 \times N_h}.
	\end{align*}\\
	Denote by $\mathcal{P}_{\wCW}$ the orthogonal projection onto the $m$-th left singular subspace of $\widehat{M}$. \\
	Define $P_{\wCW}$ as the orth. proj. in matrix space corresponding to $\mathcal{P}_{\wCW}$\\
	Set $\CU  \leftarrow  \emptyset$\\
	\While{$|\CU|< m$}{
		Sample $ u_0 \sim\textrm{Unif}(\bbS^{D-1})$\\
		Iterate projected gradient ascent
			$$  u \leftarrow  P_{\bbS^{D-1}}(  u+ 2\gamma P_{\wCW}((   u)^{\otimes 2})  u)$$
		until convergence, and denote the vector of the final iteration by $\hat{u}$.\\
		\If {$\norm{P_{\wCW}(\hat{u}^{\otimes 2})}_F^2 > \beta$} {
			\If{$\hat{u} \not\in {\CU}$ and $-\hat{u} \not\in {\CU}$}{
				$\CU\leftarrow \CU \cup \{\hat{u}\}$\\
			}
		}
	}
	\KwOut{$\CU$}
	\caption{\textbf{Weight recovery}}
	\label{alg:recover_weights}
\end{algorithm}

\paragraph{Proof Sketch of Theorem \ref{thm:weight_recovery}}
The proof of Theorem \ref{thm:weight_recovery} relies on two individual auxiliary statements. 
First, a subspace approximation bound covered in Lemma \ref{lem:subspace_pertubation}, that controls the error $\norm{P_{\CW} - P_{\wCW}}_{F\rightarrow F}$.
Recall that $\wCW$ is constructed to approximate the matrix space 
\begin{align*}
    \CW = \operatorname{span}\{ w_1 \otimes w_1, \dots w_m \otimes w_m \} \subset \operatorname{Sym}(\R^{D \times D})
\end{align*}
from which individual weights can be identified as the rank-1 spanning elements. It is noteworthy that the proof of Lemma \ref{lem:subspace_pertubation} as well as Algorithm \ref{alg:recover_weights} make use of the following convention: We associated every matrix subspace with a classical vector space induced by vectorization. The vectorization of a matrix is denoted by the operator $\operatorname{vec}(\cdot)$ whose output applied to a matrix $X \in \R^{a\times b}$ is the vector in $\R^{a\cdot b}$ containing the columns of $X$ stacked on top of other, i.e.  
\begin{align*}
    \operatorname{vec}\left(\begin{bmatrix}
        \vline & & \vline \\
        x_1 & \dots & x_b\\
        \vline & & \vline \\ 
    \end{bmatrix} \right)  = \begin{pmatrix} x_1 \\ \vdots \\ x_b \end{pmatrix}.
\end{align*}
This allows us to associate a space like $\CW \subset \operatorname{Sym}(\R^{D\times D})$ with the space 
\begin{align*}
    \operatorname{span}\{\opvec(w_1\otimes w_1), \dots, \opvec(w_m\otimes w_m)\} \R^{D^2}.
\end{align*}

\begin{lemma}\label{lem:subspace_pertubation} Consider the teacher network $f$ defined in \eqref{eq:snn_general}. Assume the activation $g$ satisfies \ref{enum:activation} - \ref{enum:nonpolynomoial}
and $f$ satisfies the learnability condition \ref{enum:learnability} for some $\alpha>0$.
Furthermore assume that the network weights $w_1, \dots, w_m \in \bbS^{D-1}$ fulfill \textbf{(A2)} of Definition \ref{def:assumptions_overcomplete} with constant $c_2$.
Let $P_\CW$ be the orthogonal approximation onto $\CW = \operatorname{span}\{ w_1 \otimes w_1, \dots w_m \otimes w_m \}$ and let $P_{\wCW}$
be constructed as described in Algorithm \ref{alg:recover_weights}.
Then there exists a constant $C>0$ depending only on $g$ and $c_2$, such that for  numerical diff. accuracy $\epsilon < \frac{\sqrt{\alpha}}{C\sqrt{m}}$ and $N_h > t(m + m^2 \log(m) / D)$ for some $t\geq 1$ we have
\begin{align}
\norm{P_{\CW} - P_{\wCW}}_{F\rightarrow F} \leq C \sqrt{m / \alpha } \cdot \epsilon,
\end{align}
with probability at least $1-  D^2 \exp\left(-\frac{t \alpha}{C}  \right)$.
\end{lemma}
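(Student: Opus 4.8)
The plan is to read the construction of $\wCW$ in Algorithm~\ref{alg:recover_weights} as a perturbation of singular subspaces and to control it by a Wedin-type $\sin\Theta$ bound. Write the data matrix of Algorithm~\ref{alg:recover_weights} as $\widehat{M}=M+E$, where $M:=[\opvec(\nabla^2 f(x_1))\,|\,\cdots\,|\,\opvec(\nabla^2 f(x_{N_h}))]$ stacks the \emph{exact} vectorized Hessians and $E$ stacks the numerical-differentiation errors $\opvec(\Delta^2 f(x_i)-\nabla^2 f(x_i))$. The structural fact driving everything is that $\nabla^2 f(x)=\sum_{k}g^{(2)}(\langle w_k,x\rangle+\tau_k)\,w_k\otimes w_k\in\CW$, so every column of $M$ lies in $\CW$ and consequently $\operatorname{range}(M)\subseteq\CW$ and $\sigma_{m+1}(M)=0$. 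The first main step will simultaneously certify that $\operatorname{range}(M)=\CW$, so that $\CW$ is \emph{exactly} the top-$m$ left-singular subspace of $M$ and all of the error in $P_{\wCW}$ is attributable to $E$.

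The first main step is a lower bound $\sigma_m(M)\gtrsim\sqrt{N_h\alpha}$. Since $x_1,\dots,x_{N_h}\sim\CN(0,\Id)$ are i.i.d., the matrices $Z_i:=\opvec(\nabla^2 f(x_i))^{\otimes 2}$ are i.i.d., positive semidefinite, supported on $\CW$, with mean $\E[Z_i]=\E_{X}[\opvec(\nabla^2 f(X))^{\otimes 2}]$ whose $m$-th eigenvalue is at least $\alpha$ by the learnability condition~\ref{enum:learnability}. Restricting to the $m$-dimensional space $\CW$ and invoking a matrix Chernoff bound, I would show $\lambda_m\big(\tfrac1{N_h}\sum_i Z_i\big)\geq\alpha/2$ with high probability, hence $\sigma_m(M)=\sqrt{N_h\,\lambda_m(\tfrac1{N_h}MM^\top)}\geq\sqrt{N_h\alpha/2}$ (in particular $\operatorname{range}(M)=\CW$). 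The relevant per-sample norm is $\norm{Z_i}=\norm{\nabla^2 f(x_i)}_F^2=\sum_{k,\ell}g^{(2)}(\cdot)g^{(2)}(\cdot)\langle w_k,w_\ell\rangle^2$, which the incoherence assumption~\ref{enum:correlation} bounds deterministically by $R:=\kappa^2\big(m+m^2 c_2\log(m)/D\big)$. The sample requirement $N_h>t(m+m^2\log(m)/D)$ is exactly $N_h\gtrsim tR$, and feeding this into the Chernoff estimate produces the failure probability $D^2\exp(-t\alpha/C)$ (overcounting the intrinsic dimension $m$ by the ambient $D^2$).

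The second, and most delicate, step is an operator-norm bound $\norm{E}\leq C\sqrt{mN_h}\,\epsilon$. By linearity of $\Delta^2$ (assumption~\ref{enum:numerical_diff}) the error of the $i$-th column splits over the $m$ neurons, each contributing a matrix of Frobenius norm at most $C_\Delta\norm{w_k^{\otimes 2}}_F\epsilon=C_\Delta\epsilon$ and (approximately) aligned with $w_k\otimes w_k$. The naive triangle inequality would only give column norm $m\epsilon$ and hence $\norm{E}\lesssim m\sqrt{N_h}\,\epsilon$, which is off by a factor $\sqrt{m}$; the point is to write $E\approx W_{\otimes}C$ with $W_{\otimes}=[\opvec(w_1\otimes w_1)|\cdots|\opvec(w_m\otimes w_m)]$ and to bound $\norm{E}\leq\norm{W_{\otimes}}\,\norm{C}$, using the incoherence of the weights to control the \emph{spectral} norm rather than the Frobenius norm of both factors. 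This is where assumption~\ref{enum:correlation} is essential and where I expect the technical heavy lifting: keeping the error at the $\sqrt{m}$ scale (rather than $m$) requires a genuine spectral, not entrywise, estimate of the accumulated differentiation errors.

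Finally I would combine the two estimates. Choosing $\epsilon<\sqrt{\alpha}/(C\sqrt m)$ guarantees $\norm{E}\leq\tfrac12\sigma_m(M)$, so $\sigma_{m+1}(\widehat M)\leq\norm{E}$ and the gap $\sigma_m(M)-\sigma_{m+1}(\widehat M)\geq\tfrac12\sigma_m(M)\gtrsim\sqrt{N_h\alpha}$. Wedin's theorem for the left-singular subspaces then yields
\begin{align*}
\norm{P_{\CW}-P_{\wCW}}_{F\to F}\;\leq\;\frac{\norm{E}}{\sigma_m(M)-\sigma_{m+1}(\widehat M)}\;\lesssim\;\frac{\sqrt{mN_h}\,\epsilon}{\sqrt{N_h\alpha}}\;=\;C\sqrt{m/\alpha}\,\epsilon,
\end{align*}
where the dependence on $N_h$ cancels, exactly as claimed. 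The main obstacle is the spectral bound on $E$ in the third step; the matrix-Chernoff argument and the final Wedin application are comparatively routine once the $R$-bound and the $\sqrt{m}$-scaling of $\norm{E}$ are in place.
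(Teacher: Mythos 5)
Your skeleton coincides with the paper's: the same matrix Chernoff argument for $\sigma_m(M)$ (with the per-sample norm bound $R=\kappa^2\bigl(m+c_2m^2\log(m)/D\bigr)$ obtained from \ref{enum:correlation}, the learnability constant $\alpha$ from \ref{enum:learnability}, and failure probability $D^2\exp(-t\alpha/C)$ once $N_h>t(m+m^2\log(m)/D)$), followed by a Wedin-type bound in which the factor $\sqrt{N_h}$ cancels. The differences you introduce -- operator norm of $E$ instead of $\|M-\widehat M\|_F$, and the gap $\sigma_m(M)-\sigma_{m+1}(\widehat M)$ instead of lower-bounding $\sigma_m(\widehat M)$ via Weyl -- are cosmetic.

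The genuine gap is your third step, which you yourself flag as ``the main obstacle'' and leave unproven; moreover, the route you sketch for it cannot be completed under the stated hypotheses. Assumption \ref{enum:numerical_diff} bounds only the \emph{Frobenius norm} of each per-neuron error $\Delta^2[g(w_k^\top\cdot+\tau_k)]-\nabla^2 g(w_k^\top\cdot+\tau_k)$ by $C_\Delta\epsilon$; it carries no information whatsoever about the \emph{direction} of these matrices, so they need not be even approximately aligned with $w_k\otimes w_k$. Nothing prevents all $m$ per-neuron errors at a given sample from being one and the same matrix of norm $C_\Delta\epsilon$, in which case the corresponding column of $E$ has norm exactly $mC_\Delta\epsilon$ and no factorization $E\approx W_{\otimes}C$ with controlled spectral factors exists; consequently incoherence of the $w_k$ alone can never buy the $\sqrt m$-scaling you need, and any rigorous argument from \ref{enum:numerical_diff} as stated stalls at $\|E\|\leq m\sqrt{N_h}\,\epsilon$, which degrades the conclusion to $\norm{P_{\CW}-P_{\wCW}}_{F\to F}\lesssim m\epsilon/\sqrt{\alpha}$. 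For comparison, the paper does no heavy lifting at this point: it estimates the numerator columnwise as
\begin{align*}
\|M-\widehat M\|_F\;&\leq\;\sqrt{N_h}\,\max_{i}\bigl\|\nabla^2 f(X_i)-\Delta^2 f(X_i)\bigr\|_F\\
&\leq\;\sqrt{N_h m}\,\max_{i,k}\bigl\|\nabla^2 g(w_k^\top X_i+\tau_k)-\Delta^2 g(w_k^\top X_i+\tau_k)\bigr\|_F\;\leq\;C_\Delta\sqrt{N_h m}\,\epsilon,
\end{align*}
i.e.\ it asserts in a single inequality (attributed to ``linearity'') precisely the $\sqrt m$ gain over the triangle inequality that you correctly identified as the crux; a strict application of linearity plus the triangle inequality over the $m$ neurons gives $m\max_{i,k}$, not $\sqrt m\,\max_{i,k}$. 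So your diagnosis of where the difficulty sits is sound -- it is in fact the thinnest step of the paper's own proof -- but your proposal neither closes it nor could close it by the alignment/factorization mechanism you propose.
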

\begin{proof}
Consider $X_1, \dots, X_{N_h}$ independent copies of a standard Gaussian, i.e. $X_i \sim  \CN(0, \Id_D)$. Denote by $\mathcal{P}_\CW \in \R^{D^2 \times D^2}$ the orthogonal projection matrix onto $\operatorname{span}\left\{\opvec({w}_k\otimes {w}_k)\, | \,  i =1,\ldots,m\right\}$ and
by $M$ the matrix with columns given by the exact vectorized Hessians at the inputs $X_1,\ldots,X_{N_h}$, i.e.
\begin{align}
M := \begin{bmatrix}
\opvec(\nabla^2 f(X_1)) & \dots &\opvec(\nabla^2 f(X_{N_h}))
\end{bmatrix} \in \R^{D^2\times N_h}.
\end{align}
We associate the matrix subspaces $\CW$ and $\widehat \CW$ with their corresponding $D^2$-dimensional
vector subspaces described by the orthogonal projection matrices $\mathcal{P}_{\CW}, \mathcal{P}_{\wCW}$, respectively. Note that
\begin{align*}
\norm{P_{\CW} - P_{\wCW}}_{F\rightarrow F} &= \sup_{\norm{M}_F=1}\norm{P_{\CW}(M) - P_{\wCW}(M)}_{F} = \norm{\mathcal{P}_{\CW} - \mathcal{P}_{\wCW}}
\end{align*}
with $\|\cdot\|$ describing the ordinary spectral normal in $\R^{D^2}$. Hence, to prove the result,
we can rely on the well-known Wedin bound, see for instance \cite{fornasier2018identification, fornasier2019robust, fiedlerStableRecoveryEntangled2021, kileelLandscapeAnalysisImproved2021a}, giving
\begin{align}\label{eq:wedin_reparam_subspace_error}
\norm{P_{\CW} - P_{\wCW}}_{F\rightarrow F} = \norm{\mathcal{P}_{\CW} - \mathcal{P}_{\wCW}} \leq \frac{\|M-\widehat{M}\|_F}{\sigma_m(\widehat{M})},
\end{align}
for as long as $\sigma_m(\widehat{M}) > 0$.
We continue to provide separate bounds for the numerator and denominator of \eqref{eq:wedin_reparam_subspace_error}.
For the numerator we obtain
\begin{align*}
\|M-\widehat{M}\|_F &\leq \sqrt{N_h } \max_{i\in [N_h ]} \norm{\nabla^2 {f}(X_i) - \Delta^2 {f} (X_i)}_F\\
&\leq \sqrt{N_h m} \max_{\substack{i\in [N_h ] \\ k \in [m]}} \norm{\nabla^2 g(w_k^\top X_i + {\tau}_k) - \Delta^2 g({w}_k^\top X_i + {\tau}_k)}_F \\
&\leq \max_{k\in [m]} C_\Delta \sqrt{N_h m} \norm{{w}_k\otimes {w}_k }_F \epsilon = C_\Delta \sqrt{N_h m} \epsilon,
\end{align*}
where we used the linearity of $\Delta^2, \nabla^2$ in the second step and our assumptions on the numerical differentiation method \ref{enum:numerical_diff} in the last line which gives rise to the constant $C_\Delta$ that only depends on $g$. For the denominator in \eqref{eq:wedin_reparam_subspace_error} we use Weyl's inequality \cite{weylAsymptotischeVerteilungsgesetzEigenwerte1912} which leads to the lower bound 
\begin{align}
\sigma_m(\widehat{M}) \geq \sigma_m(M) - \|M-\widehat{M}\| \geq \sigma_m(M) - \|M-\widehat{M}\|_F \geq  \sigma_m(M) - C_\Delta \sqrt{N_h  m} \epsilon.
\end{align}
Lastly, we need to control $\sigma_{m}(M)$ by a concentration argument in combination with the learnability assumption \ref{enum:learnability} of Section  \ref{sec:network_model}. We first express $\sigma_m(M)$ as sum of independent matrices:
\begin{align}
\sigma_m(M)^2 = \sigma_m(MM^\top) = \sigma_m \left( \sum^{N_h }_{i=1} \opvec(\nabla^2 f (X_i)) \otimes \opvec(\nabla^2 f(X_i)) \right).
\end{align}
Denote $A_i =\opvec(\nabla^2 {f}(X_i)) \otimes \opvec(\nabla^2 {f}(X_i))$. By \ref{enum:learnability} we know that
\begin{align*}
\sigma_m \left( \sum^{N_h }_{i=1}\mathbb{E} A_i \right) =N_h\alpha > 0.
\end{align*}
We will make use of the matrix Chernoff (see \cite{troppUserFriendlyTailBounds2012} Corollary 5.2 and the following remark) which states that
\begin{align}\label{eq:reparam_matrix_chernof}
\mathbb{P}\left( \sigma_m\left( \sum^{N_h }_{i=1} A_i \right) \leq (1-s) \sigma_m\left( \sum^{N_h }_{i=1}\mathbb{E} A_i \right)   \right) \leq D^2 \exp\left(-(1-s)^2 \sigma_m\left( \sum^{N_h }_{i=1}\mathbb{E} A_i \right) / 2K \right)
\end{align}
for $s \in [0,1]$ and $K = \max_{i\in [N_h ]} \norm{A_i}_2$. The norm of $A_i$ can be bound uniformly over all $x \in \R^D$ by
\begin{align*}
\norm{\opvec(\nabla^2 f(X_i)) \otimes \opvec(\nabla^2 {f}(X_i))}_2 &\leq \sup_{x\in \R^D} \norm{\opvec(\nabla^2 {f}(x))}^2_2 = \sup_{x\in \R^D} \norm{\nabla^2 {f}(x)}^2_F \\
&= \sup_{x\in \R^D}\norm{\sum^m_{k=1} g^{(2)}({w}_k^\top x + {\tau}_k)  {w}_k\otimes {w}_k}_F^2 \\
&= \sup_{x\in \R^D} \sum^m_{k,\ell = 1} g^{(2)}({w}_k^\top x + {\tau}_k) g^{(2)}({w}_\ell^\top x + {\tau}_\ell) \langle {w}_k, {w}_\ell \rangle^2 \\
&\leq \kappa^2  \sum^m_{k,\ell = 1} \langle {w}_k, {w}_\ell \rangle^2 \leq \kappa^2 ( m + c_2 m(m-1)\log m / D).
\end{align*}
The last inequality follows by the incoherence assumption \ref{enum:correlation} from the initial statement.
Combining this with \eqref{eq:reparam_matrix_chernof} for $s=1/2$ together with the bound on the spectrum of the expectation yields
\begin{align}
\mathbb{P}\left( \sigma_m\left( \sum^{N_h}_{i=1} A_i \right) \geq \frac{1}{2} N_h\alpha   \right) \geq1-  D^2 \exp\left(-\frac{N_h D \alpha}{8 \kappa^2(Dm+c_2 m^2 \log m )}  \right).
\end{align}
Conditioning on this event, and assuming $\epsilon < \sqrt{\alpha / 8C^2_\Delta m}$ the initial subspace bound now holds as
\begin{align}\label{eq:subspace_eq_last}
\norm{P_{\CW} - P_{\wCW}}_{F\rightarrow F} &\leq \frac{\norm{M-\widehat{M}}_2}{\sigma_m(\widehat{M})} \leq \frac{C_\Delta \sqrt{N_h m} \epsilon}{\sqrt{\frac{1}{2}N_h  \alpha } - C_\Delta \sqrt{N_h m} \epsilon} = \frac{C_\Delta \sqrt{m}\cdot \epsilon}{\sqrt{\frac{\alpha}{2} }- C_\Delta \sqrt{m} \cdot \epsilon}\\
&\leq \frac{\sqrt{2}C_\Delta \sqrt{m}\cdot \epsilon}{\sqrt{{\alpha} }}
\end{align}
with said probability. The final result follows by applying the bound on $\epsilon$ onto the denominator. More precisely, we need that $C > 2C_\Delta$ to fulfill \eqref{eq:subspace_eq_last} and $C > 8\kappa^2 \max\{1, c_2\}$ which implies 
\begin{align*}
1-  D^2 \exp\left(-\frac{N_h D \alpha}{8 \kappa^2(Dm+c_2 m^2 \log m )}  \right) \leq 1-  D^2 \exp\left(-t \alpha / C \right),
\end{align*}
due to our assumption that $N_h > t(m + m^2 \log(m) / D)$. 
\end{proof}
\noindent The second part of Algorithm \ref{alg:recover_weights} performs projected gradient ascent to find the local maximizers of  
\begin{align}\label{eq:program_in_weight_section_proof}
u \mapsto \norm{P_{\wCW}(u\otimes u)}^2_F, u \in \bbS^{D-1}.
\end{align}
The landscape for this functional, for $m = o(D^2)$,  has been recently analyzed (in particular the properties of its local maximizers) in \cite{kileelLandscapeAnalysisImproved2021a} for the general problem of symmetric tensor decomposition. We now provide one of their main statements adopted to the matrix scenario.
\begin{theorem}[{\cite[Theorem 16]{kileelLandscapeAnalysisImproved2021a}}]\label{thm:kilel_et_all_over_main}
Let $m, D \in \N$ such that $m \log^2(m) \leq D^2$. Assume ${w}_1, \dots, {w}_m$ satisfy \ref{enum:RIP} - \ref{enum:GInverse} of Definition \ref{def:assumptions_overcomplete} for some $\delta, c_1, c_2, c_3 > 0$. Then there exists $\delta_0$, depending only on $c_2, c_3$ and $D_0, \Delta_0, C$ which depend additionally on $c_1$, such that if $\delta < \delta_0, D>D_0$ and $\norm{P_{\CW} - P_{\wCW}}_{F\rightarrow F} \leq \Delta_0$, the program \eqref{eq:program_in_weight_section_proof} has exactly $2m$ second-order critical points in the superlevel set
\begin{align}\label{eq:superlevel_set_thm16}
\left\lbrace x \in \bbS^{D-1} \middle| \,  \norm{P_{\wCW}(x\otimes x)}^2_F \geq C  m \log^2(m) / D^2 + 5 \norm{P_{\CW} - P_{\wCW}}_{F\rightarrow F} \right\rbrace.
\end{align}
Each of these critical points is a strict local maximizer for $\operatorname{argmax}_{u \in \bbS^{D-1}} \norm{P_{\wCW}(u\otimes u)}^2_F$. Furthermore, for each such point $x^*$, there exists a unique $k \in [m]$ such that
\begin{align}
\min_{s \in \{-1,1\}}\norm{x^* - s w_k}_2 \leq \sqrt{\norm{P_{\CW} - P_{\wCW}}_{F\rightarrow F}}.
\end{align}
\end{theorem}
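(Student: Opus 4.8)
The plan is to first establish the landscape for the idealized objective $\phi_0(u) := \norm{P_{\CW}(u\otimes u)}_F^2$ (the case $\Delta := \norm{P_\CW - P_{\wCW}}_{F\to F} = 0$) and then transfer it to the perturbed objective $\phi(u) := \norm{P_{\wCW}(u\otimes u)}_F^2$ by a quantitative stability argument. The first move is to pass to the coordinate vector $a := W^\top u \in \R^m$, $a_k = \inner{w_k, u}$. Writing $B := [\opvec(w_1\otimes w_1)|\dots|\opvec(w_m\otimes w_m)]$, one has $B^\top B = G_2$ (invertible by \ref{enum:GInverse}), $\mathcal{P}_{\CW} = B G_2^{-1} B^\top$, and $B^\top\opvec(u\otimes u) = a^{\odot 2}$, so that
\begin{align}\label{eq:phi0coord}
\phi_0(u) = \opvec(u\otimes u)^\top \mathcal{P}_{\CW}\, \opvec(u\otimes u) = (a^{\odot 2})^\top G_2^{-1} (a^{\odot 2}).
\end{align}
Differentiating gives the ambient gradient $\nabla\phi_0(u) = 4\,W \operatorname{diag}(a)\, G_2^{-1}(a^{\odot 2})$, and imposing Riemannian criticality $(\Id - uu^\top)\nabla\phi_0(u)=0$ yields the fixed-point relation $W\operatorname{diag}(a)\,G_2^{-1}(a^{\odot 2}) = \phi_0(u)\,u$ with Lagrange multiplier $\inner{\nabla\phi_0(u),u}=4\phi_0(u)$. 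This reduces the entire landscape question to the behaviour of the explicit quartic $a\mapsto (a^{\odot 2})^\top G_2^{-1}(a^{\odot 2})$.

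The heart of the argument is to show that every second-order critical point of $\phi_0$ lying in the superlevel set \eqref{eq:superlevel_set_thm16} has a \emph{spiky} coordinate vector, concentrated on a single index. Here \ref{enum:correlation} fixes the relevant ``noise floor'': a generic $u$ has $\inner{w_k,u}^2\approx 1/D$, so $\phi_0(u)\approx\sum_k\inner{w_k,u}^4\approx m/D^2$, which matches the threshold $Cm\log^2 m/D^2$ up to logs, so the superlevel condition forces some $a_{k^\ast}^2$ well above this floor. To upgrade this into genuine concentration I would use \ref{enum:RIP}: on the support of the few largest coordinates the RIP makes $G_2$ act almost like the identity, so by \eqref{eq:phi0coord} a critical point with two comparably large coordinates $a_j,a_k$ would sit on the ``ridge'' (great-circle arc) joining $w_j$ and $w_k$, along which $\phi_0$ strictly increases towards one endpoint, producing an ascent direction and contradicting second-order criticality. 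Once $a$ is dominated by $a_{k^\ast}$, the critical-point relation pins $u$ near $\pm w_{k^\ast}$, and a direct computation of the Riemannian Hessian — whose leading term on the orthogonal complement of $w_{k^\ast}$ is negative definite of order $-a_{k^\ast}^2(\Id-uu^\top)$, with incoherence-controlled corrections — certifies a strict local maximum. Counting one maximizer near each of $\pm w_1,\dots,\pm w_m$ yields exactly $2m$ points, and the spikiness dichotomy excludes any further critical point in the superlevel set.

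Next I would lift the conclusion to $\phi$. Since $\norm{u\otimes u}_F=1$, the bound $\norm{P_\CW-P_{\wCW}}_{F\to F}\leq\Delta$ gives $|\phi(u)-\phi_0(u)|\leq 2\Delta+\Delta^2$ uniformly on $\bbS^{D-1}$, and analogous $O(\Delta)$ bounds hold for gradients and Hessians (each being a bilinear form in $P_\CW-P_{\wCW}$ evaluated on unit tensors). Because the $2m$ maximizers of $\phi_0$ are nondegenerate with a Hessian gap bounded below by a constant (times $a_{k^\ast}^2\approx 1$), a quantitative implicit-function / topological-degree argument shows each persists as a unique strict local maximizer of $\phi$ at distance $O(\Delta)$, while the uniform gradient bound, combined with the absence of $\phi_0$-critical points away from the maximizers, forbids the creation of spurious ones inside the (slightly shifted) superlevel set. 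The stated $\Delta^{1/2}$ distance bound is the standard square-root loss: Taylor expansion gives $\phi_0(w_k)-\phi_0(x^\ast)\gtrsim\norm{x^\ast-w_k}_2^2$, whereas $|\phi(x^\ast)-\phi_0(x^\ast)|\lesssim\Delta$ and $\phi(x^\ast)\geq\phi(w_k)$, and combining these yields $\norm{x^\ast-w_k}_2\lesssim\Delta^{1/2}$.

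The \textbf{main obstacle} is the spikiness step in the overcomplete regime $D<m=o(D^2)$. Unlike the case $m\leq D$, the Gram matrix $G_2$ is \emph{not} close to the identity in operator norm (by \ref{enum:correlation} its off-diagonal part has Frobenius norm of order $m\log m/D\geq\log m$, hence is never negligible), so one cannot globally approximate $\phi_0(u)$ by $\sum_k a_k^4$. The analysis must therefore be \emph{localized}: \ref{enum:RIP} tames only the block of $G_2$ indexed by the large coordinates of $a$, while \ref{enum:correlation}–\ref{enum:GInverse} bound the aggregate contribution of the remaining $\approx m$ small, correlated coordinates to the objective, its gradient, and its Hessian simultaneously. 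Making this control of one ``signal'' coordinate against a growing bath of ``noise'' coordinates quantitative — so that the ridge/second-order argument closes for all $m$ up to $m\log^2 m\leq D^2$ — is the delicate part, and it is exactly where the sharp threshold $Cm\log^2 m/D^2$ in \eqref{eq:superlevel_set_thm16} enters.
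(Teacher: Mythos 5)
First, a structural point: the paper does not prove this statement at all. It is Theorem 16 of \cite{kileelLandscapeAnalysisImproved2021a}, restated for the order-2 (matrix) case, and the paper imports it as a black box ("we now provide one of their main statements adopted to the matrix scenario"). So your attempt can only be measured against the strategy of that external work and on its own internal completeness. Judged that way, your setup is sound and does match the known route: the identity $\phi_0(u)=(a^{\odot 2})^\top G_2^{-1}(a^{\odot 2})$ with $a=W^\top u$, the gradient formula $\nabla\phi_0(u)=4W\operatorname{diag}(a)G_2^{-1}(a^{\odot 2})$ and the resulting criticality relation are all correct; likewise, the observation that $\phi-\phi_0$ and its first two derivatives are uniformly $O(\Delta)$ on $\bbS^{D-1}$ (with $\Delta=\norm{P_{\CW}-P_{\wCW}}_{F\to F}$), and the persistence-plus-square-root-loss scheme for transferring the landscape from $\phi_0$ to $\phi$, are the standard and appropriate tools.

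However, there is a genuine gap, and you essentially name it yourself. The spikiness step --- showing that \emph{every} second-order critical point of $\phi_0$ (hence of $\phi$) in the superlevel set \eqref{eq:superlevel_set_thm16} has coordinate vector dominated by a single index, so that there are exactly $2m$ such points and no spurious ones --- is the entire mathematical content of the theorem, and your proposal presents it as an acknowledged obstacle rather than an argument. The ridge argument is only heuristic as written: to contradict second-order criticality at a point with two comparably large coordinates $a_j,a_k$ you must exhibit a tangent direction along which the \emph{Riemannian} second variation is strictly positive, and the sign of that quantity is contested by the aggregate contribution of the remaining $\approx m$ small coordinates, which individually sit at the noise floor but collectively carry $\norm{a}_2^2$ of order $m/D\gg 1$; condition \ref{enum:RIP} only controls submatrices on supports of size $O(D/\log(m))$, and $\norm{G_2^{-1}}\leq c_3$ alone does not localize this bath. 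Producing that estimate exactly above the threshold $C\,m\log^2(m)/D^2$ is what occupies the bulk of \cite{kileelLandscapeAnalysisImproved2021a}, and without it the count of $2m$ critical points, their strictness, and the exclusion of further critical points are all unproven. A second, smaller gap sits inside your $\Delta^{1/2}$ derivation: the inequality $\phi(x^\ast)\geq\phi(w_k)$ does not follow merely from $x^\ast$ being a local maximizer of $\phi$; it has to be extracted from the persistence construction (e.g.\ $x^\ast$ realized as the maximizer of $\phi$ over a ball containing $w_k$), which your sketch does not set up. In short: a credible roadmap, consistent with the cited proof's architecture, but not a proof.
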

\noindent This establishes that the local maximizers of \eqref{eq:program_in_weight_section_proof} that belong to the superlevel set \eqref{eq:superlevel_set_thm16} will be close to one of the weights $w_1, \dots, w_m$ up to sign.
The projected gradient ascent iteration in Algorithm \ref{alg:recover_weights} converges monotonically to one of the constrained stationary points of \eqref{eq:program_in_weight_section_proof} as shown in \cite[Theorem 5.1]{kileelSubspacePowerMethod2021}. We are now ready to prove the main result on the weight recovery which relies on the lemma above, Proposition \ref{prop:incoherence_holds_for_uniform}, and the machinery developed in  \cite{kileelLandscapeAnalysisImproved2021a} represented by Theorem \ref{thm:kilel_et_all_over_main}.
\begin{proof}[Proof of Theorem \ref{thm:weight_recovery}]
The weights ${w}_1, \dots, {w}_m$ of $f$ are drawn uniformly from the unit sphere. By Proposition \ref{prop:incoherence_holds_for_uniform}, and for any $\delta_0\in (0,1)$, there exists $D_1\in \N, C_1>0$ depending only on $\delta_0$ such that for all $D\geq D_1$ this set of weights fulfills conditions \ref{enum:RIP} - \ref{enum:GInverse} of Definition \ref{def:assumptions_overcomplete} with constants $c_2,c_3 < C_1$  and with probability at least 
$$1 - m^{-1} - 2 \exp(-C_1 \delta_0^2 D) - C_1 \left(\frac{e \cdot D}{\sqrt{m}}\right)^{-C_1 \sqrt{m}}.$$
 We condition on this event and denote it by $E_1$ for the remaining part of the proof. Now, due to the incoherence of the weights and according to our initial assumption which includes $N_h > t(m + m^2 \log(m) /D)$, the conditions of Lemma \ref{lem:subspace_pertubation} are met which provides an error bound for the subspace which is constructed in the first part of Algorithm \ref{alg:recover_weights}, such that
\begin{align}\label{eq:subspaceerror_weightproof}
\norm{P_{\CW} - P_{\wCW}}_{F\rightarrow F} \leq  C_2 \sqrt{m / \alpha}\cdot \epsilon,
\end{align}
with probability at least $1-  D^2 \exp\left(-t \alpha / C_2  \right)$ for a constant $C_2$ only depending on $g$. Denote the event that this subspace bound holds by $E_2$ and assume it occurs, which only depends on the number of Hessians $N_h$ in relationship to $D,m$. Note that $\delta_0$ can be freely chosen in $(0,1)$. By Theorem  \ref{thm:kilel_et_all_over_main} there exist constants $D_2, \Delta_0, C_3$, such that for $D\geq D_2$ and $ \norm{P_{\CW} - P_{\wCW}}_{F\rightarrow F}  \leq \Delta_0$, the local maximizers of the program $\operatorname{argmax}_{u \in \bbS^{D-1}} \norm{P_{\wCW}(u\otimes u)}^2_F$ fulfill
\begin{align}\label{eq:proof_weight_recovery_uniform}
\min_{s \in \{-1,1\}}\norm{x^* - s {w}_k}_2 \leq \sqrt{\norm{P_{\CW} - P_{\wCW}}_{F\rightarrow F}} \leq \sqrt{ C_2 \sqrt{m / \alpha} \cdot \epsilon},
\end{align}
as long as they belong to the level set
\begin{align*}
\left\lbrace x \in \bbS^{D-1} \middle| \,  \norm{P_{\wCW}(x\otimes x)}^2_F \geq C_3  m \log^2(m) / D^2 + 5   C_2 \sqrt{m / \alpha} \cdot \epsilon \right\rbrace.
\end{align*}  
By iterating projected gradient ascent until convergence, every vector $\hat{u}$ will be one of the these local maximizers. Also note that by construction all vectors returned by Algorithm \ref{alg:recover_weights} must have unit norm, hence $\CU \subset \bbS^{D-1}$. We need to make sure that level set is not empty, which is guaranteed for  $C_3 m \log^2 (m)/ D^2 \leq \frac{1}{4}$ and $\epsilon \leq \frac{\alpha^{1/2}}{20 C_2 \sqrt{m}}$ which leads to the threshold
\begin{align}
C_3  m \log^2(m) / D^2 +  5   C_2 \sqrt{m / \alpha} \cdot \epsilon  \leq \frac{1}{4} + \frac{1}{4}=\frac{1}{2}.
\end{align}
Therefore, only considering local maximizers that fulfill $\norm{P_{\wCW}(x\otimes x)}^2_F \geq 1/2$ will guarantee that all local maximizers are of the kind which satisfies \eqref{eq:proof_weight_recovery_uniform}. Before we conclude, there are still some points that need to be addressed. To achieve the bound \eqref{eq:proof_weight_recovery_uniform} we had to assume that
 $\norm{P_{\CW} - P_{\wCW}}_{F\rightarrow F}  \leq \Delta_0$. This is true due to \eqref{eq:subspaceerror_weightproof} given the accuracy satisfies  $\epsilon \leq\frac{ \Delta_0 \alpha^{1/2}}{C_3m^{1/2}}$ which is clearly realizable by our initial assumptions on $\epsilon$, since $\Delta_0$ is independent of $m,D$. Hence, by further unifying also the constants $C_1,C_2,C_3, D_1, D_2$, we showed that there exists constants $C>0,D_0\in \N$ such that for $D\geq D_0$ and $Cm \log^2 m\leq D^2$  all vectors $u \in \CU$ returned by Algorithm  \ref{alg:recover_weights} ran with num. accuracy $\epsilon\leq \frac{\sqrt{\alpha}}{C \sqrt{m}}$  will fulfill the uniform error bound, 
\begin{align}
\min_{s \in \{-1,1\}}\norm{x^* - s \bar{w}_k}_2 \leq C  (m / \alpha)^{1/4} \epsilon^{1/2},
\end{align}
and this result holds with the combined probability
\begin{align}
1-  D^2 \exp\left(-t \alpha / C \right) -  m^{-1} - 2 \exp(- D/ C) - C\left(\frac{e \cdot D}{\sqrt{m}}\right)^{- \sqrt{m} / C}.
\end{align}

\end{proof}

The following short result does show a useful property of the spectrum of higher order Grammians which will prove useful for the upcoming part about parameter initialization. 

\begin{lemma}[Higher order Hadamard products]\label{lemma:grammian_order_nondec_minev}
In the setting of Definition \ref{def:assumptions_overcomplete} we have
$\lambda_{\min}(G_{2+\widetilde n}) \geq \lambda_{\min}(G_2)$ and thus in particular
$\norm{G_{2+\widetilde n}^{-1}}_2 \leq c_3$ for all $\widetilde n \in \N_{\geq 0}$ as well.
\end{lemma}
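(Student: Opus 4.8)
The plan is to exploit the elementwise (Hadamard) structure of the Grammians together with the Schur product theorem. The starting point is the observation that $(G_n)_{k\ell} = \langle w_k, w_\ell\rangle^n$ exhibits $G_n$ as the $n$-fold Hadamard power of the ordinary Gram matrix $G_1 = W^\top W$, so that
\begin{align*}
  G_{2+\widetilde n} = G_2 \odot G_{\widetilde n},
\end{align*}
since $\langle w_k, w_\ell\rangle^{2+\widetilde n} = \langle w_k, w_\ell\rangle^2\,\langle w_k, w_\ell\rangle^{\widetilde n}$. Two features of the factor $G_{\widetilde n}$ will drive the argument. First, $G_{\widetilde n}$ is positive semidefinite: for $\widetilde n \geq 1$ it is a Hadamard power of the PSD matrix $G_1$ (Schur product theorem), and for $\widetilde n = 0$ it equals the all-ones matrix $\mathbf{1}\mathbf{1}^\top \succeq 0$. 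Second, since the weights are unit vectors, $(G_{\widetilde n})_{kk} = \langle w_k, w_k\rangle^{\widetilde n} = 1$, so $G_{\widetilde n}$ has unit diagonal.

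I would then establish the monotonicity $\lambda_{\min}(G_{2+\widetilde n}) \geq \lambda_{\min}(G_2)$ by a single decomposition trick. Write $G_2 = \lambda_{\min}(G_2)\,\Id_m + P$, where $P := G_2 - \lambda_{\min}(G_2)\,\Id_m \succeq 0$ by the definition of the smallest eigenvalue. Taking the Hadamard product with $G_{\widetilde n}$ and using bilinearity gives
\begin{align*}
  G_{2+\widetilde n} = G_2 \odot G_{\widetilde n} = \lambda_{\min}(G_2)\,(\Id_m \odot G_{\widetilde n}) + P \odot G_{\widetilde n}.
\end{align*}
The unit diagonal of $G_{\widetilde n}$ forces $\Id_m \odot G_{\widetilde n} = \Id_m$, while the Schur product theorem applied to the two PSD matrices $P$ and $G_{\widetilde n}$ yields $P \odot G_{\widetilde n} \succeq 0$. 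Hence $G_{2+\widetilde n} \succeq \lambda_{\min}(G_2)\,\Id_m$, which is exactly the claimed eigenvalue inequality.

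Finally, I would convert this into the operator-norm bound. By assumption \ref{enum:GInverse} (in particular its $n=2$ instance), $\norm{G_2^{-1}}_2 \leq c_3$, and since $G_2$ is symmetric PSD this means $\lambda_{\min}(G_2) \geq 1/c_3 > 0$. Combined with the monotonicity just shown, $\lambda_{\min}(G_{2+\widetilde n}) \geq \lambda_{\min}(G_2) \geq 1/c_3 > 0$, so $G_{2+\widetilde n}$ is invertible and, being symmetric PSD, satisfies $\norm{G_{2+\widetilde n}^{-1}}_2 = 1/\lambda_{\min}(G_{2+\widetilde n}) \leq 1/\lambda_{\min}(G_2) = \norm{G_2^{-1}}_2 \leq c_3$. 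The only real subtlety is the decomposition step: the proof hinges on recognizing that the unit diagonal of $G_{\widetilde n}$ makes $\Id_m \odot G_{\widetilde n}$ collapse to $\Id_m$, so that the Schur product theorem upgrades the trivial inequality into the stated one; everything else is bookkeeping.
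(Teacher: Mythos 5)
Your proof is correct. It reaches the same inequality as the paper but by a different route: the paper's proof is a one-liner that cites the known bound $\lambda_{\min}(A\odot B)\geq \big(\min_{i}a_{ii}\big)\,\lambda_{\min}(B)$ for positive semidefinite $A,B$ \cite{bapat1985majorization}, applied with $A=G_{\widetilde n}$ (unit diagonal) and $B=G_2$, whereas you re-derive precisely this special case from scratch via the decomposition $G_2=\lambda_{\min}(G_2)\,\Id_m+P$ with $P$ positive semidefinite, the identity $\Id_m\odot G_{\widetilde n}=\Id_m$ forced by the unit diagonal, and the Schur product theorem applied to $P\odot G_{\widetilde n}$, which together give $G_{2+\widetilde n}\succcurlyeq\lambda_{\min}(G_2)\,\Id_m$. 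Your decomposition argument is in fact the standard proof of the cited theorem, so the two arguments coincide at their mathematical core; what yours buys is self-containedness (no external majorization result needed) and an explicit treatment of the corner case $\widetilde n=0$, where $G_0=\mathbf{1}\mathbf{1}^\top$, which the paper passes over silently, while the paper's citation buys brevity and the slightly more general statement with $\min_i a_{ii}$ in place of $1$. Both arguments rest on the same two structural facts --- the factorization $G_{2+\widetilde n}=G_2\odot G_{\widetilde n}$ and the unit diagonal of $G_{\widetilde n}$ coming from $\|w_i\|_2=1$ --- and your conversion of the eigenvalue bound into $\|G_{2+\widetilde n}^{-1}\|\leq c_3$ via $\|G^{-1}\|=1/\lambda_{\min}(G)$ for symmetric positive definite $G$ matches the paper's conclusion exactly.
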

\begin{proof}
For each $n \in \N$ the matrix $G_n$ is a Grammian of the tensors $\{w_1^{\otimes n},\ldots,w_m^{\otimes n}\}$
and as such it is a positive semidefinite matrix. Since $\lambda_{\min}(A\odot B) \geq  \min_{i}a_{ii} \lambda_{\min}(B)$
for any pair of positive semidefinite matrices $A,B$, see \cite[Theorem 3]{bapat1985majorization}, we thus have
\begin{align*}
\lambda_{\min}(G_{2+\widetilde n}) = \lambda_{\min}(G_{2} \odot G_{\widetilde n}) \geq \lambda_{\min}(G_2)\min_{i}\langle w_i,w_i\rangle^{\widetilde n} =  \lambda_{\min}(G_2).
\end{align*}
\end{proof}

Let us conclude this section with an important auxiliary result. As said before we generally operate in a setting where the ground truth weight are sufficiently incoherent and fulfill \ref{enum:RIP} - \ref{enum:GInverse} of Definition \ref{def:assumptions_overcomplete}. It is clear that these properties translate to accurate approximations of the ground truth weights. The following result makes this explicit alongside with a few other minor technical results which will be used throughout the remaining proofs. 

\begin{lemma}[Incoherence of approximated Weights]\label{lemma:incoherence_apx_weigths}
Assume the ground truth weights $\{w_k \in \bbS^{D-1} | k\in [m] \}$ fulfill \ref{enum:RIP} - \ref{enum:GInverse} of Definition \ref{def:assumptions_overcomplete} with constants $c_2,c_3$ and that $D \leq m$. Then there exists a constant $C>0$ only depending on $c_2,c_3$ such that for approximations $\{\hw_k \in \bbS^{D-1} | k\in [m] \}$ which satisfy the error bound
\begin{align}
	\max_{k\in[m]}\min_{s \in \{-1,1\}} \norm{s \hw_k - {w}_k}_2 = \delta_{\max} \leq \frac{1}{C}  \frac{D^{1/2}}{m \sqrt{ \log m}  } \label{eq:assumption_for_inc_apxweights}
\end{align}
condition \ref{enum:correlation} - \ref{enum:GInverse} of Definition \ref{def:assumptions_overcomplete} holds with constants $2c_2 ,2c_3$. Furthermore, denote 
$\tilde{G}_n \in \R^{m\times m}$ the matrix with entries    $\tilde{G}_{n, \ell k } = \langle \hw_\ell, {s}_k{w}_k \rangle^n$, where $s_k$ are the ground truth signs.
Then there exists $D_0$ such that for $m\geq D\geq D_0$, $n=2,3$ the following holds true:  
\begin{enumerate}
    \item[(i)] For all $k\neq \ell$ we have $\langle \hw_k, s_\ell w_\ell \rangle^2 \leq 2c_2\log(m)/D$
    \item[(ii)]  $\tilde{G}_n$ is invertible and $\|\tilde{G}_n^{-1}\| \leq 3 c_2$  
    \item[(iii)]  Denote by $\tilde{G}_n \in \R^{m\times m}$ the matrix with entries    $\tilde{G}_{n, \ell k } = \langle \hw_\ell, {s}_k{w}_k \rangle^n$, then 
	
\begin{align}
	\norm{\tilde{G}_n - \widehat{G}_n} \leq C m \left( \frac{\log m}{D}\right)^{(2n - 1)/4} \delta_{\max}.
\end{align}
\end{enumerate}
\end{lemma}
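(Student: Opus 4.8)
The plan is to first absorb the signs: replacing each $\hw_k$ by $s_k\hw_k$ turns hypothesis \eqref{eq:assumption_for_inc_apxweights} into $\max_k\|\hw_k - w_k\|_2\le\delta_{\max}$ and replaces $\tilde{G}_n,\widehat{G}_n$ by the matrices with entries $\langle\hw_\ell,w_k\rangle^n$ and $\langle\hw_\ell,\hw_k\rangle^n$. So throughout I may assume $s_k=1$ and write $e_k:=w_k-\hw_k$, with $\|e_k\|_2\le\delta_{\max}$. Claim (i), and the transfer of \ref{enum:correlation} to the $\hw_k$, then follow from a one-line inner-product perturbation: for $k\neq\ell$, $\langle\hw_k,w_\ell\rangle=\langle w_k,w_\ell\rangle-\langle e_k,w_\ell\rangle$, so $|\langle\hw_k,w_\ell\rangle|\le\sqrt{c_2\log m/D}+\delta_{\max}$. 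The bound \eqref{eq:assumption_for_inc_apxweights} forces (for $C$ large, using $D\le m$) that $\delta_{\max}\le\sqrt{c_2\log m/D}$ and that both the cross term $2\delta_{\max}\sqrt{c_2\log m/D}$ and $\delta_{\max}^2$ are small fractions of $c_2\log m/D$; squaring gives $\langle\hw_k,w_\ell\rangle^2\le 2c_2\log m/D$, which is (i). Running the same expansion with error vectors in both slots gives $\langle\hw_k,\hw_\ell\rangle^2\le 2c_2\log m/D$, i.e.\ \ref{enum:correlation} with constant $2c_2$.

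The heart of the lemma is a single estimate for the operator norm of a difference of Hadamard-power Gram matrices, which will yield (ii), (iii), and the transfer of \ref{enum:GInverse} at once. For (iii), I apply the telescoping identity $a^n-b^n=(a-b)\sum_{j=0}^{n-1}a^jb^{\,n-1-j}$ entrywise with $a=\langle\hw_\ell,w_k\rangle$, $b=\langle\hw_\ell,\hw_k\rangle$, so $a-b=\langle\hw_\ell,e_k\rangle$. Since $\langle\hw_\ell,e_k\rangle\langle\hw_\ell,w_k\rangle^j\langle\hw_\ell,\hw_k\rangle^{\,n-1-j}=\langle\hw_\ell^{\otimes n},\,e_k\otimes w_k^{\otimes j}\otimes\hw_k^{\otimes(n-1-j)}\rangle$, this yields the factorization
\[
\tilde{G}_n-\widehat{G}_n=\sum_{j=0}^{n-1}\widehat{W}_n^\top Q^{(j)},
\]
where $\widehat{W}_n$ has columns $\hw_\ell^{\otimes n}$ and $Q^{(j)}$ has columns $e_k\otimes w_k^{\otimes j}\otimes\hw_k^{\otimes(n-1-j)}$. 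Now I bound each factor by Gershgorin applied to its Gram matrix: $\|\widehat{W}_n\|^2=\|\widehat{G}_n\|\le 1+(m-1)(2c_2\log m/D)^{n/2}$ (diagonal $1$, off-diagonal controlled by (i)); and $\|Q^{(j)}\|^2=\|(Q^{(j)})^\top Q^{(j)}\|$, where $(Q^{(j)})^\top Q^{(j)}$ has diagonal $\|e_k\|^2\le\delta_{\max}^2$ and off-diagonal entries $\langle e_k,e_{k'}\rangle\langle w_k,w_{k'}\rangle^j\langle\hw_k,\hw_{k'}\rangle^{n-1-j}$ bounded by $\delta_{\max}^2(2c_2\log m/D)^{(n-1)/2}$, so $\|Q^{(j)}\|^2\le\delta_{\max}^2\bigl(1+(m-1)(2c_2\log m/D)^{(n-1)/2}\bigr)$. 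In the regime $m\ge D$, $Cm\log^2 m\le D^2$ (which makes the $m$-terms dominate the $1$'s for $n\in\{2,3\}$), multiplying the two bounds gives $\|\tilde{G}_n-\widehat{G}_n\|\lesssim \delta_{\max}\,\sqrt{m(\log m/D)^{(n-1)/2}}\,\sqrt{m(\log m/D)^{n/2}}\le Cm(\log m/D)^{(2n-1)/4}\delta_{\max}$, which is (iii).

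Parts (ii) and the remaining conclusion follow from the identical mechanism applied to one- or two-slot perturbations. For (ii), the same factorization bounds $\|\tilde{G}_n-G_n\|$ by an expression of the same order, and the $\delta_{\max}$-assumption \eqref{eq:assumption_for_inc_apxweights} makes it $\le 2/(3c_3)$; since $G_n$ is symmetric positive definite with $\sigma_{\min}(G_n)=\lambda_{\min}(G_n)\ge 1/c_3$ by \ref{enum:GInverse}, Weyl's inequality gives $\sigma_{\min}(\tilde{G}_n)\ge 1/(3c_3)$, hence $\tilde{G}_n$ is invertible with $\|\tilde{G}_n^{-1}\|\le 3c_3$. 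For the transfer of \ref{enum:GInverse} to the $\hw_k$, the analogous two-slot telescoping bounds $\|\widehat{G}_n-G_n\|\le 1/(2c_3)$, so $\lambda_{\min}(\widehat{G}_n)\ge 1/(2c_3)$ and $\|\widehat{G}_n^{-1}\|\le 2c_3$; Lemma~\ref{lemma:grammian_order_nondec_minev} then extends this to all $n\ge 2$.

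The main obstacle is obtaining the sharp exponent $(2n-1)/4$ in (iii). A crude Gershgorin bound on the entries of $\tilde{G}_n-\widehat{G}_n$ would give exponent $(n-1)/2$, and bounding $\|Q^{(j)}\|$ by its Frobenius norm $\sqrt{m}\,\delta_{\max}$ would give exponent $n/4$; since $\log m/D<1$, both are strictly too large. The correct exponent is recovered only by estimating $\|Q^{(j)}\|$ through the operator norm of the Gram matrix $(Q^{(j)})^\top Q^{(j)}$, where the off-diagonal mass is suppressed \emph{simultaneously} by $|\langle e_k,e_{k'}\rangle|\le\delta_{\max}^2$ and by the weight incoherence. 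This supplies the extra spectral gain $(\log m/D)^{(n-1)/4}$ which, combined with the $(\log m/D)^{n/4}$ coming from $\|\widehat{W}_n\|$, produces $(\log m/D)^{(2n-1)/4}$. The remaining care is bookkeeping: one must verify in the parameter regime $m\ge D$, $Cm\log^2 m\le D^2$ that the Gershgorin "$1+{}$" terms are genuinely dominated by the $m$-terms for $n\in\{2,3\}$, so that the product of the two square-root bounds simplifies to the stated form.
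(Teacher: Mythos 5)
Your factorization $\tilde{G}_n-\widehat{G}_n=\sum_{j=0}^{n-1}\widehat{W}_n^\top Q^{(j)}$ is correct, and your treatment of (i), of the transfer of \ref{enum:correlation}--\ref{enum:GInverse}, and of (ii) is sound: for those claims only constant-size bounds on the Gram perturbations are needed, and your estimates deliver them; for $n=2$, part (iii) also comes out with the correct exponent $3/4$. The genuine gap is in (iii) for $n=3$. Your final step discards the ``$1+$'' terms in both Gershgorin bounds, asserting $1+m(\log m/D)^{n/2}\lesssim m(\log m/D)^{n/2}$ for $n\in\{2,3\}$. For the factor $\norm{\widehat{W}_3}=\norm{\widehat{G}_3}^{1/2}$ this requires $m(\log m/D)^{3/2}\gtrsim 1$, i.e.\ $m\gtrsim D^{3/2}/(\log m)^{3/2}$, which is \emph{not} implied by the hypotheses: the lemma assumes only $m\geq D$, and for $m\asymp D$ one has $m(\log m/D)^{3/2}=(\log m)^{3/2}/D^{1/2}\to 0$. (Your parenthetical appeal to $Cm\log^2m\leq D^2$ points the wrong way: that is an \emph{upper} bound on $m$, so it makes these terms smaller, not dominant.) In the regime $D\leq m\ll D^{3/2}/(\log m)^{3/2}$ your product bound therefore degrades to roughly $\delta_{\max}\sqrt{m\log m/D}$, which exceeds the claimed bound $Cm(\log m/D)^{5/4}\delta_{\max}$ by the diverging factor $D^{3/4}/(m^{1/2}(\log m)^{3/4})$. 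Moreover, no sharper bookkeeping inside your scheme can repair this: $\widehat{W}_3$ has unit-norm columns, so $\norm{\widehat{W}_3}\geq 1$, and every column of $Q^{(j)}$ has norm $\norm{w_k-\hw_k}_2$, so $\norm{Q^{(j)}}\geq\delta_{\max}$; hence the submultiplicative bound is always at least $\delta_{\max}$, while the right-hand side of (iii) is $o(\delta_{\max})$ whenever $m\ll D^{5/4}/(\log m)^{5/4}$. The loss is intrinsic to putting all $n$ powers of $\hw_\ell$ into one factor: the direction responsible for $\norm{\widehat{W}_3}\approx 1$ (the columns themselves) pairs in the product only with the tiny diagonal entries $\langle\hw_\ell,e_\ell\rangle\langle\hw_\ell,w_\ell\rangle^{j}=O(\delta_{\max}^2)$, a cancellation the operator-norm product bound cannot see.

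The paper's proof avoids ever forming an $n$-th power Gram matrix. It splits $\tilde{G}_n-\widehat{G}_n$ into diagonal plus off-diagonal; the diagonal is $O(\delta_{\max}^2)$ and harmless. For the off-diagonal part it applies the same telescoping identity \emph{entrywise}: for $k\neq\ell$ every factor $\langle\hw_k,\hw_\ell\rangle$ and $\langle\hw_k,s_\ell w_\ell\rangle$ appearing in the telescoping sum is incoherent (by \ref{enum:correlation} for the approximations together with your (i)), so the full degree-$(n-1)$ scalar prefactor $n(2c_2\log m/D)^{(n-1)/2}$ is pulled out of each entry, leaving the row sums $\sum_{k\neq\ell}\snorm{\inner{\hw_k,\hw_\ell-s_\ell w_\ell}}$. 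These are bounded via Cauchy--Schwarz by $\sqrt{m}\,\norm{\wW}\,\delta_{\max}$, and the only ``$1+$'' domination ever invoked is $1+m(2c_2\log m/D)^{1/2}\lesssim m(2c_2\log m/D)^{1/2}$ for $\norm{\wW}^2$, which genuinely holds for all $m\geq D$. This yields $(\log m/D)^{(n-1)/2}\cdot m(\log m/D)^{1/4}=m(\log m/D)^{(2n-1)/4}$ uniformly for $n=2,3$. The moral: concentrate all the incoherence into a per-entry scalar factor and reserve the spectral/frame argument for the \emph{first-power} matrix $\wW$, rather than distributing the powers between two factors whose operator norms are then multiplied.
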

\begin{proof}
W.l.o.g. we can assume that $C$ is choosen such that 
\begin{align}
	\max_{k\in[m]}\min_{s \in \{-1,1\}} \norm{s \hw_k - w_k}_2 = \delta_{\max} \leq  \min\left\lbrace \frac{1}{8} \left( \frac{c_2 \log m}{D}\right)^{1/2}, \frac{D^{1/2}}{8c_3 m \sqrt{2 c_2  \log m}  }\right\rbrace 
\end{align} holds.
We start by showing \ref{enum:correlation} for the approximated weights. Pick any $k,\ell \in [m], k\neq \ell$. A first observation is that we can disregard the sign that appears in \eqref{eq:assumption_for_inc_apxweights} since $\inner{\hw_k, \hw_\ell}^2 = \inner{-\hw_k, \hw_\ell}^2$ . So w.l.o.g. assume that both signs are correct and therefore $\norm{\hw_k - w_k}_2 \leq \delta_{\max}$ and $\norm{\hw_\ell - {w}_\ell}_2 \leq \delta_{\max}$. Then
\begin{align}
\inner{\hw_k, \hw_\ell}^2 &\leq  \left(  \snorm{\inner{{w}_k, {w}_\ell}} + \snorm{\inner{\hw_k - {w}_k, {w}_\ell}} +\snorm{\inner{ {w}_k, \hw_\ell  - w_\ell}} +\snorm{\inner{\hw_k - w_k,\hw_\ell -  {w}_\ell}} \right)^2 \label{eq:proof_apx_weight_incoherence_componentwise} \\
&\leq \left( \snorm{\inner{{w}_k, {w}_\ell}} + 2\delta_{\max} + \delta_{\max}^2  \right)^2 \leq \snorm{\inner{w_k, w_\ell}}^2 + 6\delta_{\max} \snorm{\inner{w_k, w_\ell}} + 9\delta_{\max}^2 \notag  \\
&\leq \frac{c_2 \log m}{D} + 6 \left( \frac{c_2 \log m}{D}\right)^{1/2}\delta_{\max} + 9 \delta_{\max}^2 \notag  \\ &\leq  \frac{c_2 \log m}{D} + \frac{48+9}{64} \frac{c_2 \log m}{D}  \leq  \frac{2 c_2 \log m}{D} \notag ,
\end{align}
which proves that \ref{enum:correlation} is fulfilled by the approximated weights for a constant $2c_2$. 
Moving on to \ref{enum:GInverse}, we need to bound the minimal eigenvalue of $\widehat G_n = (\wW^\top \wW)^{\odot n}$ from below. Assuming $\widehat G_2$ is invertible, we know by Lemma \ref{lemma:grammian_order_nondec_minev} that 
\begin{align*}
    \| \widehat G_n^{-1}\| \leq \| \widehat G_2^{-1} \| \quad \text{ for all } \quad n\geq 2.
\end{align*}
Thus, it is sufficient to show that \ref{enum:GInverse} holds for the approximated weigths for $n=2$. Denote $ G_2 = (W^\top W)^{\odot 2}$. Clearly $G_2, \widehat G_2$ are symmetric, and since \ref{enum:GInverse} holds for the ground truth weights we know that the minimal eigenvalue of $G_2$ can be bounded by a constant $\snorm{\sigma_m({G}_2)} \geq c_3^{-1}$. Hence, by Weyl's inequality we have
\begin{align}\label{eq:starting_with_weyl_inc}
\snorm{\sigma_m(\widehat G_2)} \geq c_3^{-1} - \norm{\widehat G_2 - {G}_2}.
\end{align}
Our goal is to find an upper bound the spectral norm on the right hand side,. Note that the diagonal of both matrices is identical due to the fact that all columns of $\wW$ and ${W}$ have unit norm, so we focus on the off diagonal exclusively. Via Gershgorin's circle theorem we attain
\begin{align*}
 \norm{\widehat G_n - {G}_n} &\leq \max_{k\in[m]} \sum^m_{\substack{\ell=1 \\ \ell \neq k}}\snorm{ \inner{\hw_k, \hw_\ell}^2 - \inner{w_k, w_\ell}^2 } \\&= \max_{k\in[m]} \sum^m_{\substack{\ell=1 \\ \ell \neq k}}\snorm{ \inner{s_k \hw_k, s_\ell \hw_\ell}^2 - \inner{ w_k, w_\ell}^2 } \\
 &\leq \max_{k\in[m]} \sum^m_{\substack{\ell=1 \\ \ell \neq k}}\snorm{ \inner{s_k\hw_k, s_\ell \hw_\ell} + \inner{w_k, w_\ell}}\snorm{ \inner{s_k \hw_k, s_\ell \hw_\ell} - \inner{w_k, w_\ell} } \\
 &\leq  2 \left(\frac{2 c_2 \log m }{D}\right)^{1/2}\max_{k\in[m]} \sum^m_{\substack{\ell=1 \\ \ell \neq k}}\snorm{ \inner{s_k \hw_k - w_k, s_\ell \hw_\ell} - \inner{w_k, s_\ell \hw_\ell - w_\ell} } \\
 &\leq 4 \left(\frac{2 c_2 \log m }{D}\right)^{1/2} m \cdot \delta_{\max} \leq \frac{1}{2c_3},
 \end{align*}
 where we used the fact that \ref{enum:correlation} holds for the ground truth weights and approximated weights in the penultimate inequality followed by the uniform bound in \eqref{eq:assumption_for_inc_apxweights} at the end. We conclude with Weyl's inequality which yields 
\begin{align}
\snorm{\sigma_m(\widehat G_2^{-1})} \leq  \snorm{\sigma_1(\widehat G_2)}^{-1} \leq 2c_3 .
\end{align}
Hence, the approximated weights fulfill \ref{enum:GInverse} with constant $2c_3$ for $n=2$ which extends to $n\geq 2$ by Lemma \ref{lemma:grammian_order_nondec_minev}. 
Let us now proof $(i)-(iii)$. The first statement follows directly from our proof of \ref{enum:correlation} for the approximated weights, since for any $k\neq \ell$ we have 
\begin{align*}
\langle \hw_k, s_\ell {w}_\ell \rangle^2 \leq \left( \snorm{\langle w_\ell, w_k \rangle} + \snorm{ \langle \hw_\ell - {w}_\ell, 
{w}_k \rangle }\right)^2  \leq  \left( \snorm{\langle {w}_\ell, {w}_k \rangle} + \delta_{\max} \right)^2 \leq \frac{2c_2 \log m}{D},
\end{align*}
follows by the chain of inequalities started in \eqref{eq:proof_apx_weight_incoherence_componentwise}. To show $(iii)$ we first split the difference $\tilde{G}_n - \widehat{G}_n = D_n + O_n$ into a diagonal part $D_n$ and an off-diagonal part $O_n$. We have $\| \tilde{G}_n - \widehat{G}_n \| \leq  \| D_n \| + \| O_n \|$ and start by controlling $\| O_n \|$ via Gershgorin's circle theorem: 
\begin{align*}
\| O_n \| &\leq \max_{\ell \in[m]} \sum^m_{\substack{k=1 \\k \neq \ell}}\snorm{ \inner{\hw_k, \hw_\ell}^n - \inner{\hw_k, s_\ell w_\ell}^n }\\
&\leq \max_{\ell \in[m]} \sum^m_{\substack{k=1 \\k \neq \ell}} \snorm{\inner{\hw_k, \hw_\ell} - \inner{\hw_k, s_\ell w_\ell}}  \snorm{ \sum^{n}_{i=1} \inner{\hw_k, \hw_\ell}^{n-i} \inner{\hw_k, s_\ell w_\ell}^{i-1}  } \\
&\leq  n \left(\frac{2 c_2 \log m }{D}\right)^{(n-1)/2} \max_{\ell \in[m]} \sum^m_{\substack{k=1 \\k \neq \ell}} \snorm{\inner{\hw_k, \hw_\ell - s_\ell w_\ell}}.
\end{align*}
From here we can slightly improve over Cauchy-Schwarz, and instead use that
\begin{align*}
\sum^m_{\substack{k=1\\k\neq \ell}} \snorm{\inner{\hw_k, \hw_\ell - s_\ell w_\ell}} \leq \sqrt{m-1} \sqrt{\sum^m_{\substack{k=1\\k\neq \ell}} \inner{\hw_k, \hw_\ell - s_\ell w_\ell}^2 } \leq \sqrt{m} \norm{\wW} \delta_{\max}. 
\end{align*}
Using $\|\wW\|  =  \|\wW^\top \wW\|^{1/2} \leq \left(1+ m\left(\frac{2c_2 \log m}{D}\right)^{1/2}\right)^{1/2}$ we arrive at the following bound for the off-diagonal terms: 
\begin{align*}
    \| O_n \| &\leq n \left(\frac{2 c_2 \log m }{D}\right)^{(n-1)/2} \sqrt{m}  \left(1+ m\left(\frac{2c_2 \log m}{D}\right)^{1/2}\right)^{1/2}\delta_{\max}\\
	&\leq C n m \left(\frac{ \log m }{D}\right)^{(n-1)/2}   \left(\frac{ \log m}{D}\right)^{1/4}\delta_{\max} \leq C n m \left( \frac{\log m}{D}\right)^{(2n - 1)/4} \delta_{\max},
\end{align*}
where $C>0$ is an absolute constant only depending on $c_2$ and $m\geq D$ was used in the second inequality.
For the diagonal part we receive 
\begin{align*}
    \|D_n \| = \left\vert 1 - \max_{\ell \in[m]} \snorm{ \inner{\hw_\ell , {w}_\ell  }}^n \right\vert \leq  \snorm{1 - (1-\delta_{\max}^2 / 2 )^n}.
\end{align*}
Hence, we attain overall 
\begin{align*}
	\norm{\tilde{G}_n - \widehat{G}_n} &\leq   \snorm{1 - (1-\delta_{\max}^2 / 2 )^n}  +   C n m \left( \frac{\log m}{D}\right)^{(2n - 1)/4} \delta_{\max}.
	\end{align*}
	For $n=2,3$ and some constant $C_1 > 0$ depending only on $c_2$ this can be further simplified using the bound on $\delta_{\max}$ as
	\begin{align*}
	\norm{\tilde{G}_n - \widehat{G}_n} 
	&\leq \delta_{\max}^2 + C n m \left( \frac{\log m}{D}\right)^{(2n - 1)/4} \delta_{\max}\\ 
	&\leq C_1 m \left( \frac{\log m}{D}\right)^{(2n - 1)/4} \delta_{\max},
\end{align*}
which confirms (iii). To prove (ii) we need to show that $\norm{\tilde{G}_n - \widehat{G}_n}  \leq c_4 / 2$ from which the rest follows as before by Weyl's inequality. We can reuse (iii) in combination with \ref{eq:assumption_for_inc_apxweights} obtaining 
\begin{align*}
\norm{\tilde{G}_n - \widehat{G}_n}  \leq  C_1 m \left( \frac{\log m}{D}\right)^{(2n - 1)/4} \delta_{\max}\leq C_2 \left(\frac{\log m}{D}\right)^{1/4}
\end{align*} 
for some constant $C_2$. Hence (ii) is true for $D\geq D_0$ sufficiently large. 
\end{proof}

\section{Proofs: Parameter initialization}\label{sec:proofs_initialization}

\begin{algorithm}[t]
	\KwIn{Approximated weights $\wW$, numerical differentiation schema $\Delta^n[\cdot]$ with accuracy $\epsilon>0$, interval on which $g^{(2)}$ is monotonic $[-\tau_\infty, +\tau_\infty]$}
	Set $\widehat G_2  \leftarrow ( \wW^\top \wW )^{\odot 2}$,$\widehat G_{3}  \leftarrow ( \wW^\top \wW )^{\odot 3}$ \\
	\For{$k=1,\ldots, m$}{
		Compute directional derivative approximations $\tilde T_{2,k} = \Delta^2[f(\cdot \hw_k )](0), \tilde T_{3,k} = \Delta^{3}[f(\cdot \hw_k )](0)$
	}
	Set $\tilde{\mathcal{C}}_2 \leftarrow \widehat G_2^{-1} \tilde T_2$, $\tilde {\mathcal{C}}_{3} \leftarrow \widehat G_{3}^{-1} \tilde T_{3}$\\
	\For{$k=1, \dots, m$}{
		\begin{align}
			\tilde{\tau}_k &\leftarrow{\begin{cases}
					{(g^{(2)})}^{-1}(\tilde{\mathcal{C}}_{2,k}), &\text{if }  {(g^{(2)})}^{-1} \text{ is defined for } \tilde{\mathcal{C}}_{2,k},\\
					\operatorname{argmin}_{t \in [-{\tau}_\infty, \tau_{\infty}]} \snorm{ g^{(2)}(t) - \tilde{\mathcal{C}}_{2,k} } & \text{else},
			\end{cases}}\\
			\tilde{s}_k &\leftarrow \operatorname{sign}\Big(\tilde{\mathcal{C}}_{3, k}\cdot g^{(3)}(0) \Big),\label{def:init_signs} \end{align}
	}
	\KwOut{$\tilde{\tau}, \tilde{s}$}
	\caption{\textbf{Parameter Initialization}}
	\label{alg:initialization}
\end{algorithm}


In this section we proof Proposition \ref{prop:initialization}, which asses the quality of shifts computed by Algorithm \ref{alg:initialization}. These initial shifts will later be used as an initialization for gradient descent (cf. Section \ref{sec:proofs_local}). 
\paragraph{Proof Sketch of Proposition \ref{prop:initialization}}
As discussed in Section \ref{sec:initialization}, goal of Algorithm \ref{alg:initialization} is to recover the vectors
\begin{align*}
	\mathcal{C}_2 = g^{(2)}(\tau), \quad \text{and} \quad \mathcal{C}_3 = s \odot g^{(3)}(\tau).
\end{align*}
This recovery is only possible up to a approximations $\tilde{\mathcal{C}}_2, \tilde{\mathcal{C}}_3$ due to perturbations accumulated in the weight recovery and errors caused by the numerical approximation of derivatives. The proof begins with an auxiliary statement, namely Lemma \ref{lemma:approximation_S_error}, that develops an upper bound on $\|\mathcal{C}_n -  \tilde{\mathcal{C}}_n \|_2$  ($n=2,3$) assuming that the weight recovery achieved a certain level of accuracy. The proof of Proposition \ref{prop:initialization} will then utilize the properties of the activation function (\ref{enum:activation}-\ref{enum:nonpolynomoial}) to show that the shifts $\tau$ can be approximated by using the components of $\tilde{\mathcal{C}}_2 \approx g^{(2)}(\tau)$, whereas the signs of the original weights are revealed by $\tilde{\mathcal{C}}_3 \approx s \odot g^{(3)}(\tau)$. 

\begin{lemma}\label{lemma:approximation_S_error} Denote by $\mathcal{\tilde{C}}_n$ the coefficient vectors computed by Algorithm \ref{alg:initialization} for an input network ${f}$ 
with ground truth weights $\{w_k \in \bbS^{D-1} | k\in [m] \}$ which fulfill \ref{enum:correlation} - \ref{enum:GInverse} of Definition \ref{def:assumptions_overcomplete} with constants $c_2,c_3$ and activation $g$ that fulfills \ref{enum:activation}. Then there exist constants $C>0$ only depending on $g, c_2,c_3$ and $D_0\in \N$, such that for $m \geq D\geq D_0, m \log^2 m \leq D^2$, $n=2,3$ and provided approximations $\{\hw_k \in \bbS^{D-1} | k\in [m] \}$ to the ground truth weights such that 
\begin{align}\label{eq:bound_delta_max_arbb}
	\max_{k\in[m]}\min_{s \in \{-1,1\}} \norm{s \hw_k - w_k}_2 = \delta_{\max} \leq \frac{1}{C} \frac{D^{1/2}}{m \sqrt{ \log m}  },
\end{align}
	we have
	\begin{align}\label{eq:approximation_S_error_main}
		\norm{\tilde{\mathcal{C}}_n -  {s}^n \odot g^{(n)}({\tau})}_2 \leq C\sqrt{m}\epsilon + C m^{3/2} \left( \frac{\log m}{D}\right)^{(2n - 1)/4} \delta_{\max},
	\end{align}
	where $s$ is the vector storing the true signs that are implied by \eqref{eq:bound_delta_max_arbb}.
\end{lemma}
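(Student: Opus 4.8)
The plan is to compare directly the vector $\tilde{\mathcal{C}}_n = \widehat{G}_n^{-1}\tilde{T}_n$ produced by Algorithm \ref{alg:initialization} against the exact coefficients $\mathcal{C}_n = s^n\odot g^{(n)}(\tau)$. The starting point is the identity $\tilde{G}_n\,\mathcal{C}_n = T_n$ from \eqref{eq:Tn}, which lets me write $\mathcal{C}_n = \tilde{G}_n^{-1}T_n$; this is legitimate because the hypothesis \eqref{eq:bound_delta_max_arbb} is precisely the assumption \eqref{eq:assumption_for_inc_apxweights} needed for part (ii) of Lemma \ref{lemma:incoherence_apx_weigths}, so $\tilde{G}_n$ is invertible. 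Adding and subtracting $\widehat{G}_n^{-1}T_n$ and using the resolvent identity $\widehat{G}_n^{-1}-\tilde{G}_n^{-1}=\widehat{G}_n^{-1}(\tilde{G}_n-\widehat{G}_n)\tilde{G}_n^{-1}$ yields
\begin{align*}
\tilde{\mathcal{C}}_n-\mathcal{C}_n = \widehat{G}_n^{-1}\bigl(\tilde{G}_n-\widehat{G}_n\bigr)\mathcal{C}_n + \widehat{G}_n^{-1}\bigl(\tilde{T}_n-T_n\bigr),
\end{align*}
where in the first (``weight error'') term I have already collapsed $\tilde{G}_n^{-1}T_n=\mathcal{C}_n$. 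This is the key simplification, since it lets me avoid bounding either $\|\tilde{G}_n^{-1}\|$ or the potentially large norm of $T_n$ on its own.

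For the second (``numerical error'') term, Lemma \ref{lemma:incoherence_apx_weigths} gives $\|\widehat{G}_n^{-1}\|\le 2c_3$, so it remains to control $\|\tilde{T}_n-T_n\|_2$. Writing $b_k:=\langle w_k,\hw_\ell\rangle$, the $\ell$-th entry of $T_n$ is the directional derivative $\frac{d^n}{dt^n}f(t\hw_\ell)\big|_{t=0}=\sum_{k=1}^m b_k^{\,n}g^{(n)}(\tau_k)$, while $\tilde{T}_{n,\ell}=\Delta^n[f(\cdot\,\hw_\ell)](0)$. By the linearity of $\Delta^n$ and by applying \eqref{eq:numerical_diff_prop_sapprox} to each shifted activation $t\mapsto g(b_k t+\tau_k)$ (which is still $\CC^3$ with the same bound \eqref{eq:def_kappa}), the error per summand is at most $C_\Delta|b_k|^{\,n+2}\epsilon$, whence
\begin{align*}
|\tilde{T}_{n,\ell}-T_{n,\ell}|\le C_\Delta\,\epsilon\sum_{k=1}^m|\langle w_k,\hw_\ell\rangle|^{\,n+2}.
\end{align*}
Separating the near-aligned index (for which $|\langle w_k,\hw_\ell\rangle|\le 1$) from the remaining ones and invoking part (i) of Lemma \ref{lemma:incoherence_apx_weigths} bounds the sum by $1+(m-1)(2c_2\log m/D)^{(n+2)/2}$, which is $O(1)$ in the regime $m\log^2 m\le D^2$ for $n=2,3$. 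Thus $\|\tilde{T}_n-T_n\|_2\le C\sqrt m\,\epsilon$, and the numerical term is of order $\sqrt m\,\epsilon$.

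For the weight error term I combine three estimates: $\|\widehat{G}_n^{-1}\|\le 2c_3$; the bound $\|\tilde{G}_n-\widehat{G}_n\|\le Cm(\log m/D)^{(2n-1)/4}\delta_{\max}$ from part (iii) of Lemma \ref{lemma:incoherence_apx_weigths}; and $\|\mathcal{C}_n\|_2\le\kappa\sqrt m$, since each entry equals $|g^{(n)}(\tau_k)|\le\kappa$ by \eqref{eq:def_kappa}. Multiplying these yields a bound of order $m^{3/2}(\log m/D)^{(2n-1)/4}\delta_{\max}$, and adding the two contributions gives exactly \eqref{eq:approximation_S_error_main}. I expect the delicate point to be the numerical-error estimate: one must correctly absorb the biases $\tau_k$ into the activation before invoking \eqref{eq:numerical_diff_prop_sapprox}, and then verify that the power sum $\sum_k|\langle w_k,\hw_\ell\rangle|^{n+2}$ stays bounded in the overcomplete regime. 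This is where incoherence (part (i)) and the scaling $m\log^2 m\le D^2$ are essential, as they prevent the $m-1$ off-diagonal contributions from accumulating and dominating the aligned term.
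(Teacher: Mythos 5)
Your proposal is correct and follows essentially the same route as the paper's own proof: the identical resolvent decomposition $\tilde{\mathcal{C}}_n-\mathcal{C}_n = \widehat{G}_n^{-1}(\tilde{G}_n-\widehat{G}_n)\mathcal{C}_n + \widehat{G}_n^{-1}(\tilde{T}_n-T_n)$ with the collapse $\tilde{G}_n^{-1}T_n=\mathcal{C}_n$, the same absorption of the shifts into the activation before invoking \eqref{eq:numerical_diff_prop_sapprox} together with incoherence to get $\norm{\tilde{T}_n-T_n}_2\leq C\sqrt{m}\,\epsilon$, and the same combination of $\norm{\widehat{G}_n^{-1}}\leq 2c_3$, part (iii) of Lemma \ref{lemma:incoherence_apx_weigths}, and $\norm{\mathcal{C}_n}_2\leq\kappa\sqrt{m}$ for the weight-error term. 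The only cosmetic difference is that you make the invertibility of $\tilde{G}_n$ (part (ii)) explicit, which the paper uses implicitly.
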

\begin{proof}[Proof of Lemma \ref{lemma:approximation_S_error}]
	Denote as in Algorithm \ref{alg:initialization} $\tilde T_{n,k} = \Delta^n[f(\cdot \hw_k )](0)$ and $T_{n,k} = \langle \nabla^n {f}(0), \hw_k^{\otimes n}\rangle$. By their definition and the linearity of $\nabla^n, \Delta^n$ we have
	\begin{align}
		\norm{T_n - \tilde{T}_n}_\infty &= \sup_{k\in [m]} \snorm{\langle \nabla^n f(0), \hw_k^{\otimes n}\rangle -  \Delta^n_\epsilon [f(\hw_k \cdot)](0) }\\
		&\leq \sup_{k\in [m]} 
			\sum^m_{\ell=1} \snorm{  \left. \frac{\partial^n}{\partial t^n}  g(\langle \hw_k, w_\ell \rangle t + {\tau}_\ell)\right|_{t=0} - \Delta^n [g(\langle \hw_k, w_\ell \rangle \cdot + {\tau}_\ell)](0)}\\
		&\leq C_\Delta \epsilon   \sup_{k\in [m]} 
					\sum^m_{\ell=1}   \snorm{\inner{\hw_k, {w}_\ell }}^{n+2}  \leq C_\Delta \epsilon \left(1 + m \left(\frac{ 2 c_2 \log m }{D}\right)^{\frac{n+2}{2}} \right),
	\end{align}
	where we used the second point of \ref{enum:numerical_diff} in the last line followed by the incoherence of the apx. weights  \ref{enum:correlation} established in Lemma \ref{lemma:incoherence_apx_weigths}. Making use of $D^2 \geq m \log^2 m$, this simplifies to 
	\begin{align*}
	\norm{T_n - \tilde{T}_n}_\infty \leq C_1\cdot \epsilon
	\end{align*}
	with constant $C_1 = (1+ 4c_2^2)C_\Delta$ for $n=2,3$.  Coming back to our initial objective, we can express $s^n \odot g^{(n)}({\tau})$ as the product ${s}^n \odot g^{(n)}({\tau}) = T_n \tilde{G}_n$ where $\tilde{G}_n$ describes the matrix with entries given by $(\tilde{G}_n)_{k\ell} = \langle \hw_k, {s}_\ell {w}_\ell\rangle^n$. Note that Algorithm \ref{alg:initialization} constructs $\tilde{\mathcal{C}}_n  = \widehat G_n^{-1}\tilde{T}_n$, where $(\widehat{G}_n)_{k\ell} = \langle \hw_k,  {\hw}_\ell\rangle^n$. We can reduce our main statement \eqref{eq:approximation_S_error_main} into separate bounds
	\begin{align}
		\norm{\tilde{\mathcal{C}}_n -  {s}^n \odot g^{(n)}({\tau})}_2 &= \norm{\widehat G_n^{-1}\tilde{T}_n - \tilde{G}_n^{-1}T_n}_2 \\
	    &\leq \norm{\widehat G_n^{-1}(T_n - \tilde{T}_n)}_2 + \norm{(\widehat G_n^{-1} -\tilde{G}_n^{-1}) T_n }_2\\
		&\leq \sqrt{m} \norm{\widehat G_n^{-1}} \norm{T_n - \tilde{T}_n}_\infty + \norm{(\widehat G_n^{-1} -\tilde{G}_n^{-1}) T_n }_2 
		\\&\leq C_1 \sqrt{m} \cdot \epsilon + \norm{(\widehat G_n^{-1} -\tilde{G}_n^{-1}) T_n }_2 
	\end{align}
	To bound $\norm{(\widehat G_n^{-1} - \tilde{G}^{-1}_n) T_n }_2$ we first decompose according to
	\begin{align}
		\norm{(\widehat G_n^{-1} - \tilde{G}^{-1}_n) T_n }_2 = \norm{\widehat G_n^{-1}(\widehat G_n - \tilde{G}_n)\tilde{G}^{-1}_n T_n}_2 = \norm{\widehat G_n^{-1}(\widehat G_n - \tilde{G}_n) ({s}^n \odot g^{(n)}({\tau}))}_2.
	\end{align}
By invoking  Definition \ref{enum:GInverse} again we continue with
\begin{align}
	\norm{\widehat G_n^{-1}(\widehat G_n - \tilde{G}_n) ({s}^n \odot g^{(n)}({\tau}))}_2 \leq 2c_3 \norm{\widehat G_n - \tilde{G}_n} \norm{{s}^n \odot g^{(n)}({\tau})}_2 \leq 2c_3 \kappa \sqrt{m} \norm{\widehat G_n - \tilde{G}_n},
\end{align}
where we used $\norm{g^{(n)}}_\infty \leq \kappa$. The statement then follows by using inequality (iii) of Lemma \ref{lemma:incoherence_apx_weigths} onto $\norm{\widehat G_n - \tilde{G}_n}$ and unifying the involved constants.
\end{proof}

We are now ready to prove the main result for the parameter initialization.
\begin{proof}[Proof of Proposition \ref{prop:initialization}]
First note that due the assumptions made, we can freely apply the results of Lemma \ref{lemma:incoherence_apx_weigths} and Lemma \ref{lemma:approximation_S_error}. As a consequence the approximated weights considered in the statement of Proposition \ref{prop:initialization} fulfill \ref{enum:correlation}-\ref{enum:GInverse} of Definition \ref{def:assumptions_overcomplete} with constants derived from the ground truth weights as described in Lemma \ref{lemma:incoherence_apx_weigths}. We continue with the remark that \ref{enum:activation} guarantees the existence of the inverse function ${g^{(2)}}^{-1}$ on $[-{\tau}_{\infty}, {\tau}_{\infty}]$ and here we can disregard the signs such that 
	\begin{align}
		{g^{(2)}}^{-1} ({s}^2 \odot g^{(2)}({\tau})) = {g^{(2)}}^{-1} \left( 1 \odot g^{(2)}(\tau) \right) = {\tau}
	\end{align}
	While ${s}^2 \odot g^{(2)}({\tau})$ is not directly available, $\tilde{\mathcal{C}}_2$ serves as an approximation $\tilde{\mathcal{C}}_2 \approx {s}^2 \odot g^{(2)}({\tau})$. Fix any $k\in [m]$, and assume that 
	
	\begin{align}\label{eq:interval_init_proof} 
	\tilde{\mathcal{C}}_{2,k} \in \left[\min_{t\in [-\tau_{\infty}, +\tau_{\infty}]}g^{(2)}(t), \max_{t\in [-\tau_{\infty},+\tau_{\infty}]}g^{(2)}(t)\right],
	\end{align}
	then by the mean value theorem
	\begin{align*}
		{\htau}_k &=   {g^{(2)}}^{-1}(\tilde{\mathcal{C}}_{2,k}) = {g^{(2)}}^{-1}\left(g^{(2)}(\tau_k) + \tilde{\mathcal{C}}_{2,k} - g^{(2)}(\tau_k)\right) \\&= {g^{(2)}}^{-1}\left(g^{(2)}(\tau_k) \right)+
		\left(\tilde{\mathcal{C}}_{2,k} - g^{(2)}(\tau_k)\right) ({g^{(2)}}^{-1})'(\xi_k)\\
		&= {\tau}_k + \left(\tilde{\mathcal{C}}_{2,k} - g^{(2)}(\tau_k)\right) \frac{1}{g^{(3)}\left({g^{(2)}}^{-1}(\xi_k)\right)}
	\end{align*}
	for some $\xi_k\in \left[\min_{t\in [-\tau_{\infty}, +\tau_{\infty}]}g^{(2)}(t), \max_{t\in [-\tau_{\infty},+\tau_{\infty}]}g^{(2)}(t)\right]$. Since $g^{(2)}$ is strictly monotonic on $[-{\tau}_{\infty}, {\tau}_{\infty}]$ and differentiable we have
	\begin{align*}
	\theta := \max_{t\in [-{\tau}_{\infty}, {\tau}_{\infty}]}\snorm{g^{(3)}(t)} > 0.
	\end{align*}
	 Hence, we can bound $\snorm{\frac{1}{g^{(3)}\left({g^{(2)}}^{-1}(\xi_k)\right)}} \leq \theta^{-1}$ from the outgoing assumption \ref{enum:activation}.  Applying Lemma \ref{lemma:approximation_S_error} to bound $\norm{g^{(2)}(\tau) - \tilde{\mathcal{C}}_2}_2$ therefore yields 
	\begin{align}\label{eq:bound_tau}
		\norm{\hat{\tau} - {\tau}}_2 \leq
		\theta^{-1} \left(C\sqrt{m}\epsilon + C m^{3/2} \left( \frac{\log m}{D}\right)^{3/4} \delta_{\max} \right)
	\end{align}
	Now assume there is a $k\in [m]$ such that $\tilde{\mathcal{C}}_{2,k}$ does not satisfy \eqref{eq:interval_init_proof}. By the monotonicity we also know that the maximal and minimal value of $g^{(2)}$ are found exactly on $\pm \tau_\infty$. If $\tilde{\mathcal{C}}_{2,k}$ does not lie in the image of $g^{(2)}$ on $[-\tau_{\infty}, +\tau_{\infty}]$ it has to exceed one of those. We can assume w.l.o.g. that $\tilde{\mathcal{C}}_{2,k} > \max_{t\in [-\tau_{\infty}, +\tau_{\infty}]}g^{(2)}(t) = g^{(2)}(\tau_\infty)$. Then
	\begin{align*}
		\snorm{g^{(2)}(\tau_\infty)- g^{(2)}({\tau}_k)} < \snorm{\tilde{\mathcal{C}}_{2,k} - g^{(2)}({\tau}_k)},
	\end{align*}
	which shows that $g^{(2)}(\tau_\infty)$ is simply a better estimate of $g^{(2)}({\tau}_k)$ than $\tilde{\mathcal{C}}_{2,k}$, and ${g^{(2)}}^{-1}$ is also defined for $g^{(2)}(\tau_\infty)$. Hence, the same error bound as above holds for all $k\in [m]$.
	The expression in \eqref{def:init_signs} yields the correct sign if $\operatorname{sign}(\tilde{\mathcal{C}}_{3,k}) = \operatorname{sign}({s}_k^{(3)} )\cdot \operatorname{sign}( g^{(3)}({\tau}_k))=\operatorname{sign}({s}_k )\cdot \operatorname{sign}( g^{(3)}({\tau}_k))$.
	This is the case if
	\begin{align}\label{eq:needed_for_signs}
		\snorm{{s}_k^{(3)} \cdot  g^{(3)}({\tau}_k)} > \snorm{s_k^{(3)} \cdot  g^{(3)}({\tau}_k)- \tilde{\mathcal{C}}_{3,k}}.
	\end{align}
	By our outgoing assumption $\snorm{s_k^{(3)} \cdot  g^{(3)}({\tau}_k)}\geq \theta$ and together with Lemma \ref{lemma:approximation_S_error} applied to the right hand side of the inequality above we get that the signs are correct as long as 
	\begin{align}\label{eq:prop_interediate_bound_signs}
	 \theta >  \left(C\sqrt{m}\epsilon + C m^{3/2} \left( \frac{\log m}{D}\right)^{5/4} \delta_{\max}\right).
	\end{align}
	Assume now that the RHS of  \eqref{eq:hattau} is smaller than $1$ and $\epsilon \leq (C m)^{-1}$, this implies in particular
	$$ C m^{3/2} \left( \frac{\log m}{D}\right)^{3/4} \delta_{\max} < 1.$$	
	We can estimate the right hand side of \eqref{eq:prop_interediate_bound_signs} from above by
	\begin{align*}
		C\sqrt{m}\epsilon + C m^{3/2} \left( \frac{\log m}{D}\right)^{5/4} \delta_{\max} \leq \frac{1}{m^{1/2}} + \left( \frac{\log m}{D} \right)^{2/4},
	\end{align*}
	which clearly is smaller than any constant for $D$ large enough, and therefore the signs will be correct for $D_0$ chosen accordingly since \eqref{eq:needed_for_signs} is fulfilled. 
	\end{proof}

\section{Proof of Theorem \ref{thm:local_result}}\label{sec:proofs_local}
\label{sec:proof_of_theorem_local}

Let us shortly recall the setting of Theorem \ref{thm:local_result}. We consider the identification of the parameters ${W}, {\tau}$ attributed to a shallow neural network $f(\cdot, {W}, {\tau})$ which falls into the class of networks described in Section \ref{sec:network_model}. By means of Algorithms \ref{alg:recover_weights} -\ref{alg:initialization}, we can find weight approximations $\wW \approx  W$ and shift approximations $\htau \approx  \tau$ of $f$. The parameters $(\wW, \htau)$ give rise to a neural network $\hat f(\cdot, \wW, \htau)$ which is architecturally identical to ${f}$, and, depending on the accuracy of the previous algorithmic steps, we would already expect some agreement in terms of $\hat f \approx  f$. Given network evaluations $ y_1 = f(x_1),\dots, y_{N_{\text{train}}} = f(x_{N_{\text{train}}})$ of ${f}$, we consider further improvement of the approximated shifts $\htau$ by empirical risk minimization of the objective 
\begin{align}\label{eq:gd_loss_proof}
J(\htau) = \frac{1}{2{N_{\text{train}}}} \sum^{{N_{\text{train}}}}_{i=1} \left( \hat f(x_i, \htau)  - y_i\right)^2,
\end{align}
via gradient descent given by 
\begin{align}\label{eq:gd_iteration_proof}
\htau^{(n+1)} = \htau^{(n)} - \gamma \nabla J(\htau^{(n)}).
\end{align}
In Theorem \ref{thm:local_result}, we prove a local convergence result with the guarantee that, for sufficiently large $N_{\text{train}}$, $\|\htau^{(n)} - {\tau}\|_2$ is roughly 
\begin{align*}
	\|\htau^{(n)} - {\tau}\|_2 \lesssim  \frac{ m^{1 / 2} \log(m)^{3/4}}{D^{1 / 4}}  \left( 
		\| \wW - W\|_F  + \frac{\Delta_{W, O}^{1/2}}{D^{1 / 2}} + \left\| \sum^m_{k=1} w_k - \hw_k \right\|_2
	\right). 
\end{align*}

where $$\Delta_{W, O} = \sum^m_{k \neq k'} \left|\inner{\hw_{k} - w_k, \hw_{k'} - w_{k'}}\right|.$$
\paragraph{Proof sketch.} For the proof, we rely on an idealized loss given by a quadratic functional in $\htau$:
\begin{align}\label{eq:def_Jbar_proof}
J_*(\htau) =  (\htau - \tau)^\top A (\htau- \tau),
\end{align}
with 
\begin{align}\label{eq:def_A}
A := \frac{1}{2{N_{\text{train}}}}\sum^{N_{\text{train}}}_{i=1}\nabla  \hat f(x_i, {\tau})\nabla \hat f(x_i, {\tau})^\top.
\end{align}
The proof can then be broken down into two steps. First, in Lemma \ref{lem:lower_bound_expectation_technical}, it is shown that $J_*$ is strictly convex by estimating a lower bound on the minimal eigenvalue $\lambda_m(A)$ of $A$. The proof relies on techniques from the NTK literature to first control the spectrum of $\mathbb{E}_{X_1,\ldots,X_{N_{\text{train}}} \sim \CN(0,\Id_D)}[A]$ by leveraging  \ref{enum:nonpolynomoial} and the incoherence of $\hw_1, \dots, \hw_m$. In particular, Lemma \ref{lem:lower_bound_expectation_technical} implies that minimizing $J_*$ via the gradient descent iteration given by
\begin{align}\label{eq:idealized_gd_iteration}
\htau_*^{(n+1)} = \htau_*^{(n)} - \gamma \nabla J_*(\htau_*^{(n)}) = \htau_*^{(n)} - \gamma A(\htau_*^{(n)}-\tau) 
\end{align}
with step-sizes $\gamma \leq 1 / \|A \|$ 
does necessarily converge to the global minimum attained at $\htau_* = \tau$. As a second step, we control the perturbation between the iterations $\htau^{(n)}, \htau_*^{(n)}$, when starting them from an identical vector $\htau^{(0)}=\htau_*^{(0)}$. In particular, in Lemma \ref{lemma:delta_gradient_descent_iteration} it is shown that the difference $\| \htau^{(n)} - \htau_*^{(n)}\|_2$ adheres to   
\begin{align*}
\| \htau^{(n)} - \htau_*^{(n)}\|_2 \leq  \xi^n \|\htau^{(0)} - {\tau}\|_2+ \left(1- \xi^n \right)\Delta_W,
\end{align*}
provided $\htau^{(0)}$ is sufficiently close to the optimal solution $\tau$, and where $\Delta_W$ is an error term which satisfies $\Delta_W \to 0$ as $\| \wW - {W}\|_F \to 0$ and $\xi \in [0,1)$. By the triangle inequality, we then bound the distance of the original gradient descent iteration \eqref{eq:gd_iteration_proof} to $\tau$ via 
\begin{align*}
\| \htau^{(n)} - \tau \|_2 &\leq  \| \htau^{(n)} - \htau_*^{(n)}\|_2 + \|\htau_*^{(n)} - \tau \|_2 \\ 
&\leq  \xi^n \|\htau^{(0)} - {\tau}\|_2+ \left(1- \xi^n \right)\Delta_W +  (1-\gamma \lambda_{m}(A))^n\|\htau_*^{(0)} - \tau \|_2 \to \Delta_W ,
\end{align*}
for $n\to \infty$. Hence, we establish that the iteration $\htau^{(n)}$ settles in an area around the optimal shifts $\tau$ that is determined by the initial and irreparable error present in the weight approximation $\wW$ of ${W}$. 
\paragraph{Organisation of this section.}
Subsection \ref{subsec:well_posedness_idealized} is dedicated to analyze the matrix $A$ in \eqref{eq:def_A} in expectation (over $x_i$'s) and proves the well-posedness. Subsection \ref{subsec:controlling_perturbation} analyzes the perturbation between gradient descent on the idealized objective $J_{*}$ and the true objective $J$. Subsection \ref{subsec:concluding_theorem4} concludes the proof by combining the well-posedness and the perturbation analysis.
\subsection{Well-posedness of the idealized objective in expectation}
We begin this section with a short primer on Hermitian expansions, a technical tool which is commonly used in the NTK literature. Afterwards, we prove the well-posedness of $A$ in \eqref{eq:def_A} in expectation.
\label{subsec:well_posedness_idealized}
\subsubsection{A primer on Hermitian expansions}
The Hermitian polynomials form an orthonormal basis of the $L_2$ space,
weighted by the Gaussian kernel $w_G$, which we denote as $L_2(\R, w_G)$. The $r$-th Hermitian polynomial is defined as
\begin{align*}
h_r(y) := \frac{1}{\sqrt{r!}}(-1)^{r}\exp\Big(\frac{y^2}{2}\Big)\frac{d^r}{dy^r}\exp\Big(\frac{-y^2}{2}\Big).
\end{align*}
Any function $h \in L_2(\R, w_G)$ can be expanded as $h \equiv \sum_{r}\mu_r(h)h_r$
with Hermitian coefficients $\mu_r(h)$ as
\begin{align*}
\mu_r(h) := \int h(y)h_r(y)w_G(y)dy.
\end{align*}
As per Assumption \ref{enum:activation} the first three derivatives of $g$ are bounded, hence $\max_{k \in [3]} \|g_{\tau}^{(k)}\|_\infty < \infty$ for any $\tau \in \R$. It is easy to
check that this implies that these functions lie within $L_2(\R, w_G)$.
\begin{lemma}\label{lemma:bounded_in_gaussianL2}
	Assume $h$ is bounded, then $h \in L_2(\R, \omega_G)$ and 
	\begin{align*}
		\sum_{r\geq 0} \mu_r(h)^2 \leq \sqrt{2\pi} \|h\|_\infty^2
	\end{align*}
\end{lemma}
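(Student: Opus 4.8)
The plan is to deduce the estimate directly from Parseval's identity for the orthonormal Hermite system, so that the entire argument rests on two elementary observations. First I would check that the boundedness hypothesis already places $h$ in the Hilbert space $L_2(\R, w_G)$, which is the prerequisite for any Hermite expansion to be meaningful: since $h$ is bounded and the Gaussian kernel is integrable, one has $\int_\R h(y)^2 w_G(y)\, dy \le \|h\|_\infty^2 \int_\R w_G(y)\, dy < \infty$, so that $h \in L_2(\R, w_G)$ with $\|h\|_{L_2(\R, w_G)}^2 \le \|h\|_\infty^2 \int_\R w_G(y)\, dy$.

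Next, since the polynomials $\{h_r\}_{r \ge 0}$ form an orthonormal basis of $L_2(\R, w_G)$ and the numbers $\mu_r(h) = \int_\R h(y) h_r(y) w_G(y)\, dy$ are precisely the associated generalized Fourier coefficients $\langle h, h_r\rangle_{L_2(\R, w_G)}$, completeness of the system upgrades Bessel's inequality to the Parseval equality
\begin{align*}
\sum_{r \ge 0} \mu_r(h)^2 = \|h\|_{L_2(\R, w_G)}^2 .
\end{align*}
Chaining this identity with the crude $L_\infty$-to-$L_2$ bound from the first step gives $\sum_{r \ge 0} \mu_r(h)^2 \le \|h\|_\infty^2 \int_\R w_G(y)\, dy$, and evaluating the total mass of the Gaussian kernel, $\int_\R e^{-y^2/2}\, dy = \sqrt{2\pi}$, yields the claimed inequality $\sum_{r \ge 0} \mu_r(h)^2 \le \sqrt{2\pi}\,\|h\|_\infty^2$.

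There is no genuinely hard step: the content is entirely Parseval plus the remark that a bounded function is square-integrable against a finite measure. The only two points requiring care are \emph{(i)} verifying membership in $L_2(\R, w_G)$ before invoking the expansion, so that the equality (rather than merely Bessel's inequality) is available, and \emph{(ii)} tracking the normalization of $w_G$, since the final constant $\sqrt{2\pi}$ is exactly the total mass of the Gaussian weight $e^{-y^2/2}$ entering the definition of $\mu_r(h)$; had a normalized density been used the constant would reduce to $1$ and the stated bound would hold a fortiori.
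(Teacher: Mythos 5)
Your proof is correct and follows essentially the same route as the paper: bound the Gaussian-weighted $L_2$ norm by $\sqrt{2\pi}\,\|h\|_\infty^2$ using $\int_\R e^{-t^2/2}\,dt = \sqrt{2\pi}$, and compare this with the sum of squared Hermite coefficients. The only cosmetic difference is that you invoke completeness of the Hermite system to get Parseval's \emph{equality}, whereas the paper uses only orthonormality and Bessel's inequality $\sum_{r\geq 0}\mu_r(h)^2 \leq \|h\|_{L_2(\R,w_G)}^2$, which already suffices for the stated upper bound.
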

\begin{proof}
	\begin{align*}
		\int_\R h(t)^2 \exp(-t^2 / 2)dt \leq \sqrt{2\pi} \|h\|_\infty^2 < \infty. 
	\end{align*}
	The second statement follows from the fact that $L_2(\R, \omega_G)$ is a Hilbert space and the hermite polynomials form an orthonormal system within that space. 
\end{proof}

We further assume in \ref{enum:nonpolynomoial} that $g^{(1)}$ is not a polynomial of degree three or less, implying that
also $g^{(1)}_{\tau}$ is not a polynomial of degree three or less. Since $h_0,h_1,h_2,h_3$ form
a basis for the space of affine functions, this implies $g^{(1)}_{\tau} \not\in \textrm{Span}(h_0,h_1,h_2,h_3)$.
In particular, $\mu_{r}(g^{(1)}_{\tau}) \neq 0$ for some $r \geq 4$ and any $\tau \in \R$. In the following,
we denote
$$
\omega := \min_{\tau \in [-{\tau}_{\infty},{\tau}_{\infty}]}\sum_{r \geq 4} \mu_r(g^{(1)}(\cdot + \tau))^2 > 0,
$$
which depends only on the activation function $g^{(1)}$ and the shift bound $\tau_{\infty}$. Lastly,
a useful property of Hermitian expansions and the Hermitian basis is the following identity.
\begin{lemma}[{\cite[Lemma D.2]{QuynhMarco2020}}]
\label{lem:mondelli_statement}
For two unit norm vectors $x,y \in \R^D$ and every $k, \ell \geq 0$ we have
\begin{align*}
	\mathbb{E}_{X\sim \mathcal{N}(0,\Id_D)}\left[h_k(v^\top X)h_\ell(u^\top X)\right] = \delta_{k \ell} \langle u, v\rangle^k ,
\end{align*}
where $\delta_{k\ell}=1$ if $k=\ell$ and 0 otherwise.
\end{lemma}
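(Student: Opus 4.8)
The plan is to reduce the $D$-dimensional Gaussian expectation to a two-dimensional computation and then exploit the generating function of the Hermite polynomials. First I would observe that, since $\|u\|_2 = \|v\|_2 = 1$, the pair $(v^\top X, u^\top X)$ is a centered jointly Gaussian vector whose marginals are standard normal and whose covariance is exactly $\mathbb{E}[(v^\top X)(u^\top X)] = \langle u, v\rangle =: \rho$. Writing $(Y_1, Y_2) := (v^\top X, u^\top X)$, the claim reduces to the purely two-dimensional identity $\mathbb{E}[h_k(Y_1)h_\ell(Y_2)] = \delta_{k\ell}\,\rho^k$, where $(Y_1,Y_2)$ is a bivariate standard normal with correlation $\rho$.

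Next I would encode all the quantities $\mathbb{E}[h_k(Y_1)h_\ell(Y_2)]$ into a single bivariate generating function. From the Rodrigues-type definition of $h_r$ one has the classical identity $\sum_{r\geq 0} \frac{s^r}{\sqrt{r!}}\, h_r(y) = \exp(sy - s^2/2)$. Hence, setting $G(s,t) := \sum_{k,\ell\geq 0}\frac{s^k t^\ell}{\sqrt{k!\,\ell!}}\,\mathbb{E}[h_k(Y_1)h_\ell(Y_2)]$ and pulling the expectation inside the series, I obtain $G(s,t) = \mathbb{E}[\exp(sY_1 - s^2/2)\exp(tY_2 - t^2/2)]$. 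The bracket equals the moment generating function of the Gaussian variable $sY_1 + tY_2$ up to the deterministic factor $\exp(-(s^2+t^2)/2)$; since $sY_1 + tY_2$ is centered with variance $s^2 + t^2 + 2st\rho$, the Gaussian MGF gives $\mathbb{E}[\exp(sY_1 + tY_2)] = \exp((s^2+t^2+2st\rho)/2)$, so the two Gaussian factors cancel and $G(s,t) = \exp(st\rho)$.

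Finally I would read off the coefficients. Expanding $\exp(st\rho) = \sum_{n\geq 0}\frac{\rho^n}{n!}\,s^n t^n$ and matching the coefficient of $s^k t^\ell$ against the definition of $G$, the right-hand side contributes only when $k=\ell$. This yields $\mathbb{E}[h_k(Y_1)h_\ell(Y_2)] = 0$ for $k\neq\ell$ and $\mathbb{E}[h_k(Y_1)h_k(Y_2)] = \rho^k$, which is precisely the claimed $\delta_{k\ell}\,\langle u,v\rangle^k$.

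The only genuinely technical point, which I would treat as the main obstacle, is justifying the interchange of the infinite double sum with the expectation when forming $G(s,t)$. This is handled by Fubini--Tonelli: using the orthonormality $\mathbb{E}[h_r(Y_i)^2]=1$ together with Cauchy--Schwarz, the sum of absolute contributions is bounded by $\bigl(\sum_k |s|^k/\sqrt{k!}\bigr)\bigl(\sum_\ell |t|^\ell/\sqrt{\ell!}\bigr) < \infty$ for every fixed $s,t$, so the series is absolutely summable in $L^1$ and the interchange is legitimate. Equivalently, one notes that $\sum_r \frac{s^r}{\sqrt{r!}}\,h_r(Y_i)$ converges in $L^2(\mathcal{N}(0,1))$ to $\exp(sY_i - s^2/2)$, and a product of two $L^2$ functions is integrable. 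Everything else is routine bookkeeping.
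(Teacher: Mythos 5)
Your proof is correct. Note, however, that the paper itself does not prove this lemma at all: it is imported verbatim as a citation to Lemma D.2 of the reference \cite{QuynhMarco2020}, so there is no internal argument to compare against. What you have written is the standard self-contained derivation of this orthogonality relation, essentially Mehler's formula obtained through the generating function $\sum_{r\geq 0} \frac{s^r}{\sqrt{r!}}\,h_r(y) = \exp(sy - s^2/2)$, and all the steps check out: the reduction to a bivariate standard normal pair with correlation $\rho = \langle u, v\rangle$ is exact because $u,v$ have unit norm; the Gaussian MGF computation gives $G(s,t) = \exp(st\rho)$; and coefficient matching in the absolutely convergent double series yields $\delta_{k\ell}\rho^k$, including the boundary cases $k=\ell=0$ and $\rho = \pm 1$ (the latter consistent with the parity $h_\ell(-y) = (-1)^\ell h_\ell(y)$). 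Your Fubini justification is also sound: Cauchy--Schwarz plus the orthonormality $\mathbb{E}[h_r(Y_i)^2] = 1$ bounds the sum of absolute contributions by $\bigl(\sum_k |s|^k/\sqrt{k!}\bigr)\bigl(\sum_\ell |t|^\ell/\sqrt{\ell!}\bigr) < \infty$. The one cosmetic discrepancy is inherited from the paper's statement, which introduces ``unit norm vectors $x,y$'' but then uses $u,v$ in the display; your proof consistently works with $u,v$, which is the intended reading.
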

\subsubsection{Well-posedness in expectation}
The central object of study in this section is the matrix
\begin{align}
\label{eq:definition_expectation}
E:= \mathbb{E}_{X_1,\ldots,X_{N_{\text{train}}} \sim \CN(0,\Id_D)}[A].
\end{align}
We prove its well-posedness in Lemma \ref{lem:lower_bound_expectation_technical}. The proof relies on the observation that $E$ is actually equal to a sum of positive semidefinite Grammian matrices as shown in Lemma \ref{lem:expectation_full}.
\begin{lemma}
\label{lem:expectation_full}
Assume that \ref{enum:activation} holds, and let  $E$  be defined as in \eqref{eq:definition_expectation}. Then, we have
\begin{align*}
	E = \frac{1}{2}\sum_{r=0}^{\infty}T_r T_r^\top,\quad \textrm{ where } T_r :=
	\left[
	\begin{array}{c}
		\mu_r(g^{(1)}_{\tau_1})\operatorname{vec}(\hw_1^{\otimes r}) \\
		\vdots\\
		\mu_r(g^{(1)}_{\tau_m})\operatorname{vec}(\hw_m^{\otimes r})\\
	\end{array}
	\right] \in \R^{m\times D^r}.
\end{align*}
In particular, we have $E \succcurlyeq \frac{1}{2} \sum_{r \in \mathcal{R}} T_r T_r^\top$ for any subset $\mathcal{R} \subseteq \N_{\geq 1}$, where $A \succcurlyeq B$ means $A-B$ is positive semidefinite.
\end{lemma}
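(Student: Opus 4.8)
The plan is to compute the entries of $E$ directly from its definition \eqref{eq:def_A} and then read off the resulting expression as the $(k,\ell)$-entry of $\frac12\sum_r T_rT_r^\top$. First I would reduce the multi-sample expectation to a single one: since $X_1,\dots,X_{N_{\text{train}}}$ are i.i.d.\ $\CN(0,\Id_D)$ and every summand in \eqref{eq:def_A} has the same law,
\[
E = \frac12\, \mathbb{E}_{X\sim\CN(0,\Id_D)}\!\left[\nabla\hat f(X,\tau)\,\nabla\hat f(X,\tau)^\top\right].
\]
The $k$-th component of the gradient is $\partial_{\tau_k}\hat f(x,\tau)=g^{(1)}(\langle\hw_k,x\rangle+\tau_k)=g^{(1)}_{\tau_k}(\langle\hw_k,x\rangle)$, where $g^{(1)}_{\tau}:=g^{(1)}(\cdot+\tau)$ and $\hw_k$ denotes the (sign-corrected) unit weight used to parametrize $\hat f$. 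Hence the $(k,\ell)$-entry of $E$ equals $\tfrac12\,\mathbb{E}_X[g^{(1)}_{\tau_k}(\langle\hw_k,X\rangle)\,g^{(1)}_{\tau_\ell}(\langle\hw_\ell,X\rangle)]$, and it remains to evaluate this scalar Gaussian integral.

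For the evaluation I would expand each factor in the Hermite basis. By \ref{enum:activation} the function $g^{(1)}$ is bounded, so $g^{(1)}_{\tau_k}\in L_2(\R,w_G)$ by Lemma \ref{lemma:bounded_in_gaussianL2}, and its Hermite series $g^{(1)}_{\tau_k}=\sum_{r\geq0}\mu_r(g^{(1)}_{\tau_k})h_r$ converges in $L_2(\R,w_G)$. Substituting the two series and invoking Lemma \ref{lem:mondelli_statement}, which gives $\mathbb{E}_X[h_r(\langle\hw_k,X\rangle)h_{r'}(\langle\hw_\ell,X\rangle)]=\delta_{rr'}\langle\hw_k,\hw_\ell\rangle^r$ (the hypotheses apply since $\hw_k,\hw_\ell$ are unit vectors), all cross terms vanish and I obtain $E_{k\ell}=\tfrac12\sum_{r\geq0}\mu_r(g^{(1)}_{\tau_k})\mu_r(g^{(1)}_{\tau_\ell})\langle\hw_k,\hw_\ell\rangle^r$. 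On the other hand, using $\langle\operatorname{vec}(\hw_k^{\otimes r}),\operatorname{vec}(\hw_\ell^{\otimes r})\rangle=\langle\hw_k,\hw_\ell\rangle^r$, the $(k,\ell)$-entry of $T_rT_r^\top$ is exactly $\mu_r(g^{(1)}_{\tau_k})\mu_r(g^{(1)}_{\tau_\ell})\langle\hw_k,\hw_\ell\rangle^r$. The two expressions coincide entrywise, yielding $E=\tfrac12\sum_{r}T_rT_r^\top$.

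The main obstacle I anticipate is justifying the termwise evaluation, i.e.\ exchanging the infinite Hermite sums with $\mathbb{E}_X[\cdot]$, since the two series converge only in $L_2(\R,w_G)$ and not pointwise. I would handle this by viewing $(a,b)\mapsto\mathbb{E}_X[a(\langle\hw_k,X\rangle)\,b(\langle\hw_\ell,X\rangle)]$ as a bilinear form on $L_2(\R,w_G)\times L_2(\R,w_G)$ and showing it is bounded: by Cauchy--Schwarz together with the fact that $\langle\hw_k,X\rangle$ and $\langle\hw_\ell,X\rangle$ are each marginally standard Gaussian, its value is at most $\|a\|_{L_2(w_G)}\|b\|_{L_2(w_G)}$. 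Continuity of this form then lets me pass to the limit of the partial sums and collect the diagonal terms via Lemma \ref{lem:mondelli_statement}. Finally, the \emph{in particular} claim is immediate: each $T_rT_r^\top$ is a Gram matrix and hence positive semidefinite, so discarding any subset of these nonnegative summands only decreases $E$ in the Loewner order, giving $E\succcurlyeq\frac12\sum_{r\in\mathcal R}T_rT_r^\top$ for every $\mathcal R\subseteq\N_{\geq1}$.
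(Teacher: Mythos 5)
Your proposal is correct and follows essentially the same route as the paper's proof: reduce to a single Gaussian expectation, expand both factors of $E_{k\ell}$ in the Hermite basis via Lemma \ref{lemma:bounded_in_gaussianL2}, collapse the cross terms with Lemma \ref{lem:mondelli_statement}, and identify the result entrywise with $\frac12\sum_r T_rT_r^\top$, with the positive-semidefiniteness of each Grammian $T_rT_r^\top$ giving the Loewner bound. Your explicit justification of exchanging the Hermite series with the expectation (boundedness of the bilinear form via Cauchy--Schwarz and the fact that $\langle\hw_k,X\rangle$ is marginally standard Gaussian) is a point the paper passes over silently, and it is a sound way to close that gap.
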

\begin{proof}
The matrix $A$ can be written as
\begin{align*}
	A_{k\ell} &= \frac{1}{2{N_{\text{train}}}}\sum_{i=1}^{{N_{\text{train}}}} g^{(1)}(\hw_k^\top X_i+\tau_k) g^{(1)}(\hw_\ell^\top X_i+{\tau}_\ell)
\end{align*}
and the corresponding expectation reads
\begin{align*}
	E_{k\ell} = \frac{1}{2}\mathbb{E}_{X\sim\CN(0,\Id_D)}\left[g^{(1)}_{{\tau}_k}(\hw_k^\top X) g^{(1)}_{{\tau}_\ell}(\hw_\ell^\top X)\right].
\end{align*}
Now, note that $g^{(1)}_{\tau} = g^{(1)}(\cdot + \tau) \in L_2(\R, w_H)$ for any $\tau \in \R$ by \ref{enum:activation} and Lemma \ref{lemma:bounded_in_gaussianL2}. Hence, $g^{(1)}_{\tau}$ has
a Hermitian expansion and we can write
\begin{align*}
	E_{k\ell} = \frac{1}{2} \mathbb{E}_{X\sim\CN(0,\Id_D)}&\left[\Bigg(\sum_{r=0}^{\infty}\mu_r(g^{(1)}_{{\tau}_k}) h_r (\hw_k^\top X)\Bigg) \left(\sum_{r=0}^{\infty}\mu_r(g^{(1)}_{{\tau}_\ell})h_r(\hw_\ell^\top X)\right)\right]	.
\end{align*}
 Using now Lemma \ref{lem:mondelli_statement} to express expectations of scalar products of Hermite polynomials, we obtain
\begin{align*}
	E_{k\ell} &= \frac{1}{2}\sum_{r=0}^{\infty} \mu_r(g^{(1)}_{{\tau}_k})\mu_r(g^{(1)}_{{\tau}_\ell})\langle \hw_k, \hw_\ell\rangle^r,
\end{align*}
which can  equivalently  be written as $\frac{1}{2}\sum_{r=0}^{\infty}T_rT_r^\top$.
The second part of the statement follows from the fact that each individual matrix $T_rT_r^\top$ is a positive
semidefinite Grammian matrix.
\end{proof}
\begin{lemma}
\label{lem:lower_bound_expectation_technical}
Let $E$ be defined as in \eqref{eq:definition_expectation}
and assume that the approximated weights satisfy $\|\hw_k \|_2 = 1$ and \ref{enum:correlation}
for some universal constant $c_2$.
Furthermore, assume the activation function adheres to \ref{enum:nonpolynomoial}. Then,
we have
\begin{align}
	\label{eq:lower_bound_expectation_technical}
	&\lambda_{m}(E) \geq  \omega -C(m-1)\left(\frac{\log m}{D}\right)^{2},
\end{align}
where $\omega$ and $C$ are constants depending only on $g$ and $\tau_{\infty}$. Specifically, we have
\begin{align*}
\omega &= \frac{1}{2}\min_{\tau \in [-\bar{\tau}_{\infty},\bar{\tau}_{\infty}]}\sum_{r \geq 4} \left(\mu_r(g^{(1)}(\cdot + \tau))\right)^2,\\
C &=\frac{1}{2} c_2^2 \max_{\tau,\widetilde \tau \in [-\tau_{\infty}, \tau_{\infty}]}\sum_{r\geq 4}\snorm{\mu_{r}(g^{(1)}_{\tau})\mu_r(g^{(1)}_{\widetilde\tau})}.
\end{align*}
\end{lemma}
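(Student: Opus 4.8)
The plan is to exploit the positive-semidefinite lower bound from Lemma~\ref{lem:expectation_full} by discarding all Hermite levels below $r=4$. Taking $\mathcal{R}=\{r\geq 4\}$ there gives $E \succcurlyeq E_{\geq 4}:=\frac{1}{2}\sum_{r\geq 4}T_r T_r^\top$, so it suffices to bound $\lambda_m(E_{\geq 4})$ from below. Since the entries read $(E_{\geq 4})_{k\ell}=\frac{1}{2}\sum_{r\geq 4}\mu_r(g^{(1)}_{\tau_k})\mu_r(g^{(1)}_{\tau_\ell})\langle\hw_k,\hw_\ell\rangle^r$, I would estimate the smallest eigenvalue via Gershgorin's circle theorem, separating the diagonal (which carries the useful mass) from the off-diagonal (which is suppressed by incoherence).

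For the diagonal, using $\langle\hw_k,\hw_k\rangle=1$ one obtains $(E_{\geq 4})_{kk}=\frac{1}{2}\sum_{r\geq 4}\mu_r(g^{(1)}_{\tau_k})^2\geq\omega$ directly from the definition of $\omega$. Here assumption \ref{enum:nonpolynomoial} is exactly what guarantees $\omega>0$: since $g^{(1)}$ is not a polynomial of degree at most three, some coefficient at level $r\geq 4$ is nonzero for every admissible shift. The relevant series all converge because $g^{(1)}$ is bounded by \ref{enum:activation}, so Lemma~\ref{lemma:bounded_in_gaussianL2} yields $\sum_{r}\mu_r(g^{(1)}_\tau)^2\leq\sqrt{2\pi}\,\|g^{(1)}\|_\infty^2<\infty$.

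For the off-diagonal row sums, the key observation is that restricting to $r\geq 4$ lets me factor out a fourth power: for $k\neq\ell$ and $r\geq 4$ one has $|\langle\hw_k,\hw_\ell\rangle|^r\leq|\langle\hw_k,\hw_\ell\rangle|^4=(\langle\hw_k,\hw_\ell\rangle^2)^2\leq (c_2\log m/D)^2$ by \ref{enum:correlation}. Hence $|(E_{\geq 4})_{k\ell}|\leq \frac{1}{2}c_2^2(\log m/D)^2\sum_{r\geq 4}|\mu_r(g^{(1)}_{\tau_k})\mu_r(g^{(1)}_{\tau_\ell})|$, and summing over the $m-1$ off-diagonal entries of a row gives the bound $C(m-1)(\log m/D)^2$ with $C$ exactly as stated; the coefficient sum is finite uniformly over the compact shift interval by Cauchy--Schwarz combined with Lemma~\ref{lemma:bounded_in_gaussianL2}. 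Gershgorin then yields $\lambda_m(E_{\geq 4})\geq\omega-C(m-1)(\log m/D)^2$, and the claim follows since $E\succcurlyeq E_{\geq 4}$ implies $\lambda_m(E)\geq\lambda_m(E_{\geq 4})$.

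The only genuinely delicate point is the choice of the truncation level, and this is where I expect the crux to lie. Keeping all $r\geq 1$ would leave the $r=1,2,3$ terms in the off-diagonal, whose slowest-decaying contribution scales like $|\langle\hw_k,\hw_\ell\rangle|\sim(\log m/D)^{1/2}$; multiplied by the $m-1$ neighbours this is far too large to be absorbed. Discarding levels below four simultaneously preserves a strictly positive diagonal floor $\omega$ (via \ref{enum:nonpolynomoial}) and upgrades the off-diagonal decay to $(\log m/D)^2$, which under the regime $C m\log^2 m\leq D^2$ keeps $C(m-1)(\log m/D)^2$ controllably below $\omega$. Balancing these two effects is the heart of the argument; the remainder is routine bookkeeping through Gershgorin and the boundedness of the Hermite coefficients.
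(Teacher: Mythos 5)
Your proof is correct and follows essentially the same route as the paper's: both discard the Hermite levels $r \leq 3$ via Lemma~\ref{lem:expectation_full}, both use \ref{enum:nonpolynomoial} to keep the diagonal entries above $\omega$, and both exploit $\snorm{\langle \hw_k,\hw_\ell\rangle}^r \leq \langle \hw_k,\hw_\ell\rangle^4 \leq (c_2\log m/D)^2$ for $r \geq 4$ to suppress the off-diagonal, arriving at the identical constants $\omega$ and $C$. The only cosmetic difference is that you apply Gershgorin directly to the truncated matrix, whereas the paper splits it as $D_4 + O_4$ and combines a quadratic-form bound on $\|O_4\|$ (itself via Gershgorin on the matrix of fourth powers) with Weyl's inequality; the resulting estimate is the same.
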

\begin{proof}[Proof of Lemma \ref{lem:lower_bound_expectation_technical}]
	To simplify the notation, we introduce the shorthand $\mu_{r,k} := \mu_r(g^{(1)}_{{\tau}_k})$.
	By Lemma \ref{lem:expectation_full} we have $E \succcurlyeq \frac{1}{2} \sum_{r \geq 4} T_r T_r^\top$, so we concentrate on the
	expression on the right hand side. As $\norm{{\hw}_k}_2 = 1$ for all $k \in [m]$, we first note that we can rewrite $\frac{1}{2} \sum_{r \geq 4} T_r T_r^\top$
	as $ \frac{1}{2} \sum_{r \geq 4} T_r T_r^\top =  D_{4} + O_{4}$,
	where the matrix $D_{4}$ is given by
	\begin{align*}
		D_{4} &:= \frac{1}{2}\operatorname{Diag}\Big(\sum_{r \geq 4}\mu_{r,1}^2,\ldots, \sum_{r \geq 4}\mu_{r,m}^2\Big)
	\end{align*}
	and the remainder $O_{4}$ equals $ \frac{1}{2}\sum_{r \in 4} T_r T_r^\top$ with its diagonal
	set to $0$. To show \eqref{eq:lower_bound_expectation_technical}, we compute a lower eigenvalue bound
	for $	D_{4} $ and an upper eigenvalue bound for $O_4$ independently, and then complete the argument with
	Weyl's eigenvalue perturbation bound \cite{weylAsymptotischeVerteilungsgesetzEigenwerte1912}. The smallest
	eigenvalue of $	D_{4} $ can be read from the diagonal and is given by
	\begin{align*}
	\lambda_{\min}(D_{4}) = \frac{1}{2} \min_{k \in [m]}\sum_{r \geq 4}\mu_{r,k}^2 \geq \omega > 0.
	\end{align*}
 	For the spectral norm of $O_{4}$ we use $L_{1}/L_{\infty}$-Cauchy-Schwarz inequalities and $\norm{\hw_k}_2 = 1$ for all $k \in [m]$. Specifically, for any unit norm vector $u$ we have
	\begin{align*}
		u^\top O_{4}u &= \frac{1}{2}\sum_{k=1}^{m}\sum_{\ell\neq k}u_{k}u_{\ell}\sum_{r \geq 4}\mu_{r,k}\mu_{r,\ell}\langle \hw_k, \hw_\ell\rangle^r\\
		&\leq \frac{1}{2} \sum_{k=1}^{m}\sum_{\ell \neq k}\snorm{u_{k}}\snorm{u_{\ell}}\sum_{r \geq 4}\snorm{\mu_{r,k}\mu_{r,\ell}}\snorm{\langle \hw_k, \hw_\ell\rangle}^r.
	\end{align*}
	By dragging out the maximum of the sums over Hermitian coefficients, we further bound
	\begin{align*}
		u^\top O_{4}u \leq \Big( \frac{1}{2}\max_{\tau,\widetilde \tau \in [-\tau_{\infty}, \tau_{\infty}]}\sum_{r\geq 4}\snorm{\mu_{r}(g^{(1)}_{\tau})\mu_r(g^{(1)}_{\widetilde\tau})}\Big)\sum_{k=1}^{m}\sum_{\ell\neq k}\snorm{u_{k}}\snorm{u_{\ell}}\snorm{\langle \hw_k, \hw_\ell\rangle}^{4}.
	\end{align*}
	The trailing factor is, for all unit norm $u$, bounded by the spectral norm of the matrix
	\begin{equation}
	( \widehat O_{4})_{ij} := \begin{cases}
	0,& \textrm{ if } i = j,\\
	\snorm{\langle  \hw_i, \hw_j\rangle}^{4},& \textrm{ else }.
\end{cases}
\end{equation}
Therefore we have $u^\top O_{4}u \leq C_{g,\tau_{\infty}}\|\widehat O_{4}\|$ for all unit norm $u$, and with the constant $C_{g,\tau_{\infty}}$ given as 
$$
C_{g,\tau_{\infty}} =\frac{1}{2} \max_{\tau,\widetilde \tau \in [-\tau_{\infty}, \tau_{\infty}]}\sum_{r\geq 4}\snorm{\mu_{r}(g^{(1)}_{\tau})\mu_r(g^{(1)}_{\widetilde\tau})},
$$
and only dependent on $g$ and the shift bound $\tau_{\infty}$. By Gershgorin's circle theorem we further have
\begin{align*}
	\|\widehat O_{4}\| \leq \max_{k\in[m]} \sum_{\ell\neq k}^m |(\widehat O_{4})_{k\ell}| \leq (m-1)\left(\frac{c_2 \log m}{D}\right)^{2},
\end{align*}
where we used the fact that $\hw_1,\ldots,\hw_m$ satisfy \ref{enum:correlation}. 
\end{proof}
\subsection{Controlling the perturbation from the idealized GD iteration}
\label{subsec:controlling_perturbation}
This section is concerned with the divergence between the two gradient descent iterations in \eqref{eq:gd_iteration_proof} and \eqref{eq:idealized_gd_iteration}. We start with a number of auxiliary results that control certain series involving the Hermite coefficients of the activation and its derivatives. These technical statements are needed to control the perturbation in the GD iteration that is caused by the weight approximation. The bounds enable Lemma \ref{lemma:delta_grads} which provides an upper bound for the difference between the gradients $\nabla J(\htau), \nabla J_*(\htau)$, defined in \eqref{eq:gd_loss_proof}, \eqref{eq:def_Jbar_proof}, respectively, w.r.t. the accuracy of the estimated weights and shift initializations.  
\subsubsection{Controlling perturbation from weigths}
The first part of this section is concerned with estimating a series that contains elements
\begin{align}\label{eq:def_hermite_series_coefficients}
	S_{r,\ell} &:= \sum^m_{k=1} \mu_{r}(g_{\tau_k}) \mu_{r}(g^{(1)}_{\htau_\ell})\left( \inner{\hw_k, \hw_\ell}^r - \inner{w_k, \hw_\ell}^r \right),
\end{align}
where $ \mu_{r}(g_{\tau_k}), \mu_{r}(g^{(1)}_{\htau_\ell})$ correspond to the $k$-th and $\ell$-th Hermite coefficient of the function $g_{\tau_k}(\cdot) = g(\cdot + \tau_k)$, $g_{\htau_\ell}'(\cdot) = g^{(1)}(\cdot + \htau_\ell)$, respectively. These coefficients are assumed to be uniformy bounded for all $r\geq 0$ which is a consequence of \ref{enum:nonpolynomoial} and Lemma \ref{lemma:bounded_in_gaussianL2}. 
The following results pave the way for perturbation bound w.r.t. estimated weights and we use the following shorthands to keep the expressions more compact: 
\begin{align}
	\Delta_{W, F} &= \| \wW - W\|_F \label{eq:delta_fro_shorthand},\\
	\Delta_{W, O} &= \sum^m_{k\neq k'} \left|\inner{\hw_{k} - w_k, \hw_{k'} - w_k'}\right|. \label{eq:delta_offdiag_shorthand}
\end{align}
\begin{lemma}\label{lemma:aux_for_delta_grads} Consider weights and approximated weights $(w_k)_{k\in [m]},(\hw_k)_{k\in [m]}$ of unit norm as before that both fulfill \ref{enum:correlation} and (i) in Lemma \ref{lemma:incoherence_apx_weigths}, as well shifts $(\tau_k)_{k\in [m]},(\htau_k)_{k\in [m]}$ within $[-\tau_\infty, \tau_\infty]$ for some $\tau_\infty < \infty$. Let $S_{r, \ell}$ be defined as in \eqref{eq:def_hermite_series_coefficients} and assume that $g$ fulfills the assumption \ref{enum:activation} - \ref{enum:nonpolynomoial}. Then, there exists a constant $C>0$ such that, for $m \geq D$,
	\begin{align*} 
		\sum^m_{\ell = 1}S_{r,\ell}^2 &\leq C r^2 \max_{k, \ell \in [m]} \mu_{r}(g^{(1)}_{\htau_\ell})^2\mu_r(g_{\tau_k})^2 {\left( 1+ m \left(\frac{\log m}{D}\right)^{r / 2}\right)}\\
		&\cdot \left[ \Delta_{W, F}^2 +  \left(\frac{\log m }{D}\right)^{(r-1)/2} \Delta_{W, O} \right].
	\end{align*}
	Furthermore, for any fixed $R\geq 2$ we have
	\begin{align*}
		\sum_{r = 2}^R 2^r  \sum^m_{\ell=1} S_{r,\ell}^2 \leq \frac{ Cm \log m}{D}\left(  \Delta_{W, F}^2 +  \left(\frac{\log m}{D}\right)^{1/2} \Delta_{W, O }\right),
	\end{align*}
	where the constant $C>0$ additionally depends on $R$.
\end{lemma}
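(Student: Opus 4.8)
The plan is to reduce the statement to the elementary factorization of $\langle\hw_k,\hw_\ell\rangle^r-\langle w_k,\hw_\ell\rangle^r$ and then track, after squaring and summing over $\ell$, how the weight errors $e_k:=\hw_k-w_k$ enter (recall $\Delta_{W,F}^2=\sum_k\|e_k\|_2^2$ and $\Delta_{W,O}=\sum_{k\neq k'}|\langle e_k,e_{k'}\rangle|$). Writing $a_{k\ell}:=\langle\hw_k,\hw_\ell\rangle$ and $b_{k\ell}:=\langle w_k,\hw_\ell\rangle$, so that $a_{k\ell}-b_{k\ell}=\langle e_k,\hw_\ell\rangle$, the geometric-sum identity gives
\begin{equation*}
\langle\hw_k,\hw_\ell\rangle^r-\langle w_k,\hw_\ell\rangle^r=\langle e_k,\hw_\ell\rangle\,P_{k\ell},\qquad P_{k\ell}:=\sum_{i=0}^{r-1}a_{k\ell}^{\,i} b_{k\ell}^{\,r-1-i},
\end{equation*}
so that $S_{r,\ell}=\mu_r(g^{(1)}_{\htau_\ell})\sum_{k}\mu_r(g_{\tau_k})\langle e_k,\hw_\ell\rangle P_{k\ell}$. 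By \ref{enum:correlation} and point (i) of Lemma \ref{lemma:incoherence_apx_weigths} we have $|a_{k\ell}|,|b_{k\ell}|\leq(2c_2\log m/D)^{1/2}$ for $k\neq\ell$, hence $|P_{k\ell}|\leq r(2c_2\log m/D)^{(r-1)/2}$, while $|P_{kk}|\leq r$ and $\langle e_k,\hw_k\rangle=\tfrac12\|e_k\|_2^2$ since all vectors are unit norm.

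First I would expand the square as
\begin{equation*}
\sum_{\ell=1}^m S_{r,\ell}^2=\sum_{k,k'}\mu_r(g_{\tau_k})\mu_r(g_{\tau_{k'}})\,\langle e_k,M_{kk'}e_{k'}\rangle,\qquad M_{kk'}:=\sum_{\ell}\mu_r(g^{(1)}_{\htau_\ell})^2P_{k\ell}P_{k'\ell}\,\hw_\ell\hw_\ell^\top,
\end{equation*}
and split into the diagonal ($k=k'$) and off-diagonal ($k\neq k'$) contributions. For $k=k'$ I would use $P_{k\ell}^2\leq r^2(2c_2\log m/D)^{r-1}$ on the terms $\ell\neq k$ together with $\sum_\ell\langle e_k,\hw_\ell\rangle^2=\|\wW^\top e_k\|_2^2\leq\|\wW\|^2\|e_k\|_2^2$ and the Gershgorin bound $\|\wW\|^2\leq 1+m(2c_2\log m/D)^{1/2}$; summing over $k$ and writing $M_r:=\max_{k,\ell}\mu_r(g^{(1)}_{\htau_\ell})^2\mu_r(g_{\tau_k})^2$ then yields a bound of the form $C r^2 M_r(1+m(\log m/D)^{r/2})\Delta_{W,F}^2$, where I absorb $(\log m/D)^{r-1/2}$ into $(\log m/D)^{r/2}$ (legitimate since the base is $<1$ and $r\geq1$), and the $\ell=k$ term contributes only $O(\delta_{\max}^2\Delta_{W,F}^2)$.

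The hard part will be the off-diagonal contribution $k\neq k'$, which must produce $\Delta_{W,O}$ and not merely $\Delta_{W,F}^2$. Here it is essential to retain the sign structure of $\langle e_k,M_{kk'}e_{k'}\rangle$: an example with mutually orthogonal errors $e_k$ (so $\Delta_{W,O}=0$) shows that replacing $\langle e_k,\hw_\ell\rangle\langle e_{k'},\hw_\ell\rangle$ by its absolute value gives a strictly positive, hence wrong, lower-order bound governed by $\Delta_{W,F}$. The plan is therefore to decompose $M_{kk'}=\lambda_{kk'}\Id_D+R_{kk'}$ into its isotropic part $\lambda_{kk'}:=\tfrac1D\operatorname{tr}(M_{kk'})$ and a remainder: the isotropic part contributes $\lambda_{kk'}\langle e_k,e_{k'}\rangle$, and since $|\lambda_{kk'}|\lesssim\tfrac{m}{D}(\log m/D)^{r-1}$ (from $|P_{k\ell}P_{k'\ell}|\lesssim(\log m/D)^{r-1}$), summing $\sum_{k\neq k'}|\lambda_{kk'}||\langle e_k,e_{k'}\rangle|$ delivers a term bounded by $C m (\log m/D)^{(r-1)/2}\Delta_{W,O}$, which sits inside the target coefficient. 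The remainder $\sum_{k\neq k'}|\langle e_k,R_{kk'}e_{k'}\rangle|$ is the genuinely delicate piece: a naive operator-norm bound $\|R_{kk'}\|\,\|e_k\|_2\|e_{k'}\|_2$ is too lossy (it reintroduces an extra factor $m$), so one must exploit the incoherence of $\{\hw_\ell\}$ to show this remainder is absorbed into the $\Delta_{W,F}^2$ term already produced. I expect this incoherence-based control of $R_{kk'}$ to be the main obstacle; the boundary terms with $\ell\in\{k,k'\}$ are all higher order in $\delta_{\max}$ and pose no difficulty.

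Finally, the second (summed) estimate follows by substituting the first bound into $\sum_{r=2}^R 2^r\sum_\ell S_{r,\ell}^2$. The $r=2$ summand is dominant: its $\Delta_{W,F}^2$ coefficient behaves like $m(\log m/D)=\tfrac{m\log m}{D}$ and its $\Delta_{W,O}$ coefficient like $m(\log m/D)^{3/2}=\tfrac{m\log m}{D}(\log m/D)^{1/2}$. Since there are only finitely many terms (so $\sum_{r=2}^R 2^r r^2 M_r<\infty$, with the resulting constant allowed to depend on $R$) and $m\log m/D\geq\mathrm{const}$ because $m\geq D$, the subdominant $r\geq3$ terms and the ``$1$'' contributions are absorbed, giving the claimed $\tfrac{Cm\log m}{D}\big(\Delta_{W,F}^2+(\log m/D)^{1/2}\Delta_{W,O}\big)$.
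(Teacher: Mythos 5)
Your proposal opens with the same geometric-sum factorization as the paper, and your treatment of the diagonal ($k=k'$) contribution is correct (including the absorption of $(\log m/D)^{r-1/2}$ into $(\log m/D)^{r/2}$ and the $\ell=k$ boundary term). However, the proof has a genuine gap exactly where you flag it: the off-diagonal contribution. After expanding
\begin{equation*}
\sum_{\ell=1}^m S_{r,\ell}^2=\sum_{k,k'}\mu_r(g_{\tau_k})\mu_r(g_{\tau_{k'}})\,\langle e_k,M_{kk'}e_{k'}\rangle,\qquad M_{kk'}=\sum_{\ell}\mu_r(g^{(1)}_{\htau_\ell})^2P_{k\ell}P_{k'\ell}\,\hw_\ell\hw_\ell^\top,
\end{equation*}
you must bound $\sum_{k\neq k'}\big|\langle e_k,R_{kk'}e_{k'}\rangle\big|$ for the non-isotropic remainder $R_{kk'}=M_{kk'}-\tfrac{1}{D}\operatorname{tr}(M_{kk'})\Id_D$ by the $\Delta_{W,F}^2$ target, and you offer no argument for this step --- only the expectation that incoherence will make it work. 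This is not a routine detail: as your own computation indicates, the tools at hand (Cauchy--Schwarz over $\ell$ together with the Gershgorin bound on $\|\wW\|^2$) lose a factor of order $m(\log m/D)^{1/2}$ on the dominant terms $r=2,3$, which is unbounded in the regime $m\geq D$, $Cm\log^2 m\leq D^2$; and one cannot appeal to positive semidefiniteness or to cancellations in the signed sum without circularly invoking the very bound being proved. So, as written, the proposal does not constitute a proof.

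The paper's proof avoids the matrices $M_{kk'}$ (and hence the remainder problem) entirely by reversing the order of operations. It first lifts the factorization to tensor form, writing $\langle e_k,\hw_\ell\rangle P_{k\ell}=\big\langle e_k\otimes T_{k,r},\,\hw_\ell^{\otimes r}\big\rangle$ with $T_{k,r}=\sum_{i=1}^{r}\hw_k^{\otimes(r-i)}\otimes w_k^{\otimes(i-1)}$, so that $S_{r,\ell}=\mu_r(g^{(1)}_{\htau_\ell})\langle T,\hw_\ell^{\otimes r}\rangle$ for the \emph{single} tensor $T=\sum_k\mu_r(g_{\tau_k})\,e_k\otimes T_{k,r}$, independent of $\ell$. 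The frame bound of Lemma \ref{lemma:aux_grammian_frame_bound} then gives $\sum_\ell\langle T,\hw_\ell^{\otimes r}\rangle^2\leq\|\widehat{G}_r\|\,\|T\|_F^2$, with $\|\widehat{G}_r\|\leq C\big(1+m(\log m/D)^{r/2}\big)$ by Lemma \ref{lemma:grammians_inc_greshgorin}, and only \emph{then} is the square expanded: $\|T\|_F^2=\sum_{k,k'}\mu_r(g_{\tau_k})\mu_r(g_{\tau_{k'}})\langle e_k,e_{k'}\rangle\langle T_{k,r},T_{k',r}\rangle$, where the cross terms factorize exactly because the inner product of tensor products splits. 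Since $|\langle T_{k,r},T_{k',r}\rangle|\leq Cr^2(\log m/D)^{(r-1)/2}$ for $k\neq k'$ by \ref{enum:correlation} and point (i) of Lemma \ref{lemma:incoherence_apx_weigths}, the off-diagonal sum is controlled term by term in absolute value by $Cr^2(\log m/D)^{(r-1)/2}\Delta_{W,O}$ --- no residual matrix ever appears and no cancellation is needed. This tensorize-then-frame-bound ordering is precisely the idea missing from your route; if you wish to salvage your decomposition, you would in effect have to reprove this factorization in disguise. Your second part (summing $2\leq r\leq R$ and absorbing constants via $m\log m/D\gtrsim1$) agrees with the paper and is fine once the first bound is established.
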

\begin{proof}[Proof of Lemma \ref{lemma:aux_for_delta_grads}]
	Throughout this proof we use the convention that, for any vector, we have $v^{\otimes 0} = 1$, $1 \otimes 1 = 1$ and $v \otimes 1 = 1 \otimes v = v$, which will be relevant for the case $r=1$. 
	We start with a chain of equalities that uses elementary properties of the Frobenius inner product: 
\begin{align*}
	\sum^m_{\ell = 1}S_{r,\ell}^2 =&\sum^m_{\ell = 1}\left[\sum^m_{k=1}  \mu_{r}(g_{\tau_k}) \mu_{r}(g^{(1)}_{\htau_\ell})\left( \inner{\hw_k, \hw_\ell}^r - \inner{w_k, \hw_\ell}^r \right) \right]^2 \\
	=&  \sum^m_{\ell = 1}\left[\sum^m_{k=1}  \mu_{r}(g_{\tau_k}) \mu_{r}(g^{(1)}_{\htau_\ell})\inner{\hw_k - w_k, \hw_\ell} \left( \sum^{r}_{i=1}\inner{\hw_k, \hw_\ell}^{r-i}  \inner{w_k, \hw_\ell}^{i-1} \right) \right]^2 \\
	=&\sum^m_{\ell = 1}\left[\sum^m_{k=1}  \mu_{r}(g_{\tau_k}) \mu_{r}(g^{(1)}_{\htau_\ell})\inner{\hw_k - w_k, \hw_\ell}  \left\langle \sum^{r}_{i=1} \hw_k^{\otimes (r-i)} \otimes w_k^{\otimes (i - 1)}, \hw_\ell^{\otimes r-1}\right\rangle    \right]^2\\
	=&\sum^m_{\ell = 1}\left[\sum^m_{k=1}  \mu_{r}(g_{\tau_k}) \mu_{r}(g^{(1)}_{\htau_\ell})  \left\langle (\hw_k - w_k) \otimes \sum^{r}_{i=1} \left( \hw_k^{\otimes (r-i)} \otimes w_k^{\otimes (i - 1)} \right), \hw_\ell^{\otimes r}\right\rangle    \right]^2\\
	=&\sum^m_{\ell = 1}\left[ \mu_{r}(g^{(1)}_{\htau_\ell})  \left\langle \sum^m_{k=1}  \mu_{r}(g_{\tau_k}) (\hw_k - w_k) \otimes \sum^{r}_{i=1} \left( \hw_k^{\otimes (r-i)} \otimes w_k^{\otimes (i - 1)} \right), \hw_\ell^{\otimes r}\right\rangle    \right]^2\\
	=&\sum^m_{\ell = 1} \mu_{r}(g^{(1)}_{\htau_\ell})^2   \left\langle \sum^m_{k=1}  \mu_{r}(g_{\tau_k}) (\hw_k - w_k) \otimes \sum^{r}_{i=1} \left( \hw_k^{\otimes (r-i)} \otimes w_k^{\otimes (i - 1)} \right), \hw_\ell^{\otimes r}\right\rangle^2.
\end{align*}
At this stage, we separate the coefficients depending on $\ell$ such that 
\begin{align*}
	&\sum^m_{\ell = 1}\left[\sum^m_{k=1}  \mu_{r}(g_{\tau_k}) \mu_{r}(g^{(1)}_{\htau_\ell})\left( \inner{\hw_k, \hw_\ell}^r - \inner{w_k, \hw_\ell}^r \right) \right]^2 \\
	\leq& \max_{\ell \in [m]} \mu_{r}(g^{(1)}_{\htau_\ell})^2 \sum^m_{\ell = 1}    \left\langle \sum^m_{k=1}  \mu_{r}(g_{\tau_k}) (\hw_k - w_k) \otimes \sum^{r}_{i=1} \left(\hw_k^{\otimes (r-i)} \otimes w_k^{\otimes (i - 1)}\right), \hw_\ell^{\otimes r}\right\rangle^2.
\end{align*}
Now, note that the set of tensors $(\hw_\ell^{\otimes r})_{\ell \in [m]}$ forms a frame whose upper frame constant is bounded by the upper spectrum of the Grammian $(\widehat{G}_r)_{ij} = \inner{\hw_i, \hw_j}^r$, see also Lemma \ref{lemma:aux_grammian_frame_bound}. Due to Lemma \ref{lemma:grammians_inc_greshgorin} which relies on Gershgorin's circle theorem we know there exists an absolute constant $C>0$ such that for $D$ sufficiently large the operator norm of $\widehat{G}_r$ obeys
\begin{align*}
	\|\widehat{G}_r\|\leq C {\left( 1+  m \left(\frac{\log m}{D}\right)^{r / 2}\right)}. 
\end{align*}
As a consequence, we get
\begin{align}
	&\max_{\ell \in [m]} \mu_{r}(g^{(1)}_{\htau_\ell})^2 \sum^m_{\ell = 1}    \left\langle \sum^m_{k=1}  \mu_{r}(g_{\tau_k}) (\hw_k - w_k) \otimes \sum^{r}_{i=1} \left(\hw_k^{\otimes (r-i)} \otimes w_k^{\otimes (i - 1)}\right), \hw_\ell^{\otimes r}\right\rangle^2\notag \\
	\leq& \,\max_{\ell \in [m]} \mu_{r}(g^{(1)}_{\htau_\ell})^2 \|\widehat{G}_r\| \left\| \sum^m_{k=1}  \mu_{r}(g_{\tau_k}) (\hw_k - w_k) \otimes \sum^{r}_{i=1} \left( \hw_k^{\otimes (r-i)} \otimes w_k^{\otimes (i - 1)} \right) \right\|_F^2\notag\\
	\leq& \, C \max_{\ell \in [m]} \mu_{r}(g^{(1)}_{\htau_\ell})^2 {\left( 1+  m \left(\frac{\log m}{D}\right)^{r / 2}\right)} \left\| \sum^m_{k=1}  \mu_{r}(g_{\tau_k}) (\hw_k - w_k) \otimes \sum^{r}_{i=1} \left( \hw_k^{\otimes (r-i)} \otimes w_k^{\otimes (i - 1)} \right) \right\|_F^2.\label{eq:before_bound_delta_Tkr}
\end{align}
Denote now $\Delta_{k,r} := \mu_{r}(g_{\tau_k}) (\hw_k - w_k)$ and $T_{k,r} := \sum^{r}_{i=1} \left( \hw_k^{\otimes (r-i)} \otimes w_k^{\otimes (i - 1)} \right)$, then 
\begin{align}
	&\left\| \sum^m_{k=1}  \mu_{r}(g_{\tau_k}) (\hw_k - w_k) \otimes \sum^{r}_{i=1} \left( \hw_k^{\otimes (r-i)} \otimes w_k^{\otimes (i - 1)} \right) \right\|_F^2\notag \\
	=& \sum^m_{k,k'=1}\inner{\Delta_{k,r}\otimes T_{k,r} , \Delta_{k',r} \otimes T_{k',r}} =  \sum^m_{k,k'=1}\inner{\Delta_{k,r}, \Delta_{k',r}} \inner{T_{k,r} ,  T_{k',r}} \notag\\
	= &\, \sum^m_{k=1}\| \Delta_{k,r}\|_2^2 \|T_{k,r}\|_F^2 + \sum^m_{k\neq k'}\inner{\Delta_{k,r}, \Delta_{k',r}} \inner{T_{k,r} ,  T_{k',r}}\label{eq:bound_delta_Tkr}.
\end{align}
Using $\|w_k\|_2 = \|\hw_k\|_2 = 1$ we get
\begin{align*}
	\|T_{k,r}\|_F \leq \sum^r_{i=1} \| \hw_k^{\otimes (r-i)} \otimes w_k^{\otimes (i - 1)} \|_F \leq \sum^r_{i=1} \|\hw_k \|_2^{r-i} \|w_k \|_2^{i-1} = r,
\end{align*}
such that the left part of \eqref{eq:bound_delta_Tkr} can be estimated by 
\begin{equation}
    \begin{split}
        \label{eq:bound_delta_Tkr_left}
	\sum^m_{k=1}\| \Delta_{k,r}\|_2^2 \|T_{k,r}\|_F^2 &\leq r^2 \max_{k \in [m]} \mu_r(g_{\tau_k})^2 \sum^m_{k=1} \| \hw_k - w_k \|_2^2 \\
	& = r^2 \max_{k \in [m]} \mu_r(g_{\tau_k})^2 \| \wW - W\|_F^2. 
    \end{split}
\end{equation}
To bound the right part of \eqref{eq:bound_delta_Tkr} first note that, for $k\neq k'$,
\begin{align*}
	\inner{T_{k,r} ,  T_{k',r}}  &= \sum^r_{i,i'=1} \left\langle \hw_k^{\otimes (r-i)} \otimes w_k^{\otimes(i-1)}, \hw_{k'}^{\otimes (r-i')} \otimes w_{k'}^{\otimes(i'-1)} \right\rangle\\
	&\leq C \sum^r_{i,i'=1} \left(\frac{\log m }{D}\right)^{(r-1)/2} = C r^2 \left(\frac{\log m }{D}\right)^{(r-1)/2} ,
\end{align*} 
for some absolute constant $C$, which follows from the pairwise incoherence \ref{enum:correlation} as well as point (i) of Lemma \ref{lemma:incoherence_apx_weigths}. Therefore, the right part of \eqref{eq:bound_delta_Tkr}
is bounded by 
\begin{equation}
    \begin{split}
    \label{eq:bound_delta_Tkr_right}
	\sum^m_{k\neq k'}\inner{\Delta_{k,r}, \Delta_{k',r}} \inner{T_{k,r} ,  T_{k',r}} &\leq C r^2 \left(\frac{\log m }{D}\right)^{(r-1)/2} \sum^m_{k\neq k'} \left|\inner{\Delta_{k,r}, \Delta_{k',r}}\right| \\ &\leq C r^2 \left(\frac{\log m }{D}\right)^{(r-1)/2} \max_{k \in [m]} \mu_r(g_{\tau_k})^2  \sum^m_{k\neq k'} \left|\inner{\hw_{k} - w_k, \hw_{k'} - w_k'}\right|.
    \end{split}
\end{equation}
Plugging in \eqref{eq:bound_delta_Tkr_left} and \eqref{eq:bound_delta_Tkr_right} into \eqref{eq:bound_delta_Tkr} yields 
\begin{align}
	&\left\| \sum^m_{k=1}  \mu_{r}(g_{\tau_k}) (\hw_k - w_k) \otimes \sum^{r}_{i=1} \left( \hw_k^{\otimes (r-i)} \otimes w_k^{\otimes (i - 1)} \right) \right\|_F^2 \\
	\leq &\, C r^2 \max_{k \in [m]} \mu_r(g_{\tau_k})^2 \left[  \| \wW - W\|_F^2 +  \left(\frac{\log m }{D}\right)^{(r-1)/2}  \sum^m_{k\neq k'} \left|\inner{\hw_{k} - w_k, \hw_{k'} - w_k'}\right| \right] \label{eq:delta_grads_aux_final_frobound}.
\end{align}
Combining this with \eqref{eq:before_bound_delta_Tkr} yields the desired first statement
\begin{align*} 
	\sum^m_{\ell = 1}S_{r,\ell}^2 &\leq C r^2 \max_{k, \ell \in [m]} \mu_{r}(g^{(1)}_{\htau_\ell})^2\mu_r(g_{\tau_k})^2 {\left( 1+ m \left(\frac{\log m}{D}\right)^{r / 2}\right)}\\
	&\cdot \left[ \Delta_{W, F}^2 +  \left(\frac{\log m }{D}\right)^{(r-1)/2} \Delta_{W, O} \right].
\end{align*}
For the second statement, note that  $\max_{k \in [m]} \mu_r(g_{\tau_k})^2$ is bounded due to \ref{enum:nonpolynomoial} and $\max_{\ell \in [m]} \mu_{r}(g^{(1)}_{\htau_\ell})^2$ is bounded according to Lemma \ref{lemma:bounded_in_gaussianL2}. Hence it follows that  
\begin{align*}
	\sum_{r=2}^R 2^r  \sum^m_{\ell=1} S_{r,\ell}^2 &\leq  \sum_{r=2}^R 2^r  C r^2   {\left( 1+  m \left(\frac{\log m}{D}\right)^{r / 2}\right)}\left[  \Delta_{W, F}^2 +  \left(\frac{\log m }{D}\right)^{(r-1)/2}  \Delta_{W, O }\right]\\ 
	&\leq    {\left( 1+  m \left(\frac{\log m}{D}\right)\right)}\left(  \Delta_{W, F}^2 +   \left(\frac{\log m}{D}\right)^{1/2}  \Delta_{W, O }\right) \sum_{r=1}^R 2^r  C r^2.
\end{align*}
The second statement follows from the upper bound above by adjusting the constant $C$ due to $\sum_{r=1}^R 2^r  C r^2 < \infty$ for fixed $R$ and using $(m \log m) / D  > 1$. 
\end{proof}
\begin{lemma}\label{lemma:sum_1_hermites}
	Consider weights and approximated weights $(w_k)_{k\in [m]},(\hw_k)_{k\in [m]}$ of unit norm as before that both fulfill \ref{enum:correlation} and (i) in Lemma \ref{lemma:incoherence_apx_weigths}, as well shifts $(\tau_k)_{k\in [m]},(\htau_k)_{k\in [m]}$ within $[-\tau_\infty, \tau_\infty]$ for some $\tau_\infty < \infty$. Let $S_{r, \ell}$ be defined as in \eqref{eq:def_hermite_series_coefficients} and assume that $g$ fulfills the assumption \ref{enum:activation} - \ref{enum:nonpolynomoial}. Then, there exists a constant $C>0$ such that for $m \geq D$
	\begin{align*}
		\sum^m_{\ell=1} S_{1,\ell}^2 \leq C m \left(\frac{\log m }{D}\right)^{1/2} \left\| \sum^m_{k=1} w_k - \hw_k\right\|_2^2.
	\end{align*}
\end{lemma}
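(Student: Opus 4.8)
The plan is to specialise the general expression for $S_{r,\ell}$ to $r=1$, where the telescoping factor $\sum_{i=1}^r \inner{\hw_k,\hw_\ell}^{r-i}\inner{w_k,\hw_\ell}^{i-1}$ that appears in Lemma \ref{lemma:aux_for_delta_grads} collapses to $1$, so that $\inner{\hw_k,\hw_\ell}-\inner{w_k,\hw_\ell}=\inner{\hw_k-w_k,\hw_\ell}$ is exactly linear in the weight errors. First I would factor out the $\ell$-dependent coefficient and write
\[
S_{1,\ell} = \mu_1(g^{(1)}_{\htau_\ell})\,\inner{v,\hw_\ell}, \qquad v := \sum_{k=1}^m \mu_1(g_{\tau_k})\,(\hw_k-w_k).
\]
Squaring and summing over $\ell$, and pulling out the uniformly bounded factor $\max_\ell \mu_1(g^{(1)}_{\htau_\ell})^2$ (finite by \ref{enum:activation} and Lemma \ref{lemma:bounded_in_gaussianL2}, since $|\mu_1(g^{(1)}_{\htau_\ell})|^2\le\sqrt{2\pi}\kappa^2$), reduces the problem to the frame sum $\sum_\ell \inner{v,\hw_\ell}^2 = \|\wW^\top v\|_2^2 \le \|\wW\|^2\|v\|_2^2$.

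Next I would control the frame constant $\|\wW\|^2=\|\wW^\top\wW\|$ by the incoherence of the approximated weights, reusing the Gershgorin estimate already carried out inside the proof of Lemma \ref{lemma:incoherence_apx_weigths}, namely $\|\wW\|^2 \le 1 + m\big(2c_2\log m/D\big)^{1/2} \le C m (\log m/D)^{1/2}$, where the last step uses $m\ge D$ so that $m(\log m/D)^{1/2}\ge 1$ for $D$ large. At this stage one obtains $\sum_\ell S_{1,\ell}^2 \le C m (\log m/D)^{1/2}\,\|v\|_2^2$, so the entire content of the lemma reduces to bounding $\|v\|_2^2$ by the \emph{coherent} sum $\big\|\sum_k (w_k-\hw_k)\big\|_2^2$.

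The hard part is precisely this last reduction, because $v$ weights the per-neuron errors $\hw_k-w_k$ by the neuron-dependent coefficients $\mu_1(g_{\tau_k})$, whereas the target involves the unweighted sum. Expanding $\|v\|_2^2=\sum_{k,k'}\mu_1(g_{\tau_k})\mu_1(g_{\tau_{k'}})\inner{\hw_k-w_k,\hw_{k'}-w_{k'}}$ naively would, exactly as in the $r\ge 2$ analysis, only return the Frobenius term $\Delta_{W,F}^2$ together with the \emph{unsuppressed} alignment term $\Delta_{W,O}$ — the factor $(\log m/D)^{(r-1)/2}$ that tames $\Delta_{W,O}$ for $r\ge 2$ degenerates to $1$ at $r=1$. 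The device I would use to avoid this is the sign condition built into \ref{enum:activation}: integration by parts gives $\mu_1(g_{\tau_k}) = \mu_0(g^{(1)}_{\tau_k}) = \int_\R g^{(1)}(t+\tau_k)\,w_G(t)\,dt$, whose sign is the constant $s$ for every $\tau_k\in[-\tau_\infty,\tau_\infty]$. Hence all coefficients share one sign and are uniformly bounded, $|\mu_1(g_{\tau_k})|\le\kappa$, so that $v = s\sum_k |\mu_1(g_{\tau_k})|(\hw_k-w_k)$ is a sign-coherent combination rather than an incoherent one; this is what permits replacing $\|v\|_2^2$ by a constant multiple of $\big\|\sum_k (w_k-\hw_k)\big\|_2^2$ and absorbing the coefficient magnitudes into $C$. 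I expect the genuinely delicate point to be making this replacement rigorous, i.e. arguing that the \emph{variation} of $\mu_1(g_{\tau_k})$ across neurons does not reintroduce a $\Delta_{W,F}$- or $\Delta_{W,O}$-type contribution — which is exactly why \ref{enum:activation} (constant sign), and not merely boundedness of $g$, must be invoked at this step. Collecting the three bounds then yields
\[
\sum_{\ell=1}^m S_{1,\ell}^2 \;\le\; C\,m\left(\frac{\log m}{D}\right)^{1/2}\left\| \sum_{k=1}^m w_k-\hw_k\right\|_2^2,
\]
as claimed.
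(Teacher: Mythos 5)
Your proposal reproduces the paper's proof essentially step for step: the paper likewise specializes the bound \eqref{eq:before_bound_delta_Tkr} to $r=1$ (with the Grammian estimate of Lemma \ref{lemma:grammians_inc_greshgorin} supplying the factor $m(\log m/D)^{1/2}$ for $m\geq D$), integrates by parts to identify $\mu_1(g_{\tau_k})=\int_\R g^{(1)}(t+\tau_k)e^{-t^2/2}\,dt$, and invokes the constant-sign condition in \ref{enum:activation} to pass from $\bigl\|\sum_k \mu_1(g_{\tau_k})(w_k-\hw_k)\bigr\|_2$ to $\bigl\|\sum_k (w_k-\hw_k)\bigr\|_2$. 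The replacement step you flag as the genuinely delicate point is exactly where the paper is also at its loosest — it asserts that once the $\mu_1(g_{\tau_k})$ share a sign and are bounded the proof ``follows directly,'' with no more justification than you give — so your reconstruction matches the paper's argument, including its weakest step.
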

\begin{proof}
	According to the proof of Lemma \ref{lemma:aux_for_delta_grads}, in particular \eqref{eq:before_bound_delta_Tkr}, we can bound 
	\begin{align*}
		\sum^m_{\ell=1} S_{1,\ell}^2 \leq C m \left(\frac{\log m }{D}\right)^{1/2} \left\| \sum^m_{k=1} \mu_1(g_{\tau_k})(w_k - \hw_k)\right\|_2^2,
	\end{align*}
	for some constant $C>0$. Since $\mu_1(g_{\tau_k})$ is bounded for all $k\in [m]$, what remains is to show that the Hermite coefficients do not change signs. Note that the first Hermite polynomial is given by $h_1(u) = u$. According to the definition of the Hermite coefficients we have 
	\begin{align*}
		\mu_1(g_{\tau_k}) &= \int_\R u g(u + \tau_k) e^{-u^2 / 2} du = \left[ - g(u + \tau_k) e^{-u^2 / 2} \right]_{-\infty}^{\infty} + \int_\R  g^{(1)}(u + \tau_k) e^{-u^2 / 2} du \\
		&= \int_\R  g^{(1)}(u + \tau_k) e^{-u^2 / 2} du.
	\end{align*}
	Now note that $g^{(1)}(u + \tau_k)$ will always have the same sign since $g^{(2)}$ is monotonic due to \ref{enum:activation}. Therefore $\mu_1(g_{\tau_1}), \dots, \mu_1(g_{\tau_m})$ must all be either positive or negative, from which the proof follows directly. 
\end{proof}
\begin{lemma}\label{lemma:bound_hermite_tail}
	Assume that $g$ fulfills the assumption \ref{enum:activation}-\ref{enum:nonpolynomoial} and that the shifts $(\tau_k)_{k\in [m]},(\htau_k)_{k\in [m]}$ are within $[-\tau_\infty, \tau_\infty]$. Then, for $R\geq 4$, we have  
\begin{align}\label{eq:hermite_tail_series}
	\sum_{r \geq R} r |\max_{k,\ell \in [m]} \mu_r(g_{\tau_k})\mu_r(g^{(1)}_{\htau_\ell})| < \infty.
\end{align}
\end{lemma}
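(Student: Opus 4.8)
The plan is to exploit the smoothness encoded in \ref{enum:activation} to upgrade the mere square-summability of the Hermite coefficients (Lemma \ref{lemma:bounded_in_gaussianL2}) into a polynomial \emph{decay rate}, which is exactly what is needed to absorb the extra factor $r$ in \eqref{eq:hermite_tail_series}. The central tool is the differentiation identity for normalized Hermite coefficients: for a differentiable $h$ with controlled growth,
\[
\mu_r(h) = \frac{1}{\sqrt{r}}\,\mu_{r-1}(h').
\]
I would derive this by integration by parts, using $h_r(y)\,w_G(y) = -\tfrac{1}{\sqrt r}\frac{d}{dy}\big[h_{r-1}(y)\,w_G(y)\big]$, which follows directly from the Rodrigues-type definition of $h_r$. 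The boundary terms vanish because $\|g^{(1)}\|_\infty \leq \kappa$ forces $g$ (hence $g_\tau$) to grow at most linearly, while $h_{r-1}\,w_G$ decays like a polynomial times $e^{-y^2/2}$; for the higher derivatives of $g$, which are bounded, this is immediate.

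First I would iterate the identity. Since $g \in \CC^3$, applying it three times yields, for $r \geq 4$,
\[
\mu_r(g_{\tau_k}) = \frac{1}{\sqrt{r(r-1)(r-2)}}\,\mu_{r-3}\big(g^{(3)}_{\tau_k}\big),
\]
and, since $(g^{(1)})'' = g^{(3)}$ is bounded, applying it twice to $g^{(1)}_{\htau_\ell}$ gives
\[
\mu_r\big(g^{(1)}_{\htau_\ell}\big) = \frac{1}{\sqrt{r(r-1)}}\,\mu_{r-2}\big(g^{(3)}_{\htau_\ell}\big).
\]
Both displays land on the Hermite coefficients of the \emph{bounded} function $g^{(3)}$. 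I would then bound these residual coefficients uniformly: by Lemma \ref{lemma:bounded_in_gaussianL2} applied to $g^{(3)}_{\tau}$, for every $\tau \in [-\tau_{\infty},\tau_{\infty}]$ we have $\sum_j \mu_j(g^{(3)}_\tau)^2 \leq \sqrt{2\pi}\,\kappa^2$, so $|\mu_j(g^{(3)}_\tau)| \leq (2\pi)^{1/4}\kappa$ for all $j$ and all such $\tau$.

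Combining these, for every $k,\ell$ and every $r \geq 4$ (using $\sqrt{r(r-1)(r-2)}\,\sqrt{r(r-1)} \geq c\,r^{5/2}$ for an absolute $c>0$),
\[
r\,\big|\mu_r(g_{\tau_k})\,\mu_r(g^{(1)}_{\htau_\ell})\big| \leq \frac{r\,\sqrt{2\pi}\,\kappa^2}{\sqrt{r(r-1)(r-2)}\,\sqrt{r(r-1)}} \leq C\,\kappa^2\,r^{-3/2}.
\]
Since this bound is uniform over $k,\ell$, the maximum obeys the same estimate, and the claim follows from the convergence of the $p$-series with $p=3/2$:
\[
\sum_{r \geq R} r\,\Big|\max_{k,\ell}\mu_r(g_{\tau_k})\,\mu_r(g^{(1)}_{\htau_\ell})\Big| \leq C\,\kappa^2\sum_{r \geq R} r^{-3/2} < \infty.
\]

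The only genuinely delicate point is the justification of the differentiation identity, i.e. verifying that the integration-by-parts boundary terms vanish; this is exactly where the linear-growth consequence of $\|g^{(1)}\|_\infty \leq \kappa$ is used, together with the Gaussian decay of $h_{r-1}\,w_G$. Everything else is power counting: the combined three-fold and two-fold differentiation produces a factor $r^{-5/2}$, which the weight $r$ in the series reduces to the summable $r^{-3/2}$. I would emphasize that a naive argument relying only on $\mu_r \to 0$ via Parseval would fail, since the factor $r$ in \eqref{eq:hermite_tail_series} makes genuine exploitation of the $\CC^3$ smoothness of $g$ unavoidable.
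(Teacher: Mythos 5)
Your proof is correct and follows essentially the same route as the paper: the paper likewise applies its integration-by-parts identity (Lemma \ref{lemma:hermite_coefficients_derivatives}) three times to $g_{\tau_k}$ and twice to $g^{(1)}_{\htau_\ell}$, lands on coefficients of the bounded function $g^{(3)}$, and concludes from the resulting $r^{-3/2}$ decay. The only cosmetic difference is the last step, where the paper splits the product via Cauchy--Schwarz into two sums of squared coefficients before invoking Lemma \ref{lemma:bounded_in_gaussianL2}, while you bound each coefficient uniformly by $(2\pi)^{1/4}\kappa$ directly --- an equivalent estimate.
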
 
\begin{proof}
	By applying Lemma \ref{lemma:hermite_coefficients_derivatives} (whose condition is met due to \ref{enum:activation} - \ref{enum:nonpolynomoial}), we immediately get that, for all $r\geq R$, 
	\begin{align*}
		\mu_r(g_{\tau_k}) \mu_r(g^{(1)}_{\htau_\ell}) &= \left(3! \binom{r}{r-3}\right)^{-1/2}\mu_{r-3}(g^{(3)}_{\tau_k}) \cdot \left(2! \binom{r}{r-2}\right)^{-1/2}\mu_{r-2}(g^{(3)}_{\htau_\ell}) \\
		&= \left( (r-2)(r-1)^2 r^2  \right)^{-1/2} \mu_{r-3}(g^{(3)}_{\tau_k}) \mu_{r-2}(g^{(3)}_{\htau_\ell}).
	\end{align*}
	Plugging this into \eqref{eq:hermite_tail_series} yields 
	\begin{align*}
		\sum_{r \geq R} r |\max_{k,\ell \in [m]} \mu_r(g_{\tau_k})\mu_r(g^{(1)}_{\htau_\ell})| &\leq \sum_{r \geq R}\frac{1}{\sqrt{r-2}(r-1)}  \max_{k, \ell \in [m]} \mu_{r-3}(g^{(3)}_{\tau_k})\mu_{r-2}(g^{(3)}_{\htau_\ell}) \\
		&\leq  \sum_{r \geq R-3} \max_{k \in [m]} \frac{1}{r^{3/2}}\mu_{r}(g^{(3)}_{\tau_k})^2 +  \sum_{r \geq R-2} \max_{\ell \in [m]} \frac{1}{r^{3/2}}\mu_{r}(g^{(3)}_{\htau_\ell})^2,
	\end{align*}
	where the second line follows by applying Cauchy-Schwarz. Using assumption \ref{enum:activation}, according to Lemma \ref{lemma:bounded_in_gaussianL2}, then gives $\max_{\tau \in [-\tau_\infty, \tau_\infty]} \mu_r(g^{(3)}_{\tau})^2 \leq C$ for all $r\geq 0$ and some constant $C>0$. Therefore we can conclude with 
	\begin{align*}
		\sum_{r \geq R} r |\max_{k,\ell \in [m]} \mu_r(g_{\tau_k})\mu_r(g^{(1)}_{\htau_\ell})| \leq 2C \sum_{r \geq 1}  \frac{1}{r^{3/2}} \leq 6C < \infty.
	\end{align*}
\end{proof}

\begin{lemma}\label{lemma:delta_grads_tail_bound}
Consider weights and approximated weights $(w_k)_{k\in [m]},(\hw_k)_{k\in [m]}$ of unit norm as before that both fulfill \ref{enum:correlation} and (i) in Lemma \ref{lemma:incoherence_apx_weigths}, as well as shifts $(\tau_k)_{k\in [m]},(\htau_k)_{k\in [m]}$ within $[-\tau_\infty, \tau_\infty]$ for some $\tau_\infty < \infty$.
Let $S_{r, \ell}$ be defined as in \eqref{eq:def_hermite_series_coefficients}, and assume that $g$ fulfills the assumption \ref{enum:activation}. Then, there exists a constant $C>0$ such that for $R\geq 9$ we have 
\begin{align*}
	\sum^m_{\ell = 1} \left( \sum_{r \geq R}S_{r,\ell} \right)^2 \leq C \sqrt{m}\Delta_{W,F}^2.
\end{align*}
\end{lemma}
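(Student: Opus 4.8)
The plan is to decouple the summation over the Hermite degree $r$ from the outer square by a weighted Cauchy--Schwarz inequality, and then feed in the per-degree bound of Lemma~\ref{lemma:aux_for_delta_grads} together with the tail-summability of the Hermite coefficients from Lemma~\ref{lemma:bound_hermite_tail}. Concretely, for any positive weight sequence $(a_r)_{r\geq R}$ to be chosen, I would first write
$$\sum_{\ell=1}^m\Bigl(\sum_{r\geq R}S_{r,\ell}\Bigr)^2 \leq \Bigl(\sum_{r\geq R}a_r\Bigr)\sum_{r\geq R}a_r^{-1}\sum_{\ell=1}^m S_{r,\ell}^2,$$
which reduces the claim to bounding a summable combination of the quantities $\sum_{\ell}S_{r,\ell}^2$ that Lemma~\ref{lemma:aux_for_delta_grads} already controls.

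Next I would substitute that lemma. Abbreviating $\nu_r := \max_{k,\ell\in[m]}\mu_r(g^{(1)}_{\htau_\ell})^2\mu_r(g_{\tau_k})^2$, it gives
$$\sum_{\ell=1}^m S_{r,\ell}^2 \leq Cr^2\nu_r\Bigl(1+m(\log m/D)^{r/2}\Bigr)\Bigl[\Delta_{W,F}^2 + (\log m/D)^{(r-1)/2}\Delta_{W,O}\Bigr].$$
For $r\geq R\geq 9$ the exponents $r/2$ and $(r-1)/2$ are both at least $4$, so under the standing assumption $Cm\log^2 m\leq D^2$ the factors $m(\log m/D)^{r/2}$ and $m(\log m/D)^{(r-1)/2}$ are $\leq 1$ for $D$ large. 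Combined with the elementary estimate $\Delta_{W,O}\leq(\sum_k\|\hw_k-w_k\|_2)^2\leq m\Delta_{W,F}^2$, the bracket collapses and one obtains the clean uniform bound $\sum_{\ell}S_{r,\ell}^2\leq Cr^2\nu_r\Delta_{W,F}^2$ valid for every $r\geq R$.

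The key is then the choice $a_r = r\sqrt{\nu_r}$, where $\sqrt{\nu_r}=\max_{k,\ell}|\mu_r(g_{\tau_k})\mu_r(g^{(1)}_{\htau_\ell})|$. On one hand, $\sum_{r\geq R}a_r=\sum_{r\geq R}r\sqrt{\nu_r}<\infty$ is exactly Lemma~\ref{lemma:bound_hermite_tail} (whose hypothesis $R\geq 4$ is implied by $R\geq 9$); on the other hand $a_r^{-1}\cdot r^2\nu_r=r\sqrt{\nu_r}=a_r$, so $\sum_{r\geq R}a_r^{-1}r^2\nu_r<\infty$ as well. Inserting these into the first display yields
$$\sum_{\ell=1}^m\Bigl(\sum_{r\geq R}S_{r,\ell}\Bigr)^2\leq C\Bigl(\sum_{r\geq R}r\sqrt{\nu_r}\Bigr)^2\Delta_{W,F}^2\leq C\sqrt m\,\Delta_{W,F}^2,$$
the last step using $\sqrt m\geq 1$; thus the stated $\sqrt m$ is comfortable slack over the sharper $C\Delta_{W,F}^2$.

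The \emph{main obstacle} is the tension between the factor $r^2$ — produced by the telescoping identity $\langle\hw_k,\hw_\ell\rangle^r-\langle w_k,\hw_\ell\rangle^r = \langle\hw_k-w_k,\hw_\ell\rangle\sum_{i=1}^r\langle\hw_k,\hw_\ell\rangle^{r-i}\langle w_k,\hw_\ell\rangle^{i-1}$ underlying Lemma~\ref{lemma:aux_for_delta_grads} — and the decay of the coefficients $\nu_r$. The whole argument rests on Lemma~\ref{lemma:bound_hermite_tail}, which exploits the index-lowering identity for Hermite coefficients under differentiation to guarantee convergence of $\sum_r r\sqrt{\nu_r}$; this is precisely what lets the balancing weight $a_r=r\sqrt{\nu_r}$ render both Cauchy--Schwarz factors finite. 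The secondary subtlety is verifying that every $m$-dependent prefactor is genuinely harmless once $r\geq 9$, which forces the use of $Cm\log^2 m\leq D^2$ to absorb the $\Delta_{W,O}$ contribution into $\Delta_{W,F}^2$; this is exactly why the hypothesis demands $R\geq 9$ rather than merely the $R\geq 4$ needed for convergence.
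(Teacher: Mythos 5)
Your proposal is correct, and it takes a genuinely different route from the paper's. The paper squares the tail series via a Cauchy product, $\bigl(\sum_{r\geq R}S_{r,\ell}\bigr)^2=\sum_{r\geq 0}\sum_{s=0}^{r}S_{r+R-s,\ell}S_{s+R,\ell}$, applies Cauchy--Schwarz over $\ell$ (this is precisely where its factor $\sqrt{m}$ enters), and is then forced to control the fourth-order quantities $\sum_{\ell}S_{r+R-s,\ell}^2S_{s+R,\ell}^2$ by re-entering the machinery of Lemma~\ref{lemma:aux_for_delta_grads}: it rewrites them as frame-type bounds involving the Grammian $\widehat{G}_{r+2R}$ (via Lemmas~\ref{lemma:aux_grammian_frame_bound} and~\ref{lemma:grammians_inc_greshgorin}) times products of the same Frobenius norms, and only then absorbs the $\Delta_{W,O}$-terms using $R\geq 9$ and $m\log^2 m\leq D^2$. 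You instead decouple in $r$ \emph{before} touching the $\ell$-structure, using weighted Cauchy--Schwarz with weights $a_r=r\sqrt{\nu_r}$, where $\nu_r:=\max_{k,\ell}\mu_r(g^{(1)}_{\htau_\ell})^2\mu_r(g_{\tau_k})^2$; this lets you reuse the first conclusion of Lemma~\ref{lemma:aux_for_delta_grads} as a black box, and the same two pillars (the per-degree bound and the tail summability of Lemma~\ref{lemma:bound_hermite_tail}) close the argument. Your absorption steps, $m(\log m/D)^{r/2}\leq 1$ and $(\log m/D)^{(r-1)/2}\Delta_{W,O}\leq(\log m/D)^{4}m\Delta_{W,F}^2\leq\Delta_{W,F}^2$ for $r\geq 9$ and $D$ large, are exactly parallel to the paper's and invoke the standing assumption in the same way. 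Two small remarks: \emph{(i)} if $\nu_r=0$ for some $r$ then $a_r$ vanishes and $a_r^{-1}$ is undefined, but in that case $S_{r,\ell}=0$ for every $\ell$, so you should restrict the weighted Cauchy--Schwarz to $\{r\geq R:\nu_r>0\}$; \emph{(ii)} your final bound is $C\Delta_{W,F}^2$ with no $\sqrt{m}$ at all, which is strictly sharper than the stated lemma --- the paper's $\sqrt{m}$ is an artifact of its Cauchy--Schwarz over $\ell$, which your decomposition avoids entirely, at no cost in hypotheses.
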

\begin{proof}
	We start by applying the Cauchy product to the squared series 
	\begin{align*}
		\sum^m_{\ell = 1} \left( \sum_{r \geq R}S_{r,\ell} \right)^2 &= \sum^m_{\ell=1}\left(\sum_{r \geq 0}\sum^r_{s=0} S_{r+R-s, \ell}S_{s+R, \ell}\right) \\ 
	&= \sum_{r \geq 0}\sum^r_{s=0} \sum^m_{\ell=1} S_{r+R-s, \ell}S_{s+R, \ell}\\
	&\leq \sqrt{m}\sum_{r \geq 0}\sum^r_{s=0} \left(\sum^m_{\ell=1} S_{r+R-s, \ell}^2 S_{s+R, \ell}^2\right)^{1/2}.
	\end{align*}
	The inner sum is now controlled by a sequence of inequalities similar to Lemma \ref{lemma:aux_for_delta_grads}. Again we denote $\Delta_{k,r} := \mu_{r}(g_{\tau_k}) (\hw_k - w_k)$ and $T_{k,r} := \sum^{r}_{i=1} \left( \hw_k^{\otimes (r-i)} \otimes w_k^{\otimes (i - 1)} \right)$, then by applying the same chain of inequality as in the beginning of the proof of Lemma \ref{lemma:aux_for_delta_grads} we receive 
	\begin{align*}
		\sum^m_{\ell=1} S_{r+R-s, \ell}^2 S_{s+R, \ell}^2 &= \sum^m_{\ell = 1} \mu_{r+R-s}(g^{(1)}_{\htau_\ell})^2 \mu_{s+R}(g^{(1)}_{\htau_\ell})^2 \\
		&\cdot \left\langle \sum^m_{k=1} \Delta_{k, r+R-s} \otimes T_{k, r+R-s}, \hw_\ell^{\otimes r+R-s} \right\rangle^2 \left\langle \sum^m_{k=1} \Delta_{k, s+R} \otimes T_{k, s+R}, \hw_\ell^{\otimes s+R}  \right\rangle^2 \\ 
		&= \sum^m_{\ell = 1} \mu_{r+R-s}(g^{(1)}_{\htau_\ell})^2 \mu_{s+R}(g^{(1)}_{\htau_\ell})^2 \\
		&\cdot \left\langle \left( \sum^m_{k=1} \Delta_{k, r+R-s} \otimes T_{k, r+R-s}\right) \otimes \left(\sum^m_{k=1}  \Delta_{k, s+R} \otimes T_{k, s+R}\right), \hw_\ell^{\otimes r+2R}  \right\rangle^2.
	\end{align*}
	As before we now invoke the frame like condition described in Lemma \ref{lemma:aux_grammian_frame_bound} to attain a bound depending on the upper spectrum of the Grammian $(\widehat{G}_{r+2R})_{ij} = \inner{\hw_i, \hw_j}^{r+2R}$. More precisely, by using the shorthand 
	\begin{align} 
	\mu'_{r,s} := \max_{\ell \in [m]} \mu_{r+R-s}(g^{(1)}_{\htau_\ell})^2 \mu_{s+R}(g^{(1)}_{\htau_\ell})^2
	\end{align} we then have
	\begin{align}
		\sum^m_{\ell=1} S_{r+R-s, \ell}^2 S_{s+R, \ell}^2 &\leq \mu'_{r,s} \| \widehat{G}_{r+2R} \|
		\left\| \left( \sum^m_{k=1} \Delta_{k, r+R-s} \otimes T_{k, r+R-s}\right) \otimes \left(\sum^m_{k=1}  \Delta_{k, s+R} \otimes T_{k, s+R}\right) \right\|_F^2 \\
		&\leq \mu'_{r,s} \| \widehat{G}_{r+2R} \|
		\left\| \sum^m_{k=1} \Delta_{k, r+R-s} \otimes T_{k, r+R-s}\right\|_F^2 \left\| \sum^m_{k=1}  \Delta_{k, s+R} \otimes T_{k, s+R} \right\|_F^2. \label{eq:cauchy_product_jumpoff}
	\end{align}
	The two Frobenius norms can now be estimated as in Lemma \ref{lemma:aux_for_delta_grads}, more precisely \eqref{eq:delta_grads_aux_final_frobound}, where we also use the shorthands $\Delta_{W,F}, \Delta_{W, O}$ defined in \eqref{eq:delta_fro_shorthand} - \eqref{eq:delta_offdiag_shorthand} as well as 
	\begin{align*}
		\mu_{r,s}:=  \max_{k \in [m]}\mu_{r+R-s}(g_{\tau_k})^2 \max_{k \in [m]}\mu_{s+R}(g_{\tau_k})^2.
	\end{align*}	
	This gives for some absolute constant $C>0$ 
	\begin{align*}
	\mu'_{r,s}  \| \widehat{G}_{r+2R} \|
		&\left\| \sum^m_{k=1} \Delta_{k, r+R-s} \otimes T_{k, r+R-s}\right\|_F^2 \left\| \sum^m_{k=1}  \Delta_{k, s+R} \otimes T_{k, s+R} \right\|_F^2 \\ 
		&\leq C \mu'_{r,s} \mu_{r,s} \| \widehat{G}_{r+2R} \| (r+R-s)^2 (s+R)^2 \\ &\cdot \left(\Delta_{W,F}^2 + \left(\frac{\log m}{D}\right)^{(r+R-s-1)/2} \Delta_{W,O} \right) \left(\Delta_{W,F}^2 + \left(\frac{\log m}{D}\right)^{(s+R-1)/2} \Delta_{W,O} \right) \\
		&\leq C \mu'_{r,s} \mu_{r,s} \| \widehat{G}_{r+2R} \| (r+R-s)^2 (s+R)^2 \left( \Delta_{W, F}^2 + \left(\frac{\log m}{D}\right)^{(R-1)/2} \Delta_{W, O} \right)^2. 
	\end{align*}
	Next we identify the dominant factors and simplify. Due to Lemma \ref{lemma:grammians_inc_greshgorin} we have for some constant $C>0$ that 
	\begin{align*}
		\| \widehat{G}_{r+2R} \| \leq C \left( 1 + m \left( \frac{\log m}{D}\right)^{(r+2R)/2} \right) \leq C \left( 1 + m \left( \frac{\log m}{D}\right)^{9} \right).
	\end{align*}
	where the last stop follows since $R\geq 9$ and due to $m (\log m)^2 \leq D^2$ this can be further simplified to $\| \widehat{G}_{r+2R} \| \leq C $. Similarly, we have 
	\begin{align*}
		\left(\frac{\log m}{D}\right)^{(R-1)/2} \Delta_{W, O} &=  \left(\frac{\log m}{D}\right)^{(R-1)/2} \sum^m_{k\neq k'} \left|\inner{\hw_{k} - w_k, \hw_{k'} - w_k'}\right| \\
		&\leq \left(\frac{\log m}{D}\right)^{4}  m^2 \delta_{\max}^2 \leq \delta_{\max}^2 \leq \Delta_{W,F}^2
	\end{align*}
	and therefore we get 
	\begin{align*}
		\| \widehat{G}_{r+2R} \| \left( \Delta_{W, F}^2 + \left(\frac{\log m}{D}\right)^{(R-1)/2} \Delta_{W, O} \right)^2 \leq C \Delta_{W, F}^4 
	\end{align*}
	for some absolute constant $C>0$. Plugging these into \eqref{eq:cauchy_product_jumpoff} results in 
	\begin{align*}
		\sum^m_{\ell=1} S_{r+R-s, \ell}^2 S_{s+R, \ell}^2 \leq C \mu'_{r,s} \mu_{r,s} (r+R-s)^2 (s+R)^2   \Delta_{W, F}^4.
	\end{align*}
	Hence, we have 
	\begin{align*}
		\sum^m_{\ell = 1} \left( \sum_{r \geq R}S_{r,\ell} \right)^2 &\leq \sqrt{m}\sum_{r \geq 0}\sum^r_{s=0} \left(\sum^m_{\ell=1} S_{r+R-s, \ell}^2 S_{s+R, \ell}^2\right)^{1/2}\\
		&\leq C \Delta_{W, F}^2 \sqrt{m}\sum_{r \geq 0}\sum^r_{s=0} \sqrt{|\mu'_{r,s} \mu_{r,s}|} (r+R-s) (s+R)\\
		&\leq C \Delta_{W, F}^2 \sqrt{m}\left(\sum_{r \geq R} r |\max_{k,\ell} \mu_r(g_{\tau_k})\mu_r(g^{(1)}_{\htau_\ell})|\right)^2.
	\end{align*}
	The result then follows by applying Lemma \ref{lemma:bound_hermite_tail} onto the series in the last line followed by a unification of the constants. 
\end{proof}

Now, we are finally able to formalize the key lemma of this section. Recall that 
\begin{align*}
	\Delta_{W, F} &:= \| \wW - W\|_F \\
	\Delta_{W, O} &:= \sum^m_{k\neq k'} \left|\inner{\hw_{k} - w_k, \hw_{k'} - w_{k'}}\right|,\\
	\Delta_{W, S} &:= \left\| \sum^m_{k=1} w_k - \hw_k \right\|_2.
\end{align*}

\begin{lemma}\label{lemma:delta_grads} 
Consider a shallow neural network $f$ with unit norm weights described by ${W}$, shifts ${\tau}_1,\dots, {\tau}_m \in [-\tau_{\infty}, \tau_{\infty}]$ stored in $\tau$ and an activation function $g$ that adheres to \ref{enum:activation} with $D\leq m$. Furthermore, consider $J, J_*$ given by \eqref{eq:gd_loss_proof}, \eqref{eq:def_Jbar_proof} constructed with ${N_{\text{train}}}\geq m$  network evaluations $y_1,\dots, y_{N_{\text{train}}}$ of ${f}$ where $y_i = {f}(X_i)$ and  $X_1,\dots, X_N \sim \CN(0,\Id_D)$. Denote by $\hat  f$ an approximation to ${f}$ constructed from parameters $\wW = [\hw_1| \dots | \hw_m], \htau$ as described above with $\|\hw_k\| = 1$ for all $k \in [m]$.
Then, there exists an absolute constant $C>0$ and $D_0$ such that, for dimension $D\geq D_0$, the difference between the gradients of $J$ and of the idealized objective $J_*$ obeys
	\begin{align}\label{eq:lemma_delta_grads_main}
		\norm{\nabla J(\htau)  - \nabla J_*(\htau) }_2  \leq 2\kappa^2 \sqrt{m}\norm{\htau - \tau}^2_2+C\Delta_{W,1} + \left(\frac{m^3 \delta_{\max}^2 t}{{N_{\text{train}}}}\right)^{1/2}
	\end{align}
	for $t>0$ with probability at least  $1 - 2 m^2 \exp\left(- \frac{ t }{C \kappa^4}\right)$ and where
	\begin{align*}
		\Delta_{W,1} \leq   \frac{ m^{1 / 2} \log(m)^{3/4}}{D^{1 / 4}}  \left[ 
			\| \wW - W\|_F  + \frac{\Delta_{W, O}^{1/2}}{D^{1 / 2}} + \left\| \sum^m_{k=1} w_k - \hw_k \right\|_2
		\right].
	\end{align*} 
\end{lemma}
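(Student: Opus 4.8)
The plan is to compare the two gradients \emph{term by term over the same training points}, so that the only randomness requiring a probabilistic treatment is the deviation of the weight-driven contribution from its Gaussian expectation. Writing the derivative factor with the shift $\htau_\ell$ on the empirical side, I first record the two objects under comparison,
$$\nabla J(\htau)_\ell = \frac{1}{N_{\text{train}}}\sum_{i}\big(\hat f(x_i,\htau)-f(x_i)\big)\,g^{(1)}(\langle \hw_\ell,x_i\rangle+\htau_\ell),\qquad \nabla J_*(\htau) = 2A(\htau-\tau),$$
and decompose the residual as a \emph{pure shift perturbation} plus a \emph{pure weight perturbation}:
$$\hat f(x_i,\htau)-f(x_i) = \big[\hat f(x_i,\htau)-\hat f(x_i,\tau)\big] + \big[\hat f(x_i,\tau)-f(x_i)\big].$$
In the first bracket the weights $\hw_k$ are held fixed and the shifts move from $\tau_k$ to $\htau_k$; in the second the shifts are held at $\tau_k$ and the weights move from $w_k$ to $\hw_k$. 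Since $J_*$ is built from the same empirical average as $J$, this splitting lets me match the linear part of the shift perturbation against $\nabla J_*$ exactly, with no concentration.

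For the shift perturbation I would Taylor-expand $g$ in the shift around $\tau_k$,
$$\hat f(x_i,\htau)-\hat f(x_i,\tau) = \sum_k g^{(1)}(\langle \hw_k,x_i\rangle+\tau_k)(\htau_k-\tau_k) + \tfrac12\sum_k g^{(2)}(\zeta_{ik})(\htau_k-\tau_k)^2,$$
for intermediate points $\zeta_{ik}$. Multiplying the linear term by $g^{(1)}(\langle \hw_\ell,x_i\rangle+\tau_\ell)$ and averaging reproduces $\nabla J_*(\htau)_\ell$ identically. Hence the shift part contributes only two \emph{deterministic} errors: the quadratic Taylor remainder, bounded via $\|g^{(2)}\|_\infty\le\kappa$ by $\tfrac{\kappa}{2}\|\htau-\tau\|_2^2$ per summand; and the mismatch between evaluating the $\ell$-th derivative factor at $\htau_\ell$ rather than $\tau_\ell$, which by the mean value theorem is at most $\kappa|\htau_\ell-\tau_\ell|$ times $|\sum_k g^{(1)}(\cdot)(\htau_k-\tau_k)|\le \kappa\sqrt{m}\,\|\htau-\tau\|_2$. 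Taking $\ell_2$ norms over $\ell\in[m]$ and adding the two contributions yields the first term $2\kappa^2\sqrt{m}\,\|\htau-\tau\|_2^2$ in \eqref{eq:lemma_delta_grads_main}, uniformly in the sample.

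It remains to control the purely weight-driven part $\tilde r_\ell := \tfrac1{N_{\text{train}}}\sum_i[\hat f(x_i,\tau)-f(x_i)]\,g^{(1)}(\langle \hw_\ell,x_i\rangle+\htau_\ell)$. Expanding both $g_{\tau_k}$ and $g^{(1)}_{\htau_\ell}$ in the Hermite basis and applying Lemma \ref{lem:mondelli_statement}, which turns $\mathbb{E}[h_r(\langle u,X\rangle)h_r(\langle v,X\rangle)]$ into $\langle u,v\rangle^r$, gives exactly $\mathbb{E}[\tilde r_\ell]=\sum_{r\ge1}S_{r,\ell}$ with $S_{r,\ell}$ as in \eqref{eq:def_hermite_series_coefficients}, the $r=0$ term cancelling. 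To bound $\|\mathbb{E}\,\tilde r\|_2^2=\sum_\ell(\sum_{r\ge1}S_{r,\ell})^2$ I would split the Hermite index into the blocks $r=1$, $2\le r\le R-1$, and $r\ge R$ for a fixed $R\ge 9$, and estimate each block by Lemma \ref{lemma:sum_1_hermites}, the second bound of Lemma \ref{lemma:aux_for_delta_grads}, and Lemma \ref{lemma:delta_grads_tail_bound}, respectively (the incoherence of the $\hw_k$ required by these lemmas being supplied by Lemma \ref{lemma:incoherence_apx_weigths}). Collecting the three contributions and using $D\le m$ together with $m\log^2 m\le D^2$ to single out the dominant powers recovers $\|\mathbb{E}\,\tilde r\|_2\le C\Delta_{W,1}$.

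Finally I would pass from $\tilde r_\ell$ to its expectation by concentration. Each of the $m^2$ product terms $\tfrac1{N_{\text{train}}}\sum_i[g(\langle \hw_k,x_i\rangle+\tau_k)-g(\langle w_k,x_i\rangle+\tau_k)]\,g^{(1)}(\langle \hw_\ell,x_i\rangle+\htau_\ell)$ is an average of products of sub-Gaussian variables, hence sub-exponential, with variance proxy $O(\kappa^4\delta_{\max}^2)$ because $\mathbb{E}\langle \hw_k-w_k,X\rangle^2\le\delta_{\max}^2$; a Bernstein-type tail bound then gives a deviation of order $(C\kappa^4\delta_{\max}^2 t/N_{\text{train}})^{1/2}$ with probability $1-2\exp(-t/C\kappa^4)$ for each pair. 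A union bound over the $m^2$ pairs, a triangle-sum over $k$ (factor $m$ per component) and an $\ell_2$ sum over $\ell$ (factor $\sqrt m$) produce the third term $(m^3\delta_{\max}^2 t/N_{\text{train}})^{1/2}$ and the stated probability $1-2m^2\exp(-t/C\kappa^4)$. Assembling the three pieces by the triangle inequality yields \eqref{eq:lemma_delta_grads_main}. I expect the \emph{main obstacle} to be the weight-perturbation analysis: correctly rewriting the products of Hermite series as frame inner products $\langle \hw_k,\hw_\ell\rangle^r$ and splitting the index range so that the three lemmas combine into the single exponent pattern of $\Delta_{W,1}$ without losing the polynomial-in-$D$ savings; by comparison the concentration step is routine once the variance proxy $m^2\delta_{\max}^2$ is pinned down.
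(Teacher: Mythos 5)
Your proposal is correct and follows essentially the same route as the paper: the identical splitting of $\hat f(x_i,\htau)-f(x_i)$ into a shift perturbation (matched exactly against $\nabla J_*(\htau)=2A(\htau-\tau)$, with a Taylor/mean-value argument producing the $2\kappa^2\sqrt{m}\,\|\htau-\tau\|_2^2$ term) and a weight perturbation, whose mean is expanded in the Hermite basis into the sums $S_{r,\ell}$ and bounded by Lemmas \ref{lemma:sum_1_hermites}, \ref{lemma:aux_for_delta_grads} and \ref{lemma:delta_grads_tail_bound} over the blocks $r=1$, $2\le r\le R-1$, $r\ge R$ (with the incoherence of the $\hw_k$ supplied by Lemma \ref{lemma:incoherence_apx_weigths}), and whose fluctuation is controlled by concentration plus a union bound over the $m^2$ pairs $(k,\ell)$, exactly as in the paper. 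The only noteworthy deviation is the concentration tool: the paper uses the mean value theorem to write each $Z_{ik\ell}$ as a bounded factor times $\langle \hw_k-w_k, X_i\rangle$, hence sub-Gaussian with $\psi_2$-norm $O(\kappa^2\delta_{\max})$, and applies general Hoeffding (valid for every $t>0$), whereas your sub-exponential/Bernstein route yields the stated tail $1-2m^2\exp\left(-t/(C\kappa^4)\right)$ only in the regime $t\lesssim N_{\text{train}}$ --- harmless for the downstream application in Theorem \ref{thm:main_theorem}, but avoidable via the sharper sub-Gaussian observation that is already implicit in your variance-proxy computation.
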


\begin{proof}[Proof of Lemma \ref{lemma:delta_grads}]
 Recall that
	\begin{align*}
		J(\htau) = \frac{1}{2{N_{\text{train}}}}\sum_{i=1}^{N_{\text{train}}} \Big( \hat{f}(X_i, \htau) - {f}(X_i, \tau) \Big)^2.
	\end{align*}
	By chain rule we compute the gradient of $J$ w.r.t. $\htau$ as
	\begin{align*}
		\nabla J(\htau) &= \frac{1}{{N_{\text{train}}}}\sum_{i=1}^{N_{\text{train}}} \left(\hat{f}(X_i, \htau) - {f}(X_i, \tau)\right)\nabla \hat{f}(X_i, \htau).
	\end{align*}
	Adding $0 = (\hat f(X_i, {\tau}) -\hat f(X_i, {\tau}))\nabla \hat f(X_i, \htau)$ to $J(\htau)$ and applying the triangle inequality to $\nabla J- \nabla J_*$ allows us to separate the error caused by the weight approximation
	\begin{align}
		 \Big\Vert\nabla J(\htau) & - \nabla J_*(\htau) \Big\Vert_2 \notag \\\leq& \,
		\Big\Vert  \frac{1}{{N_{\text{train}}}}\Big(\sum_{i=1}^{N_{\text{train}}} (\hat f(X_i, \htau) - \hat f(X_i, {\tau})) \nabla \hat f(X_i, \htau) - \nabla \hat f(X_i, {\tau})\nabla \hat f(X_i, {\tau})^\top (\htau - {\tau}) \Big)\Big\Vert_2 \label{eq:proof_diff_grads}\\
		&+\Big\Vert \frac{1}{{N_{\text{train}}}}\sum^{N_{\text{train}}}_{i=1}\left(\hat f(X_i, {\tau}) - {f}(X_i, {\tau})\right)\nabla \hat f(X_i, \htau) \Big\Vert_2	.	\label{eq:proof_diff_grads_2nd_line}
	\end{align}
	To bound the first term in \eqref{eq:proof_diff_grads} denote $h(\lambda) = (1-\lambda)\htau + \lambda {\tau}$, then we have 
	\begin{align*}
		\hat f(X_i, \htau) - \hat f(X_i, {\tau}) &= \sum^m_{k = 1} g(\hw_k^\top X_i + \tau_k) - g(\hw_k^\top X_i + \htau_k)\\
		&= \sum^m_{k = 1} g(\hw_k^\top X_i + h(1)_k) - g(\hw_k^\top X_i + h(0)_k)\\
		&= \sum^m_{k = 1} \int^{h(1)_k}_{h(0)_k} g^{(1)}(\hw_k^\top X_i + u) du \\
		&= \sum^m_{k = 1} \int^{1}_{0} g^{(1)}(\hw_k^\top X_i + h(\lambda)_k) h'(\lambda)_k d\lambda \\
		&= \sum^m_{k = 1} \int^{1}_{0} g^{(1)}(\hw_k^\top X_i + h(\lambda)_k)  d\lambda (\tau_k - \htau_k). 
	\end{align*}
	Therefore, we can bound \eqref{eq:proof_diff_grads} as follows:  
	\begin{align*} 
		&\,\Big\Vert  \frac{1}{{N_{\text{train}}}}\Big(\sum_{i=1}^{N_{\text{train}}} (\hat f(X_i, \htau) - \hat f(X_i, {\tau})) \nabla  \hat f(X_i,\htau) - \nabla \hat f(X_i, {\tau})\nabla \hat f (X_i, {\tau})^\top (\htau - {\tau}) \Big)\Big\Vert_2\\
		\leq &\,\Big\Vert  \frac{1}{{N_{\text{train}}}}\Big(\sum_{i=1}^{N_{\text{train}}} \nabla \hat f(X_i, \htau )\Big(\int^1_0 \nabla \hat f(X_i, h(\lambda))d\lambda\Big)^\top - \nabla \hat f(X_i, {\tau})\nabla \hat f(X_i,{\tau})^\top   \Big)\Big\Vert \norm{(\htau - {\tau})}_2.
	\end{align*}
	Let us fix $\htau,{\tau}$ for now and write the last line in terms of matrices $\hat F, {F}, F^* \in \R^{{N_{\text{train}}}\times m}$, where the $i$-th row of these matrices is given by $\nabla \hat f(X_i, \htau ), \nabla \hat f(X_i, {\tau} )$
	and $\int^1_0 \nabla \hat f(X_i, h(\lambda))d\lambda$, respectively. We obtain
	\begin{align} 
		&\Big\Vert  \frac{1}{{N_{\text{train}}}}\Big(\sum_{i=1}^{N_{\text{train}}} \nabla \hat f(X_i, \htau )\Big(\int^1_0 \nabla \hat f(X_i, h(\lambda))d\lambda\Big)^\top - \nabla \hat f(X_i, {\tau})\nabla \hat f(X_i,{\tau})^\top   \Big)\Big\Vert \norm{(\htau - {\tau})}_2 \notag \\
		\leq & \frac{1}{{N_{\text{train}}}}\|\hat F^\top F^* - {F}^\top {F}\| \, \|\htau - {\tau}\|_2 \notag\\
		 \leq & \frac{1}{{N_{\text{train}}}}\Big( \|\hat F- {F}\| \|F^*\| +\|{F}\| \|{F} - F^*\|\Big)\norm{\htau - {\tau}}_2 \label{eq:Fmatrices_tbc}.
	\end{align}
	A simultaneous upper bound for $\|\hat F-{F}\|$ and $\|{F} - F^*\|$ can be established with elementary matrix arithmetic and the Lipschitz continuity of $g^{(1)}$:
	\begin{align*}
		\|\hat F- {F}\| &\leq \|\hat F-{F}\|_F= \Big[ \sum^{N_{\text{train}}}_{i=1}\sum^m_{k=1} (g^{(1)}( \langle \hw_k, X_i \rangle + \htau_k)-g^{(1)}( \langle \hw_k, X_i \rangle + \tau_k))^2
		\Big]^{\frac{1}{2}}\\
		&\leq\norm{g^{(2)}}_\infty \Big[ \sum^{N_{\text{train}}}_{i=1}\sum^m_{k=1} \Big(  \htau_k - {\tau}_k \Big)^2
		\Big]^{\frac{1}{2}} = \kappa \sqrt{{N_{\text{train}}}} \norm{\htau -{\tau}}_2, \\
	\end{align*}
	the same bound follows for $\norm{{F} - F^*}$. A crude bound for $\norm{{F}}$ is given by
	\begin{align*} 
	\norm{F} \leq \sqrt{{N_{\text{train}}}m} \max_{ik} \snorm{ F_{ik}} \leq \sqrt{{N_{\text{train}}}m} \norm{g^{(1)}}_\infty \leq \kappa \sqrt{{N_{\text{train}}}m}  ,
	\end{align*}
	the same bound follows for $\norm{F^\ast}$.
	Hence, we can continue from \eqref{eq:Fmatrices_tbc} with 
	\begin{align*}
	   \frac{1}{{N_{\text{train}}}}\Big( \|\hat F- {F}\| \|F^*\| +\|{F}\| \|{F} - F^*\|\Big)\norm{\htau - {\tau}}_2 \leq 2 \kappa^2  \sqrt{m} \| \htau - \tau \|_2^2.  
	\end{align*}
	
	The error \eqref{eq:proof_diff_grads_2nd_line} caused by the difference between $\wW$ and the original weights ${W}$ has the form 
	\begin{align*}
		&\,\Big\Vert\frac{1}{{N_{\text{train}}}}\sum^{N_{\text{train}}}_{i=1}\left( \hat f(X_i, {\tau}) - {f}(X_i, {\tau})\right)\nabla \hat f(X_i, \htau) \Big\Vert_2	\\=& \Big\Vert\frac{1}{{N_{\text{train}}}}\sum^{N_{\text{train}}}_{i=1}\Big(\sum^m_{k=1}g(X_i^{\top} \hw_k + {\tau}_k) - g(X_i^{\top} {w}_k + {\tau}_k) \Big)\nabla \hat f(X_i, \htau) \Big\Vert_2.
	\end{align*}
	Let us define
	\begin{align*}
	    \Delta_W^2 &:= \Big\Vert\frac{1}{{N_{\text{train}}}}\sum^{N_{\text{train}}}_{i=1}\Big(\sum^m_{k=1}g(X_i^{\top} \hw_k + {\tau}_k) - g(X_i^{\top} {w}_k + {\tau}_k) \Big)\nabla \hat f(X_i, \htau) \Big\Vert_2^2 \\
	    &= \sum^m_{\ell = 1}\left[\frac{1}{{N_{\text{train}}}}\sum^{N_{\text{train}}}_{i=1}\Big(\sum^m_{k=1}g(X_i^{\top} \hw_k + {\tau}_k) - g(X_i^{\top} {w}_k + {\tau}_k) \Big) g^{(1)}(\inner{X_i,\hw_\ell} + \htau_\ell) \right]^2\\
	\end{align*}
	To keep the expressions more compact, we define $Z_{ik\ell} := \varphi_{k,\ell}(X_i)$ and 
	\begin{align*}
		\varphi_{k,\ell}(x) := \Big(g(x^\top \hw_k + \tau_k) - g(x^\top w_k + \tau_k)\Big)g^{(1)}(x^\top \hw_\ell + \htau_\ell). 
	\end{align*}
Let us also define
	\begin{align*}
		\Delta_{W,1}^2 &:= \sum^m_{\ell = 1}\left[\frac{1}{{N_{\text{train}}}}\sum^{N_{\text{train}}}_{i=1}\sum^m_{k=1} \mathbb{E}[Z_{ik\ell}] \right]^2, \\
		\Delta_{W,2}^2 &:= \sum^m_{\ell = 1}\left[\frac{1}{{N_{\text{train}}}}\sum^{N_{\text{train}}}_{i=1}\sum^m_{k=1} (Z_{ik\ell}-\mathbb{E}[Z_{ik\ell}])\right]^2.
	\end{align*}
	Then, 
	\begin{align}\label{eq:delta_grads_first_split_deltaW}
		\Delta_W^2 &= \sum^m_{\ell = 1}\left[\frac{1}{{N_{\text{train}}}}\sum^{N_{\text{train}}}_{i=1}\sum^m_{k=1} Z_{ik\ell} \right]^2 \leq 2 \Delta_{W,1}^2 + 2\Delta_{W,2}^2.
	\end{align}
	In what follows we will control $\Delta_{W,1}^2, \Delta_{W,2}^2$ by using Hermite expansions and a concentration argument, respectively.

	We begin with $\Delta_{W,2}^2$: The first step is to establish that $Z_{ik\ell}-\mathbb{E}[Z_{ik\ell}]$ is subgaussian and to compute its subgaussian norm. 
	We remark that all expectations for the remainder of this proof are w.r.t. the inputs $X_1, \dots, X_{N_{\text{train}}} \sim \mathcal{N}(0, \Id_D)$.
	First note that by the mean value theorem there exists values $\xi_{i, k}$ such that 
	\begin{align*}
		Z_{ik\ell} = \langle \hw_k - w_k, X_i \rangle g^{(1)}(x^\top \hw_\ell + \htau_\ell) g^{(1)}(\xi_{i, k}),
	\end{align*} 
	where $g^{(1)}$ is a bounded function according to \ref{enum:activation}.
	We can combine this with the well known property of the sub-Gaussian norm which states that $\| Z_{ik\ell}-\mathbb{E}[Z_{ik\ell}] \|_{\psi_2} \leq C \| Z_{ik\ell}\|_{\psi_2}$ for some absolute constant $C>0$. This leads to 
	\begin{align*}
		\| Z_{ik\ell}-\mathbb{E}[Z_{ik\ell}] \|_{\psi_2} &\leq C \| Z_{ik\ell}\|_{\psi_2} \leq C \kappa^2 \| \langle \hw_k - w_k, X_i \rangle \|_{\psi_2} \leq C \kappa^2  \delta_{\max}
	\end{align*}
	for all $i\in [N_{\text{train}}]$, $k, \ell \in [m]$ and some absolute constant $C>0$. As a consequence we can apply the general Hoeffding inequality (cf. \cite[Theorem 2.6.2]{vershyninHighDimensionalProbabilityIntroduction2018}) which yields the estimate
	\begin{align*}
		\Delta_{W,2}^2 &= \frac{1}{{N_{\text{train}}^2}} \sum^m_{\ell = 1}\left(\sum^m_{k=1} \sum^{N_{\text{train}}}_{i=1} Z_{ik\ell}-\mathbb{E}[Z_{ik\ell}]\right)^2 \\ 
		&\leq \frac{1}{{N_{\text{train}}^2}} \sum^m_{\ell = 1}\left(\sum^m_{k=1} \left| \sum^{N_{\text{train}}}_{i=1} Z_{ik\ell}-\mathbb{E}[Z_{ik\ell}]\right| \right)^2 \leq \frac{1}{{N_{\text{train}}^2}} \sum^m_{\ell = 1} m^2 t^2 = \frac{m^3 t^2}{{N_{\text{train}}^2} },
	\end{align*}
	which holds using a union bound with probability at least
	\begin{align*}
		1 - \left( \sum^m_{k, \ell =1} 2 \exp\left(- \frac{c t^2 }{\sum^{{N_{\text{train}}}}_{i=1} \| Z_{ik\ell}-\mathbb{E}[Z_{ik\ell}] \|_{\psi_2}^2}\right)\right) \geq  1 - 2 m^2 \exp\left( - \frac{ t^2 }{C {N_{\text{train}}} \delta_{\max}^2 \kappa^4}\right),
	\end{align*}
	for all $t\geq 0$, where $c,C>0$ are absolute constants.
	This implies that there exists an absolute constant $C>0$ such that for all $t\geq 0$ 
	\begin{align}\label{eq:delta_grads_preliminary_deltaw2}
		\mathbb{P}\left( \Delta_{W,2}^2 \leq \frac{m^3 \delta_{\max}^2 t}{{N_{\text{train}}} } \right)  \geq  1 - 2m^2 \exp\left( - \frac{ t }{C \kappa^4}\right).
	\end{align}
	What remains is to control the means contained in $\Delta_{W,1}^2$. Using the shorthand $g_{\tau}(\cdot) = g(\cdot + \tau)$ and the Hermite expansion we get 
	\begin{align*}
		\mathbb{E}[Z_{ik\ell}] &= \mathbb{E}\left[ (g_{\tau_k}(\hw_k^\top X_i) - g_{\tau_k}(w_k^\top X_i))g^{(1)}_{\htau_\ell}(\hw_\ell^\top X_i) \right]\\
		&= \mathbb{E}\left[\left(\sum_{r\geq 0} \mu_{r}(g_{\tau_k})(h_r(\hw_k^\top X_i) - h_r(w_k^\top X_i)) \right)\sum_{t \geq 0} \mu_{t}(g^{(1)}_{\htau_\ell})h_t(\hw_\ell^\top X_i)\right] \\ 
		&= \sum_{r\geq 0} \mu_{r}(g_{\tau_k}) \mu_{r}(g^{(1)}_{\htau_\ell})\left( \inner{\hw_k, \hw_\ell}^r - \inner{w_k, \hw_\ell}^r \right),
	\end{align*}
	where the last two steps rely on the same properties of the Hermite expansion already used in the previous section. The summand corresponding to $r=0$ in the last line above vanishes, thus we have
	\begin{align*}
		\Delta_{W,1}^2 &= \sum^m_{\ell = 1}\left[\sum^m_{k=1} \sum_{r\geq 1} \mu_{r}(g_{\tau_k}) \mu_{r}(g^{(1)}_{\htau_\ell})\left( \inner{\hw_k, \hw_\ell}^r - \inner{w_k, \hw_\ell}^r \right) \right]^2.
	\end{align*}
	Denote now $$S_{r,\ell}:= \sum^m_{k=1} \mu_{r}(g_{\tau_k}) \mu_{r}(g^{(1)}_{\htau_\ell})\left( \inner{\hw_k, \hw_\ell}^r - \inner{w_k, \hw_\ell}^r \right),$$
	then, for any $R \geq 2$, we have 
	\begin{align}
		\Delta_{W,1}^2 &= \sum^m_{\ell = 1}\left( \sum_{r\geq 1} S_{r,\ell} \right)^2 \leq 2 \sum^m_{\ell=1} S_{1,\ell}^2 +  \sum_{r = 2}^{R-1} 2^r \sum^m_{\ell=1} S_{r,\ell}^2 +  2^R \sum^m_{\ell = 1}\left( \sum_{r\geq R} S_{r,\ell} \right)^2. \label{eq:delta_grads_split_series}
	\end{align} 
	Choose now $R=9$ and plug in the result from Lemma \ref{lemma:aux_for_delta_grads}, Lemma \ref{lemma:sum_1_hermites} and Lemma \ref{lemma:delta_grads_tail_bound} which yields for an appropriate constant $C>0$ the bound 
	\begin{align}\label{eq:sum_hermites_3part_intermediate}
		\Delta_{W,1}^2 &\leq C m \left(\frac{\log m }{D}\right)^{1/2} \left\| \sum^m_{k=1} w_k - \hw_k \right\|_2^2 +  \frac{C m \log m}{D}\left(  \Delta_{W, F}^2 + \left(\frac{\log m }{D}\right)^{1/2}  \Delta_{W, O }\right) \\ &+C \sqrt{m}\Delta_{W,F}^2.
	\end{align}
	Reordering the terms and taking the square root we receive 
	\begin{align*}
		\Delta_{W,1} \leq C\left(  m^{1/4} + m^{1/2} \left(\frac{\log m}{D}\right)^{1 / 2}\right)\Delta_{W, F} + C m^{1/2} \left(\frac{\log m}{D}\right)^{3 / 4}\Delta_{W, O}^{1/2}\\
		+ C m^{1/2}  \left(\frac{\log m }{D}\right)^{1/4} \left\| \sum^m_{k=1} w_k - \hw_k \right\|_2 \\ 
		\leq C \log(m)^{3/4} \left[ 
			\left(  m^{1/4} +  \frac{m^{1 / 2}}{D^{1 / 2}}\right)\Delta_{W, F} + \frac{m^{1/2}}{D^{3 / 4}}\Delta_{W, O}^{1/2} + \frac{m^{1 / 2}}{D^{1 / 4}} \left\| \sum^m_{k=1} w_k - \hw_k \right\|_2
		\right].
	\end{align*}
	Lastly, we can use 
	\begin{align*} 
		m^{1/4} +  \frac{m^{1 / 2}}{D^{1 / 2}} \leq  m^{1/4} +  \frac{m^{1 / 2}}{D^{1 / 4}} \leq  \frac{2 m^{1 / 2}}{D^{1 / 4}} 
	\end{align*}
	since $m\geq D$ followed by $\Delta_W \leq C(\Delta_{W,1} + \Delta_{W,2})$ to conclude the proof. Note that we can simply separate the constant that appears in the definition of $\Delta_{W,1}$ to appear outside of $\Delta_{W,1}$, such that we arrive at the formulation appearing in the original statement. 
\end{proof}
The previous result shows that the gradients associated with our two objective functions $J, J_*$ fulfill 
\begin{align*}
		  \norm{\nabla J(\htau)  - \nabla J_*(\htau) }_2  \leq \kappa^2 \sqrt{m}\norm{\htau - \tau}^2_2+\Delta_W,
\end{align*} 
according to Lemma \ref{lemma:delta_grads}, where $\Delta_W$ depends on the accuracy of the weight approximation. Next, we leverage this to establish sufficient conditions on the accuracy $\Delta_W$ and our initial shift estimate under which both gradient descent iterations will remain close to each other over any number of GD steps. The upcoming proof requires that one of the two gradient descent iterations does converge, which in combination with Lemma \ref{lemma:delta_grads} allows to control the other iteration locally. It was already established in Lemma \ref{lem:lower_bound_expectation_technical} that $A$ is positive definite in expectation. This suggests that $J_*(\htau) = (\htau - \tau)^\top A (\htau - \tau)$ is strictly convex, provided enough samples $N_{\text{train}}$ are used to concentrate $A$ around its expectation $E$. In particular, strict convexity directly implies that $\htau_*^{(n)}$ converges to the true biases $\tau$. We will show this as part of the proof of Theorem \ref{thm:local_result}, but for the sake of simplicity we will assume positive definiteness of $A$ in the next statement.
\begin{lemma}\label{lemma:delta_gradient_descent_iteration}
	Denote by $\htau^{(n)},\htau_*^{(n)}$ the gradient descent iterations given by \eqref{eq:gd_iteration_proof} and \eqref{eq:idealized_gd_iteration}, respectively.
    Assume that the objective functions $J, J_*$ defined above fulfill 
	\begin{align}\label{eq:delta_gradient_assumption}
	\| \nabla J(\htau) - \nabla {J_*}(\htau)\|_2  \leq L \norm{\htau - {\tau}}_2^2 + \Delta_W,
\end{align}
	for some $L, \Delta_W \geq 0$ and any $\htau\in \R^m$. Furthermore, assume that the matrix $A$ in \eqref{eq:def_A} fulfills $\lambda_{\min}:=\lambda_{\min}(A) > 0$.
	If $\Delta_W \leq \frac{\lambda_{\min}^2}{16L}$ and both gradient descent iterations are started with the same step size $\gamma \leq \|A\|^{-1}$ and from the same initialization $\htau^{(0)}={\htau_*}^{(0)}$, adhering to the bound
	\begin{align}\label{eq:thm4_assumptions}
		\| \htau^{(0)} - {\tau}\| _2\leq \frac{\lambda_{\min}}{4\sqrt{2} L}, 
	\end{align}
	then the distance between both iterations at gradient step $n \in \N$ satisfies
	\begin{align*}
		\|\htau^{(n)} - {\htau_*}^{(n)}\|_2 \leq \xi^n \|\htau^{(0)} - {\tau}\|_2+ \frac{2\Delta_W}{\lambda_{\min}}\left(1-\xi^n \right),
	\end{align*}
	for $\xi = 1 -\frac{\gamma \lambda_{\min}}{2} \in [0, 1)$. 
\end{lemma}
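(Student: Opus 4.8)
The plan is to track the discrepancy $e^{(n)} := \htau^{(n)} - \htau_*^{(n)}$ between the two iterations and to show it obeys a contractive recursion. Subtracting \eqref{eq:idealized_gd_iteration} from \eqref{eq:gd_iteration_proof} and inserting the intermediate gradient $\nabla J_*(\htau^{(n)}) = A(\htau^{(n)} - \tau)$, I would write
$$e^{(n+1)} = (\Id - \gamma A)\,e^{(n)} - \gamma\,\delta^{(n)}, \qquad \delta^{(n)} := \nabla J(\htau^{(n)}) - \nabla J_*(\htau^{(n)}),$$
so that $\|\delta^{(n)}\|_2 \le L\|\htau^{(n)} - \tau\|_2^2 + \Delta_W$ by the assumption \eqref{eq:delta_gradient_assumption}. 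Since $A$ is symmetric positive definite with $\lambda_{\min} = \lambda_{\min}(A) > 0$ and the step size satisfies $\gamma \le \|A\|^{-1}$, the multiplier is a contraction, $\|\Id - \gamma A\| \le \eta := 1 - \gamma\lambda_{\min} \in [0,1)$. Applying the same spectral estimate to \eqref{eq:idealized_gd_iteration} gives the decaying control $\|\htau_*^{(n)} - \tau\|_2 \le \eta^n\rho$ on the idealized orbit, where $\rho := \|\htau^{(0)} - \tau\|_2$; this decay will be essential in the final summation.

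The heart of the argument is to convert the quadratic term $L\|\htau^{(n)} - \tau\|_2^2$ into a linear one that can be reabsorbed into the contraction. I would carry an a priori bound $\|\htau^{(n)} - \tau\|_2 \le \tfrac{\lambda_{\min}}{2L}$ along the induction; on this event $L\|\htau^{(n)} - \tau\|_2^2 \le \tfrac{\lambda_{\min}}{2}\|\htau^{(n)} - \tau\|_2 \le \tfrac{\lambda_{\min}}{2}\big(\|e^{(n)}\|_2 + \eta^n\rho\big)$, using the triangle inequality and the decay above. Taking norms in the recursion then yields
$$\|e^{(n+1)}\|_2 \le \left(1 - \gamma\lambda_{\min} + \tfrac{\gamma\lambda_{\min}}{2}\right)\|e^{(n)}\|_2 + \tfrac{\gamma\lambda_{\min}}{2}\eta^n\rho + \gamma\Delta_W = \xi\|e^{(n)}\|_2 + \tfrac{\gamma\lambda_{\min}}{2}\eta^n\rho + \gamma\Delta_W,$$
which is exactly where the rate degrades from $\eta$ to $\xi = 1 - \tfrac{\gamma\lambda_{\min}}{2}$.

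To close, I would introduce the auxiliary sequence $C_n := \rho(\xi^n - \eta^n) + \tfrac{2\Delta_W}{\lambda_{\min}}(1-\xi^n)$, note that $C_0 = 0 = \|e^{(0)}\|_2$, and verify by a short computation — using the two identities $1 - \xi = \tfrac{\gamma\lambda_{\min}}{2}$ and $\xi - \tfrac{\gamma\lambda_{\min}}{2} = \eta$ — that $C_n$ satisfies the above recursion with equality, i.e. $C_{n+1} = \xi C_n + \tfrac{\gamma\lambda_{\min}}{2}\eta^n\rho + \gamma\Delta_W$. Induction then gives $\|e^{(n)}\|_2 \le C_n \le \xi^n\rho + \tfrac{2\Delta_W}{\lambda_{\min}}(1-\xi^n)$, which is the claimed bound. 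The induction simultaneously re-establishes the a priori control: since $C_n$ is a convex combination, $C_n \le \max\{\rho,\,2\Delta_W/\lambda_{\min}\}$, and the hypotheses $\rho \le \tfrac{\lambda_{\min}}{4\sqrt 2 L}$ together with $\Delta_W \le \tfrac{\lambda_{\min}^2}{16L}$ force both arguments of the maximum to be at most $\tfrac{\lambda_{\min}}{4\sqrt 2 L}$; hence $\|\htau^{(n)} - \tau\|_2 \le C_n + \eta^n\rho \le \tfrac{\lambda_{\min}}{2\sqrt 2 L} < \tfrac{\lambda_{\min}}{2L}$, so the smallness used in the previous step is preserved.

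The main obstacle I anticipate is precisely this coupling: the nonlinear gradient-mismatch bound ties $\|e^{(n)}\|_2$ to $\|\htau^{(n)} - \tau\|_2$, so the contraction cannot be read off in one line but must be run as a simultaneous induction that both propagates the error estimate and re-certifies the smallness of $\htau^{(n)} - \tau$. The bookkeeping of the two distinct geometric rates $\eta$ and $\xi$ — in particular checking that the residual $\rho(\xi^n - \eta^n)$ produced by the decaying idealized term is genuinely dominated by $\xi^n\rho$ — is the one place where the constants must be tracked with care.
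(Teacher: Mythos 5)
Your proposal is correct, and at the level of strategy it matches the paper's proof: both couple the two iterations through the identity $\htau^{(n+1)} - \htau_*^{(n+1)} = (\Id_m - \gamma A)(\htau^{(n)} - \htau_*^{(n)}) - \gamma\big(\nabla J(\htau^{(n)}) - \nabla J_*(\htau^{(n)})\big)$, both use $\|\Id_m - \gamma A\| \le 1-\gamma\lambda_{\min} =: \eta$ together with the decay $\|\htau_*^{(n)} - \tau\|_2 \le \eta^n \rho$, $\rho := \|\htau^{(0)}-\tau\|_2$, and both run a simultaneous induction whose sole purpose is to absorb the quadratic mismatch term into the degraded rate $\xi = 1 - \gamma\lambda_{\min}/2$. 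Where you genuinely differ is in the two technical pillars. For the nonlinearity, the paper splits $\|\htau^{(n)} - \tau\|_2^2 \le 2\|\htau^{(n)} - \htau_*^{(n)}\|_2^2 + 2\|\htau_*^{(n)} - \tau\|_2^2$, keeping a term quadratic in the discrepancy which the induction hypothesis $\max_{k\le n}\|\htau^{(k)} - \htau_*^{(k)}\|_2 \le \lambda_{\min}/(4L)$ then converts into the rate degradation; you instead re-certify the a priori bound $\|\htau^{(n)} - \tau\|_2 \le \lambda_{\min}/(2L)$ at every step and linearize, so your recursion is affine in the discrepancy from the outset. For the closing, the paper unrolls the recursion and bounds two geometric sums, ending with $\frac{4L\rho^2}{\lambda_{\min}}\xi^n + \frac{2\Delta_W}{\lambda_{\min}}(1-\xi^n)$, which yields the stated bound only after the (implicit) observation that $4L\rho^2/\lambda_{\min} \le \rho$ under \eqref{eq:thm4_assumptions}; you instead exhibit the exact solution $C_n = \rho(\xi^n - \eta^n) + \frac{2\Delta_W}{\lambda_{\min}}(1-\xi^n)$ of the affine recursion — your identities $1-\xi = \gamma\lambda_{\min}/2$ and $\xi - \gamma\lambda_{\min}/2 = \eta$ do verify $C_{n+1} = \xi C_n + \frac{\gamma\lambda_{\min}}{2}\eta^n\rho + \gamma\Delta_W$ — so the claimed inequality drops out directly from $\xi^n - \eta^n \le \xi^n$. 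Your route buys cleaner bookkeeping (no double geometric sum, no final constant-matching step, and in fact a slightly sharper envelope), at the price of tracking smallness of $\|\htau^{(n)} - \tau\|_2$ rather than only of the discrepancy inside the induction; your constant check $2\Delta_W/\lambda_{\min} \le \lambda_{\min}/(8L) \le \lambda_{\min}/(4\sqrt{2}L)$ and $C_n + \eta^n\rho \le \lambda_{\min}/(2\sqrt{2}L) < \lambda_{\min}/(2L)$ is exactly what is needed to keep that linearization valid at every step, so the argument closes without gaps.
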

\begin{proof}[Proof of Lemma \ref{lemma:delta_gradient_descent_iteration}]
	Plugging in the gradient descent iteration with a simple expansion yields
	\begin{align*}
		&\norm{\htau^{(n+1)} - {\htau_*}^{(n+1)}}_2 \\=& \norm{\htau^{(n)} - {\htau_*}^{(n)} - \gamma\left(\nabla J(\htau^{(n)}) - \nabla {J_*}({\htau_*}^{(n)})\right)}_2\\
		=& \norm{\htau^{(n)} - {\htau_*}^{(n)} - \gamma\left(\nabla J(\htau^{(n)}) - \nabla {J_*}({\htau}^{(n)})\right)-\gamma\left(\nabla {J_*}(\htau^{(n)}) - \nabla {J_*}({\htau_*}^{(n)})\right)}_2\\
		=& \norm{\Big(\Id_m - {\gamma}A \Big) (\htau^{(n)} - {\htau_*}^{(n)}) - \gamma\left(\nabla J(\htau^{(n)}) - \nabla {J_*}({\htau}^{(n)})\right)}_2\\
		\leq& \norm{\Big(\Id_m - {\gamma}A \Big) (\htau^{(n)} - {\htau_*}^{(n)})}_2 + \gamma \norm{\nabla J(\htau^{(n)}) - \nabla {J_*}({\htau}^{(n)})}_2,
	\end{align*}
	where we used the definition of the iterations in the first line followed by a simple expansion and the triangle inequality in the last line. 
	The left term of the last line can be bounded with the spectral norm of $\Id_m - {\gamma}A$ and the right term according to our initial assumption \eqref{eq:delta_gradient_assumption}:
	\begin{align*}
		\|\htau^{(n+1)} - {\htau_*}^{(n+1)}\|_2 &\leq \|\Id_m - {\gamma}A \|\,\| \htau^{(n)} - {\htau_*}^{(n)}\|_2 + \gamma L\|\htau^{(n)} - {\tau}\|^2_2 + \gamma\Delta_W\\
		&\leq (1-\gamma \lambda_{\text{min}})\|\htau^{(n)} - {\htau_*}^{(n)}\|_2 + \gamma L\|\htau^{(n)} - {\tau}\|^2_2 + \gamma\Delta_W,
	\end{align*}
	where the second inequality follows from the bound on the minimal eigenvalue of $A$. Expanding the right term of the last line with ${\htau_*}^{(n)}$ yields
	\begin{align}
		\|\htau^{(n+1)} &- {\htau_*}^{(n+1)}\|_2 \notag \\
		\leq &(1-\gamma \lambda_{\text{min}})\| \htau^{(n)} - {\htau_*}^{(n)}\|_2 + \gamma L\|\htau^{(n)} - {\htau_*}^{(n)} + {\htau_*}^{(n)} -{\tau}\|^2_2 + \gamma \Delta_W\notag \\
		\leq &(1-\gamma \lambda_{\text{min}})\|\htau^{(n)} - {\htau_*}^{(n)}\|_2 + 2\gamma L \|\htau^{(n)} - {\htau_*}^{(n)}\|^2_2 + 2\gamma L \|{\htau_*}^{(n)} - {\tau}\|^2_2 + \gamma \Delta_W. \label{eq:intermediate_0}
	\end{align}
	We can now use the fact that the gradient descent iteration \eqref{eq:idealized_gd_iteration} in combination with the convexity of the idealized objective $J_*$ ($\lambda_{\min}(A) > 0)$ allows for the recursive bound 
	\begin{align*}
	    \|{\htau_*}^{(n)} - {\tau}\|_2 &= \|{\htau_*}^{(n-1)} -\gamma \nabla J_*({\htau_*}^{(n-1)}) - {\tau}\|_2 = \|{\htau_*}^{(n-1)} -\gamma A({\htau_*}^{(n-1)} - \tau) - {\tau}\|_2 \\ 
	    &= \|(\Id_m - \gamma A) ( {\htau_*}^{(n-1)}- \tau) \|_2 \leq \| \Id_m - \gamma A \| \| {\htau_*}^{(n-1)}- \tau \|_2 \\
	    &\leq \| \Id_m - \gamma A \|^n  \| {\htau_*}^{(0)}- \tau \|_2 \leq (1 - \gamma \lambda_{\min})^n \delta_0,
	\end{align*}
	where we have denoted by $\delta_0 = \|\htau^{(0)} - {\tau}\|$ the initial error.
	Plugging this into \eqref{eq:intermediate_0} results in 
	\begin{align}
		\|\htau^{(n+1)} - &{\htau_*}^{(n+1)}\|_2  \notag \\\leq &(1-\gamma \lambda_{\text{min}})\|\htau^{(n)} - {\htau_*}^{(n)}\|_2 + 2\gamma L \|\htau^{(n)} - {\htau_*}^{(n)}\|^2_2 + 2\gamma L (1-\gamma \lambda_{\text{min}})^{2n}\delta_0^2 + \gamma\Delta_W.\label{eq:intermediate}
	\end{align}
		Define $\Delta_n := \max_{k\leq n} \|\htau^{(k)} - {\htau_*}^{(k)}\|_2$. We first show by induction
	that $\Delta_n \leq \lambda_{\min}/4L$ provided that $\delta_0$ and $\Delta_W$ are sufficiently small.
	For step $n= 0$, we have $\|\htau^{(0)} - {\htau_*}^{(0)}\|_2 = 0$, so the statement is clearly true. Assume now it holds for $n$ and we have to show the induction step. In other words we have to show $\|\htau^{(n+1)} - {\htau_*}^{(n+1)}\|_2 \leq\lambda_{\min}/4L$, so the same bound would hold for $\Delta_{n+1}$. We continue from \eqref{eq:intermediate}, and get
	\begin{align*}
		\|\htau^{(n+1)} - {\htau_*}^{(n+1)}\|_2 
		&\leq (1-\gamma \lambda_{\text{min}} + 2\gamma L \Delta_n)\| \htau^{(n)} - {\htau_*}^{(n)}\|_2 + 2\gamma L (1-\gamma \lambda_{\text{min}})^{2n}\delta_0^2 + \gamma\Delta_W.
	\end{align*}
	Using the induction hypothesis $\Delta_{n}\leq \lambda_{\min}/4L$, this simplifies to
	\begin{align*}
		\|\htau^{(n+1)} - {\htau_*}^{(n+1)}\|_2 \leq \left(1-\gamma \lambda_{\text{min}} / 2\right)\| (\htau^{(n)} - {\htau_*}^{(n)})\|_2 + 2\gamma L(1-\gamma \lambda_{\text{min}})^{2n}\delta_0^2 + \gamma\Delta_W.
	\end{align*}
	To keep the computation more compact, we will denote 
    \begin{align*}
    \xi :=  1-\frac{\gamma \lambda_{\text{min}}}{2}.
    \end{align*}
	Now we can repeat the same computations for $\|\htau^{(k)} - {\htau_*}^{(k)}\|_2$, $k\leq n$ as well. This leads to
	\begin{align*}
		\|\htau^{(n+1)} - {\htau_*}^{(n+1)}\|_2 \leq 2\gamma L\delta_0^2\sum^n_{k=0} \xi^k  (1-\gamma \lambda_{\text{min}})^{2(n-k)} + \gamma \Delta_W \sum_{k=0}^{n}\xi^{k},
	\end{align*}
	where we used $\norm{\htau^{(0)} - {\htau_*}^{(0)}}_2 = 0$. Both sums are uniformly bounded in $n$, as can be seen by 
	\begin{align}
		\|\htau^{(n+1)} - {\htau_*}^{(n+1)}\|_2 \leq \, & 2\gamma L\delta_0^2\frac{\xi^{n+1}-(1-\gamma \lambda_{\text{min}})^{2(n+1)}}{\xi-(1-\gamma \lambda_{\text{min}})^{2}} + \gamma \Delta_W\frac{1 - \xi^{n+1}}{1- \xi} \label{eq:thm4_intermediate_after_ind}\\
		\leq \, &
		2\gamma L\delta_0^2\frac{\xi^{n+1}-(1-\gamma \lambda_{\text{min}})^{2(n+1)}}{\frac{3}{2}\gamma\lambda_{\min} - \gamma^2 \lambda_{\min}^2} + \frac{2\Delta_W}{\lambda_{\min}} \notag\\
		\leq \, &
		2 L\delta_0^2\frac{\xi^{n+1}}{\frac{3}{2}\lambda_{\min} - \gamma\lambda_{\min}^2} + \frac{2\Delta_W}{\lambda_{\min}} \leq 4 L\delta_0^2\frac{\xi^{n+1}}{\lambda_{\min}} + \frac{2\Delta_W}{\lambda_{\min}}.\notag
	\end{align}
	Now we have $4 L\delta_0^2 \xi^{n+1}\lambda_{\min}^{-1} \leq 4 L\delta_0^2 \lambda_{\min}^{-1}$. Furthermore, $4 L\delta_0^2 \lambda_{\min}^{-1}\le \frac{\lambda_{\min}}{8L} $ as long as
	\begin{align*}
		\delta_0^2 \leq \frac{\lambda_{\min}^2}{32 L^2},
	\end{align*}
	which holds according to our initial assumption \eqref{eq:thm4_assumptions}. Similarly, as $\Delta_W \leq \frac{\lambda_{\min}^2}{16L}$ by assumption, we get $\frac{2\Delta_W}{\lambda_{\min}} \leq \frac{\lambda_{\min}}{8L}$
	This means we now have
	\begin{align*}
		\Delta_{n+1}  \leq \frac{\lambda_{\min}}{8L} + \frac{\lambda_{\min}}{8L} \leq \frac{\lambda_{\min}}{4L},
	\end{align*}
	which concludes the proof of the induction establishing that the two iterations remain close to each other so that $\max_{k\leq n} \|\htau^{(k)} - {\htau_*}^{(k)}\|_2\leq \lambda_{\min}/4L$ for all $n \in \N$. To arrive at the final statement we can continue from \eqref{eq:thm4_intermediate_after_ind}
	\begin{align*}
		\|\htau^{(n)} - {\htau_*}^{(n)}\|_2 &\leq 2\gamma L\delta_0^2\frac{\xi^{n}-(1-\gamma \lambda_{\text{min}})^{2n}}{\xi-(1-\gamma \lambda_{\text{min}})^{2}} + \gamma \Delta_W\frac{1 - \xi^{n}}{1- \xi}\\
		&\leq
		\frac{4L\delta_0^2}{\lambda_{\min}} \xi^n + \frac{2\Delta_W}{\lambda_{\min}}\left(1-\xi^n \right).
	\end{align*}
\end{proof}
\subsection{Concluding the proof of Theorem \ref{thm:local_result}}
\label{subsec:concluding_theorem4}
Theorem \ref{thm:local_result} tells us how accurate the weight approximation and shift initialization has to be such that the initial shifts can be further improved w.h.p. by minimizing the empirical loss  $J(\htau) = \frac{1}{2{N_{\text{train}}}}\sum_{i=1}^{N_{\text{train}}} \Big( \hat{f}(X_i, \htau) - {f}(X_i, \tau) \Big)^2$ on a set of generic inputs via gradient descent. The proof of Theorem \ref{thm:local_result} follows directly by combining Lemma \ref{lem:lower_bound_expectation_technical}, Lemma \ref{lemma:delta_grads} and Lemma \ref{lemma:delta_gradient_descent_iteration}. Based on the first result we prove that the idealized gradient descent iteration $\tau^{(n)}_*$ will w.h.p. and linear rate converge to the ground-truth shifts $\tau$ by establishing
the strict convexity of $J_*$. The second set of auxiliary statements (i.e., Lemma \ref{lemma:delta_grads}-\ref{lemma:delta_gradient_descent_iteration}) then shows that the gradient descent iteration derived from the empirical risk $J(\htau)$ will stay close to $\tau^{(n)}_*$ if weight approximations $\wW$ and initial shifts $\tau^{(0)}$ are sufficiently accurate. 
\begin{proof}[Proof of Theorem \ref{thm:local_result}]
	Denote $E =\mathbb{E}_{X_1,\ldots,X_{N_{\text{train}}} \sim \CN(0,\Id_D)}[A]$ with $A$ as in \eqref{eq:def_A}, and constructed from inputs $X_1, \dots, X_{N_{\text{train}}} \sim \CN(0, \Id_D)$.
	According to Lemma \ref{lem:lower_bound_expectation_technical}, there exist constants $\omega, C_1>0$, which only depend on $g$ and $\tau_{\infty}$, with 
	\begin{align*}
	\lambda_{m}(E) \geq \omega - C_1 \frac{(m-1) \log^2 m }{D^2} \geq \frac{\omega}{2},
	\end{align*}
	provided $(2C_1 / \omega) m \log^2 m \leq D^2$, as assumed in Theorem \ref{thm:local_result}. 
	Note now that  $A$ is a sum of positive semi-definite rank-1 matrices. Thus we can apply the Matrix Chernoff bound in Lemma \ref{lem:chernoff_bound} to get the concentration bound
		\begin{align}
			\label{eq:aux_first_lower_bound}
			\mathbb{P}\left(\lambda_{m}(A) \geq \frac{\lambda_{m}(E)}{4}\right) \geq 1 - m \cdot 0.7^{ \frac{{N_{\text{train}}}\lambda_{m}(E)}{R}},
		\end{align}
		where $R = \sup_{x\in \R^D} \|\nabla \hat f(\tau,x)\|_2^2 \leq m\norm{g^{(1)}}_\infty^2 \leq m \kappa^2.$
    From $0.7 < \exp(-1/3)$ now follows that 
    \begin{align}\label{eq:potato_potato}
    \mathbb{P}\left( \lambda_{m}(A) \geq \frac{\omega}{8} \right) \geq 1- m \cdot \exp\left(- \frac{{N_{\text{train}}} \omega }{6 m \kappa^2}\right).
    \end{align}
    For the remainder of the proof we will condition on the event that the bound in \eqref{eq:potato_potato} holds. \\
	By the result of Lemma \ref{lemma:delta_grads}, the difference between the gradients $\nabla J, \nabla J_*$ satisfies \begin{align}\label{eq:from_lemma_delta_grads_main}
		\norm{\nabla J(\htau)  - \nabla J_*(\htau) }_2  &\leq 2\kappa^2 \sqrt{m}\norm{\htau - \tau}^2_2+\Delta_{W}  \\
		\Delta_{W} &= C \Delta_{W,1} + \left(\frac{m^3 \delta_{\max}^2 t}{{N_{\text{train}}}}\right)^{1/2},
	\end{align}
	for a constant $C>0$ and $t>0$ with probability at least  $1 - 2 m^2 \exp\left(- \frac{ t }{C \kappa^4}\right)$  where 
	\begin{align*}
		\Delta_{W,1} \leq \frac{m^{1 / 2} \log(m)^{3/4}}{D^{1 / 4}}  \left[ 
			\| \wW - W\|_F  + \frac{\Delta_{W, O}^{1/2}}{D^{1 / 2}} + \left\| \sum^m_{k=1} w_k - \hw_k \right\|_2
		\right].
	\end{align*}
	Assuming the event associated with \eqref{eq:from_lemma_delta_grads_main} occurs, we can invoke Lemma \ref{lemma:delta_gradient_descent_iteration} with $L=2\kappa^2\sqrt{m}$ meeting its condition by choosing an appropriate constant $C$ in \eqref{eq:assumption_thm_local}.
	Then, for a step-size $\gamma \leq 1 / \|A\|$, $\lambda_{\min} = \lambda_m(A)$ and $\xi = 1 - \gamma \lambda_{\min} / 2$, Lemma \ref{lemma:delta_gradient_descent_iteration} yields 
	\begin{align}\label{eq:delta_grad_traj_thm}
		\|\htau^{(n)} - {\htau_*}^{(n)}\|_2 \leq \xi^n \|\htau^{(0)} - {\tau}\|_2+ C \left(1-\xi^n \right) \Delta_W  .
	\end{align}
	The bound in \eqref{eq:delta_grad_traj_thm} controls the deviation of the gradient descent iteration \eqref{eq:gd_iteration} from the idealized gradient descent iteration \eqref{eq:idealized_gd_iteration}. What remains to be shown is that the idealized iteration converges to the correct parameter $\tau$ which follows directly by the lower bound on the minimal eigenvalue $\lambda_{\min}$. In fact, we have $J_*(\htau) = (\htau-{\tau})^\top A(\htau - {\tau})$ and 
	\begin{align*}
	\| {\htau_*}^{(n)} - \tau \|_2 &= \| {\htau_*}^{(n-1)} - \gamma \nabla J_*({\htau_*}^{(n-1)}) - \tau  \|_2 = \| (\Id_D - \gamma A)(\htau_*^{(n-1)} - {\tau})\|_2\\
	&\leq  \|\Id_D - \gamma A\|^n \| \htau_*^{(0)} - {\tau}\|_2 \leq (1-\gamma \lambda_{\min})^n\| \htau^{(0)} -{\tau}\|_2.
	\end{align*}
	Applying the triangle inequality to \eqref{eq:delta_grad_traj_thm} therefore yields 
\begin{align*}
	\|{\htau}^{(n)} - {\tau}\|_2 &\leq \|\htau_*^{(n)} - {\tau}\|_2+ \|\htau^{(n)} - {\htau_*}^{(n)}\|_2\\
	&\leq
\left(\left(1- {\gamma\lambda_{\min}}\right)^n +  \xi^n \right)\|\tau^{(0)} - {\tau}\|_2 + C \left(1-\xi^n \right) \Delta_W\\
&\leq 2 \xi^n \|\htau^{(0)} - {\tau}\|_2+C \left(1-\xi^n \right) \Delta_W.
\end{align*}
The main statement follows by a union bound over the events described above and by unifying the involved constants. \color{black}
\end{proof}

\section{Proof of Theorem \ref{thm:main_theorem}}\label{app:pfmain}
\begin{proof}[Proof of Theorem \ref{thm:main_theorem} ]
According to our assumptions, there exist $C, D_0$ such that the conditions of Theorem \ref{thm:weight_recovery} are fulfilled, and therefore we conclude that the ground truth weights obey  \ref{enum:RIP} - \ref{enum:GInverse} of Definition \ref{def:assumptions_overcomplete} and that the weight recovery (Algorithm \ref{alg:recover_weights}) returns vectors $\CU$ such that for all $\hw \in \CU$ we have
\begin{align}\label{eq:uniform_error_by_wr_in_main}
\max_{k\in[m]} \min_{s\in \{-1,+1\}} \norm{\hw - s w_k}_2 \leq C_1 (m / \alpha )^{1/4} \epsilon^{1/2},
\end{align}
with probability at least
\begin{align*}
1- \frac{1}{m} - D^2 \exp\left(-\min\{\alpha,1\} t / C_1 \right) - C_1\exp(- \sqrt{m}/C_1).
\end{align*}
Denote the weight approximations obtained in the last step by $\{ \hw_1, \dots, \hw_m \} \subset \bbS^{D-1}$.
There exists a permutation $\pi$ of these vectors such that ${w}_k \approx \pm \hw_{\pi(k)}$ for all $k\in [m]$. To invoke Proposition \ref{prop:initialization}, we now need to make sure that
\begin{align}\label{eq:meeting_cond_init_inmain}
\max_{k\in[m]} \min_{s\in \{-1,+1\}} \norm{\hw_{\pi(k)}- s {w}_k}_2 \leq \frac{1}{C_2}\frac{D^{1/2}}{ m \sqrt{  \log m}  }.
\end{align}
By applying the uniform error bound \eqref{eq:uniform_error_by_wr_in_main} above, we have
\begin{align*}
C_1 (m / \alpha )^{1/4} \epsilon^{1/2} \leq \frac{1}{C_2}\frac{D^{1/2}}{ m \sqrt{  \log m}  } \Leftrightarrow \epsilon \leq \frac{\sqrt{\alpha}}{C_1^2 C_2} \frac{D}{ m^{5/2} \log m   },
\end{align*}
which is guaranteed by our upper bound \eqref{eq:eps_bound_main} on $\epsilon$ for an appropriate constant. This in turn shows that \eqref{eq:meeting_cond_init_inmain} is met. Hence, by Proposition \ref{prop:initialization}, Algorithm \ref{alg:initialization} returns initial shifts $\htau$ such that there exists a $\alpha' \leq \alpha$ such that
\begin{align*}
\norm{\tau - \htau}_2 &\leq
C_2\sqrt{m}\epsilon + C_2 m^{3/2} \left( \frac{\log m}{D}\right)^{3/4} \max_{k\in[m]} \min_{s\in \{-1,+1\}} \norm{\hw_{\pi(k)}- s {w}_k}_2\\
&\leq C_2 \sqrt{m}\epsilon +  C_2 m^{3/2} \left( \frac{\log m}{D}\right)^{3/4}  C_1 (m / \alpha )^{1/4} \epsilon^{1/2}\leq \frac{1}{C m^{1/2}},
\end{align*} where the last line follows from \eqref{eq:eps_bound_main} chosen with an appropriate constant $C>0$. First, note that this implies that the signs learned by the parameter initialization will be correct. We denote this set of signs as $\bar{s}_1, \dots, \bar{s}_m$. Additionally, the last inequality implies that, for the given step-size, the condition of Theorem \ref{thm:local_result} (see \eqref{eq:assumption_thm_local}) w.r.t. the error in the initial shift is met. Another criteria that has to be met for Theorem \ref{thm:local_result} is that
\begin{align}\label{eq:main_gd_cond2_weights}
    \frac{C m^{1 / 2} \log(m)^{3/4}}{D^{1 / 4}}  \left( 
		\| \wW - W\|_F  + \frac{\Delta_{W, O}^{1/2}}{D^{1 / 2}} + \left\| \sum^m_{k=1} w_k - \hw_k \right\|_2 \right) &\leq \frac{1}{C \sqrt{m}},\\
        \left(\frac{m^3 \delta_{\max}^2 t}{{N_{\text{train}}}}\right)^{1/2}  &\leq \frac{1}{C \sqrt{m}}, \label{eq:main_gd_cond2_weights_probabilistic_term}
\end{align}
where $\Delta_{W, O} = \sum^m_{k\neq k'} \snorm{\inner{w_k - \hw_k, w_{k'} - \hw_{k'}}}$.
We begin with the upper term and rely on worst case bounds which express the different quantities in terms of the uniform error $$\delta_{\max} = \max_{k\in[m]} \min_{s\in \{-1,+1\}} \norm{\hw_{\pi(k)}- s {w}_k}_2,$$
such that 
\begin{align}
    \| W - \wW \|_F &\leq m^{1/2} \delta_{\max}, \\ 
    \frac{\Delta_{W, O}^{1/2}}{D^{1 / 2}} &\leq \frac{m \delta_{\max}}{D^{1/2}},\\
    \left\|\sum^m_{k = 1} w_k - \hw_k \right\|_2 &\leq m \delta_{\max}.  
\end{align}
Based on these bounds and after adjusting the constants we can simplify \eqref{eq:main_gd_cond2_weights} to 
\begin{align*}
\delta_{\max} \leq \frac{D^{1/4}}{C m^2 \log(m)^{3/4}} \Leftrightarrow \epsilon \leq \frac{D^{1/2} \alpha^{1/2}}{C m^{9/2} \log(m)^{3/2}},
\end{align*}
which is covered by our initial assumptions on the accuracy. Note that this implies for \eqref{eq:main_gd_cond2_weights_probabilistic_term} by plugging in the bound for $\delta_{\max}$ that
\begin{align*}
    \left(\frac{m^3 \delta_{\max}^2 t}{{N_{\text{train}}}}\right)^{1/2}  \leq  \left(\frac{t D^{1/2}}{N_{\text{train} }m  \log(m)^{3/2}}\right)^{1/2}.
\end{align*}
Using $N_{\text{train}} \geq m$ and $t = D^{1/2}$ this implies 
\begin{align*}
	\left(\frac{m^3 \delta_{\max}^2 t}{{N_{\text{train}}}}\right)^{1/2} \leq \frac{1}{N_{\text{train}}^{1/2} \log(m)^{3/4}} \leq \frac{1}{ C m^{1/2}},
\end{align*}
for $D,m$ sufficiently large. Therefore all conditions of Theorem \ref{thm:local_result} are satisfied. Hence, there exists a constant $C_4$ such that the gradient descent iteration \eqref{eq:gd_iteration_proof} started from initial shifts $\htau^{[0]} = \htau$ will produce iterates $\htau^{[0]},\dots, \htau^{[N_{\text{GD}}]}$ such that
  \begin{align}
  	\norm{{\tau}-\htau^{[n]}_{\pi}}_2 \leq \frac{C_4 m^{1 / 2} \log(m)^{3/4}}{D^{1 / 4}}   \left( 
		\| \wW - W\|_F  + \frac{\Delta_{W, O}^{1/2}}{D^{1 / 2}} + \left\| \sum^m_{k=1} w_k - \hw_k \right\|_2 \right) \\ + \left(\frac{m^3 \delta_{\max}^2 t}{{N_{\text{train}}}}\right)^{1/2}
		+ C_4 \frac{1}{\sqrt{m}}\xi^{n} ,
  	\end{align}
  	for all $n \in [N_{\text{GD}}]$, some permutation $\pi$ and some constant $\xi \in [0,1)$ with probability at least
  	\begin{align*}
  	1- m \exp(-{N_\text{train}} / C_4m)-2m^2\exp\left( - D^{1/2} / C_4\right).
  	\end{align*}
  	After unifying the constants and using the bound on $\delta_{\max}$, the statement of Theorem \ref{thm:main_theorem} follows. 
\end{proof}
\section{Auxiliary results}\label{app:aux}

\begin{lemma}
	\label{lemma:hermite_coefficients_derivatives}
	Let $g \in L_2(\R,w_H)$ be $K$-times continuously differentiable and assume
	\begin{align}
	\label{eq:boundary_integrals}
	\lim\limits_{t\rightarrow \infty} g^{(k)}(t)h_r(t) w_H(t) = \lim\limits_{t\rightarrow -\infty} g^{(k)}(t)h_r(t) w_H(t) = 0
	\end{align}
	for all $0 \leq k \leq K$. For any $r\in \N\cup\{0\}$ and $k \in [0,\ldots,K]$ we have
	\begin{align*}
	\mu_r(g^{(n)}) = \sqrt{{n+r \choose r}n!}\mu_{r+n}(g).
	\end{align*}
	\end{lemma}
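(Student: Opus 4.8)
The plan is to reduce the statement to a single \emph{raising-type} identity for the Hermite functions and then integrate by parts $n$ times. First I would record the elementary but crucial identity
\begin{align*}
\frac{d}{dy}\bigl[h_r(y)\,w_H(y)\bigr] = -\sqrt{r+1}\,h_{r+1}(y)\,w_H(y),
\end{align*}
which follows directly from the Rodrigues-type form $h_r(y)\,w_H(y) = \tfrac{(-1)^r}{\sqrt{r!}}\,\tfrac{d^r}{dy^r}w_H(y)$ implied by the definition of $h_r$: differentiating once turns the $r$-th derivative of $w_H$ into the $(r+1)$-th, and comparing with the same formula at index $r+1$ produces the factor $-\sqrt{(r+1)!/r!} = -\sqrt{r+1}$. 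This is the only place where the explicit form of the Hermite polynomials is used.

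Next, starting from $\mu_r(g^{(n)}) = \int g^{(n)}(y)\,h_r(y)\,w_H(y)\,dy$, I would integrate by parts once. The boundary term $\bigl[g^{(n-1)}(y)\,h_r(y)\,w_H(y)\bigr]_{-\infty}^{\infty}$ vanishes by hypothesis \eqref{eq:boundary_integrals} (with $k=n-1$), and the interior term, using the identity above, equals $\sqrt{r+1}\int g^{(n-1)}(y)\,h_{r+1}(y)\,w_H(y)\,dy = \sqrt{r+1}\,\mu_{r+1}(g^{(n-1)})$. Iterating this step (equivalently, inducting on $n$ with trivial base case $n=0$) lowers the derivative order on $g$ by one, raises the Hermite index by one, and contributes a factor $\sqrt{r+1+j}$ at the $j$-th step for $j=0,\ldots,n-1$; at that step the vanishing boundary term is controlled by \eqref{eq:boundary_integrals} with $k=n-1-j$ and the correspondingly shifted Hermite index. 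After $n$ steps this gives
\begin{align*}
\mu_r(g^{(n)}) = \Bigl(\prod_{i=1}^{n}\sqrt{r+i}\Bigr)\,\mu_{r+n}(g) = \sqrt{\frac{(r+n)!}{r!}}\,\mu_{r+n}(g).
\end{align*}

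Finally I would simplify the constant using $\binom{n+r}{r}\,n! = \frac{(n+r)!}{r!\,n!}\cdot n! = \frac{(r+n)!}{r!}$, so that $\sqrt{(r+n)!/r!} = \sqrt{\binom{n+r}{r}\,n!}$, which is exactly the claimed coefficient. I expect the main subtlety — rather than any genuine obstacle — to lie in the bookkeeping of the boundary terms across the $n$ integrations by parts: at each stage one must check that $g^{(k)}(t)\,(\text{polynomial})(t)\,w_H(t)$ decays at $\pm\infty$, which is guaranteed by \eqref{eq:boundary_integrals} together with the Gaussian decay of $w_H$ dominating the polynomial factor $h_{r+j}/h_r$. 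In the applications of this paper all relevant derivatives of $g$ are bounded (cf.\ \ref{enum:activation}), so these boundary conditions hold trivially and the induction goes through without additional assumptions.
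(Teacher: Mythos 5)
Your proposal is correct and follows essentially the same route as the paper's proof: the raising identity $\frac{d}{dt}[h_r(t)w_H(t)] = -\sqrt{r+1}\,h_{r+1}(t)w_H(t)$ from the Rodrigues form, a single integration by parts with the boundary term killed by \eqref{eq:boundary_integrals}, and $n$-fold iteration yielding $\sqrt{(r+n)!/r!}\,\mu_{r+n}(g)$, which equals the stated coefficient since $\binom{n+r}{r}n! = (r+n)!/r!$. The only difference is that you spell out this last binomial simplification and the per-step boundary bookkeeping explicitly, which the paper leaves implicit.
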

	\begin{proof}
	The  Hermite polynomials, weighted by $ \exp(-t^2/2)$, satisfy the relation
	\begin{align*}
	&\frac{d}{dt} \left(h_r(t)\exp\left(-\frac{t^2}{2}\right)\right) = \frac{d}{dt}\left(\sqrt{\frac{1}{r!}}(-1)^r  \frac{d^r}{dt^r}\exp\left(-\frac{t^2}{2}\right)\right) = \sqrt{\frac{1}{r!}}(-1)^r  \frac{d^{r+1}}{dt^{r+1}}\exp\left(-\frac{t^2}{2}\right)\\
	&\quad= -\sqrt{r+1} \sqrt{\frac{1}{(r+1)!}}(-1)^{r+1} \frac{d^{r+1}}{dt^{r+1}}\exp\left(-\frac{t^2}{2}\right) = -\sqrt{r+1} h_{r+1}(t)\exp\left(-\frac{t^2}{2}\right).
	\end{align*}
	Therefore, by applying integration by parts, we obtain
	\begin{align*}
	\mu_r(g^{(n)}) &= \int g^{(n)}(t) h_r(t) w_H(t)dt = \left[g^{(n-1)}(t) h_r(t) w_H(t)\right]_{-\infty}^{\infty} - \int g^{(n-1)}(t) \frac{d}{dt}\left(h_r(t) w_H(t)\right)dt \\
	&= 0 + \sqrt{r+1}\int g^{(n-1)}(t)h_{r+1}(t)w_H(t)dt = \sqrt{r+1}\mu_{r+1}(g^{(n-1)}),
	\end{align*}
	where the boundary terms vanish due to \eqref{eq:boundary_integrals}. Applying the same computation
	$n$-times, we obtain
	\begin{align*}
	\mu_r(g^{(n)}) = \sqrt{\prod_{\ell = 1}^{n}(r + \ell)} \mu_{r+n}(f) = \sqrt{\frac{(r+n)!}{r!}}\mu_{r+n}(g).
	\end{align*}
	\end{proof}
\begin{lemma}\label{lemma:aux_grammian_frame_bound}
Let $w_k \in \R^D$ for $k=1,\dots,m$, and  denote by $G_n \in \R^{m \times m}$ the Grammian matrix associated with $(w_k^{\otimes n})_{k\in [m]}$, which is given by $(G_n)_{ij} = \inner{w_i, w_j}^n$. Then, for any n-mode tensor $T \in \R^{{D\times \dots \times D}}$, we have 
\begin{align}
	\sum^m_{k=1} \inner{T, w_k^{\otimes n}}^2 \leq \| G_n \| \| T\|^2_F.
\end{align}
\end{lemma}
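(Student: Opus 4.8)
The plan is to reduce this tensor inequality to the standard fact that the upper frame bound of a finite collection of vectors equals the top eigenvalue of the associated Gram matrix. First I would vectorize everything: writing $t := \operatorname{vec}(T) \in \R^{D^n}$ and $v_k := \operatorname{vec}(w_k^{\otimes n}) \in \R^{D^n}$, the multiplicativity of the inner product under Kronecker powers gives $\langle w_i^{\otimes n}, w_j^{\otimes n}\rangle = \langle w_i, w_j\rangle^n$, so the matrix $G_n$ in the statement is \emph{exactly} the Gram matrix of $v_1,\dots,v_m$, i.e.\ $(G_n)_{ij} = \langle v_i, v_j\rangle$. Since vectorization is an isometry, $\langle T, w_k^{\otimes n}\rangle = \langle t, v_k\rangle$ and $\|T\|_F = \|t\|_2$, and the claim becomes the vector inequality $\sum_{k=1}^m \langle t, v_k\rangle^2 \leq \|G_n\|\,\|t\|_2^2$.

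Next I would assemble the synthesis matrix $V := [v_1 | \cdots | v_m] \in \R^{D^n \times m}$ whose $k$-th column is $v_k$. Then the left-hand side is $\sum_{k=1}^m \langle t, v_k\rangle^2 = \|V^\top t\|_2^2 = t^\top (V V^\top)\, t$, whereas the Gram matrix satisfies $G_n = V^\top V$. Because $V V^\top$ and $V^\top V$ are positive semidefinite and share the same nonzero eigenvalues, their spectral norms coincide, $\|V V^\top\| = \|V^\top V\| = \|G_n\|$. Bounding the quadratic form by the operator norm then yields $t^\top (V V^\top)\, t \leq \|V V^\top\|\,\|t\|_2^2 = \|G_n\|\,\|T\|_F^2$, which is precisely the assertion.

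This argument is essentially routine, so there is no genuine obstacle beyond bookkeeping. The only two points that deserve a line of care are the Gram identity $\langle w_i^{\otimes n}, w_j^{\otimes n}\rangle = \langle w_i, w_j\rangle^n$, which is what makes $G_n$ literally a Gram matrix rather than just a matrix of powered correlations, and the isometry $\|\operatorname{vec}(T)\|_2 = \|T\|_F$, which lets us move between the tensor formulation in the statement and the vector formulation used in the computation without altering any norms.
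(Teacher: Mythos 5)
Your proof is correct and follows essentially the same route as the paper's: vectorize $T$ and the tensors $w_k^{\otimes n}$, form the synthesis matrix with columns $\operatorname{vec}(w_k^{\otimes n})$, write the sum as a quadratic form in $VV^\top$, and conclude via $\|VV^\top\| = \|V^\top V\| = \|G_n\|$. No gaps; the two points you flag (the Gram identity and the vectorization isometry) are exactly the observations the paper relies on as well.
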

\begin{proof}
First note that we can express the Frobenius inner product as an ordinary inner product over $\R^{D^n}$ with the help of the $\opvec(\cdot)$ operator, since $\inner{T, w_k^{\otimes n}} = \inner{\opvec(T), \opvec(w_k^{\otimes n})}$. Let us denote 
\begin{align*}
	W_n := \Big( \opvec(w_1^{\otimes n}) \Big| \dots \Big| \opvec(w_m^{\otimes n}) \Big) \in \R^{D^n \times m}.
\end{align*}
Then, the following chain of inequalities hold 
\begin{align*}
	\sum^m_{k=1} \inner{T, w_k^{\otimes n}}^2 &
	= \sum^m_{k=1} \inner{\opvec(T), \opvec(w_k^{\otimes n})}^2 \\
	&= \sum^m_{k=1} \opvec(T)^\top \opvec(w_k^{\otimes n}) \opvec(w_k^{\otimes n})^\top \opvec(T)\\
	&=\opvec(T)^\top W_n W_n^\top \opvec(T)  \\
	&\le \|W_n^\top W_n\| \cdot \|\opvec(T)\|_2^2 = \|W_n^\top W_n\| \|T\|_F^2.
\end{align*}
Since $\|W_n^\top W_n\| = \|G_n\|$, this finishes the proof. 
\end{proof}
\begin{lemma}\label{lemma:grammians_inc_greshgorin}
	Let $w_k \in \bbS^{D-1}$ for $k=1,\dots,m$ be unit vectors, and  denote by $G_n \in \R^{m \times m}$ the Grammian matrix associated with $(w_k^{\otimes n})_{k\in [m]}$, which is given by $(G_n)_{ij} = \inner{w_i, w_j}^n$. Assume that the vectors $w_1, \dots, w_m$ fulfill \ref{enum:correlation} of Definition \ref{def:assumptions_overcomplete}, then there exists an absolute constant $C>0$ only depending on $c_2$ in \ref{enum:correlation} such that 
\begin{align}
	\| G_n \|  \leq C \left( 1 + m\left(\frac{\log m}{D}\right)^{n / 2}\right).
\end{align}
	\end{lemma}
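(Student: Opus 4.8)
The plan is to prove this as a direct application of Gershgorin's circle theorem to the symmetric matrix $G_n$, using only the incoherence hypothesis \ref{enum:correlation} to control the off-diagonal mass. First I would record the two structural facts I need. Since each $w_k$ is a unit vector, the diagonal entries satisfy $(G_n)_{ii} = \inner{w_i,w_i}^n = 1$. Moreover, $G_n$ is the Grammian of the vectorized tensors $(w_k^{\otimes n})_{k\in[m]}$ (cf.\ Lemma \ref{lemma:aux_grammian_frame_bound}), so it is symmetric and positive semidefinite; hence $\norm{G_n} = \lambda_{\max}(G_n)$ equals its largest eigenvalue, and it suffices to upper bound that eigenvalue.

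Next I would bound the off-diagonal entries. For $i\neq j$ I write $\snorm{(G_n)_{ij}} = \snorm{\inner{w_i,w_j}}^n = \big(\inner{w_i,w_j}^2\big)^{n/2}$ and invoke \ref{enum:correlation}, which gives $\inner{w_i,w_j}^2 \leq c_2\log(m)/D$ for all $i\neq j$, so that $\snorm{(G_n)_{ij}} \leq (c_2\log m/D)^{n/2}$. Gershgorin's circle theorem then places every eigenvalue of $G_n$ in a disc centered at a diagonal entry (all equal to $1$) with radius equal to the corresponding absolute off-diagonal row sum, yielding
\begin{align*}
\norm{G_n} = \lambda_{\max}(G_n) \leq 1 + \max_{i\in[m]} \sum_{\substack{j=1\\ j\neq i}}^{m} \snorm{(G_n)_{ij}} \leq 1 + (m-1)\left(\frac{c_2\log m}{D}\right)^{n/2}.
\end{align*}

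Finally I would absorb the numerical factors into the constant: using $(m-1)/m \leq 1$ and pulling out $c_2^{n/2}$, the right-hand side is at most $C\big(1 + m(\log m/D)^{n/2}\big)$. When $c_2 \leq 1$ one may take $C=1$; in general the prefactor $c_2^{n/2}$ is a constant over the bounded range of orders $n$ actually used in the paper (the applications in Lemmas \ref{lemma:aux_for_delta_grads} and \ref{lemma:delta_grads_tail_bound} only ever invoke orders up to a fixed $R$), so $C$ depends only on $c_2$ as claimed. I do not anticipate any real obstacle here: the argument is a textbook Gershgorin estimate, and the sole point requiring a word of care is this mild dependence of the prefactor on $c_2^{n/2}$, which is harmless because every order $n$ appearing downstream is bounded by an absolute constant.
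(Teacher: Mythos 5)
Your argument is the same one the paper uses: the paper's entire proof is the one-sentence Gershgorin estimate you wrote out (unit diagonal, off-diagonal entries controlled through \ref{enum:correlation}), and your computation $\|G_n\| \leq 1 + (m-1)\bigl(c_2\log m/D\bigr)^{n/2}$ is correct.

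The genuine issue is the last step, where you absorb the prefactor $c_2^{n/2}$ into $C$. The lemma claims a constant depending only on $c_2$, hence uniform in $n$, and your justification --- that every order $n$ used downstream is bounded by a fixed constant --- is factually wrong: in the proof of Lemma \ref{lemma:delta_grads_tail_bound} the present lemma is invoked for $\widehat G_{r+2R}$ for \emph{every} $r \geq 0$, i.e.\ for unbounded orders, and the resulting constant is pulled out of an infinite series over $r$, so uniformity in $n$ is exactly what that application consumes. Moreover, without extra assumptions the uniform claim genuinely fails: if $c_2 > 1$ and $D = c_2\log m$, then \ref{enum:correlation} is vacuous (it permits $w_1 = \dots = w_m$, whence $\|G_n\| = m$), while the claimed bound $C\bigl(1 + m\,c_2^{-n/2}\bigr)$ tends to $C$ as $n \to \infty$. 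The correct repair invokes the paper's standing regime $m\log^2 m \leq D^2$, which forces $x := \log(m)/D \leq m^{-1/2}$. Then for $n \leq 4$ one has $c_2^{n/2} \leq \max\{1, c_2^2\}$ and your estimate suffices, while for $n \geq 4$ one writes $m(c_2 x)^{n/2} = c_2^2\,(m x^2)\,(c_2 x)^{(n-4)/2} \leq c_2^2$, using $mx^2 \leq 1$ and $c_2 x \leq 1$ (valid once $m \geq c_2^2$; for $m < c_2^2$ the claim is trivial since $\|G_n\| \leq m$). In all cases $\|G_n\| \leq (1+c_2^2)\bigl(1 + m(\log m/D)^{n/2}\bigr)$ uniformly in $n$, which is the form actually needed. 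To be fair, the paper's own one-line proof is loose at precisely the same spot --- it even records the off-diagonal bound as $c_2(\log m/D)^{n/2}$ rather than $c_2^{n/2}(\log m/D)^{n/2}$ and never addresses the $n$-dependence --- so your write-up is no worse than the original; but your stated reason for dismissing the issue does not survive contact with how the lemma is used.
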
 
\begin{proof}
	The result follows directly by Gershgorin circle theorem since the diagonal elements must be $1$ and the off-diagonal elements are bounded in absolute value by $c_2 \left(\frac{\log m}{D}\right)^{n / 2}$.
\end{proof}
\begin{lemma}
	\label{lem:chernoff_bound}
	Let $Z \in \R^{d}$ be a random vector and assume $\norm{Z}_2^2 \leq R$ almost surely. For $N$ independent copies $Z_1,\ldots,Z_N$ of $Z$,
	define the random matrix
	\begin{align*}
		G:= \sum_{i=1}^{N} Z_{i}Z_{i}^\top.
	\end{align*}
	Then, we have
	\begin{align*}
		\mathbb{P}\left(\lambda_{m}(G) \geq \frac{\lambda_m(\mathbb{E} G)}{4}\right) \geq 1 - m 0.7^{\frac{\lambda_m(\mathbb{E} G)}{R}}.
	\end{align*}
\end{lemma}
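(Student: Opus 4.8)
The plan is to recognize this as a direct instance of the matrix Chernoff inequality for the \emph{lower} tail, the same tool the paper already invokes in the proof of Lemma \ref{lem:subspace_pertubation} (cf. \cite{troppUserFriendlyTailBounds2012}). Essentially all the content is in verifying the hypotheses for the rank-one summands and in converting the Chernoff base into the clean constant $0.7$.

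First I would write $G = \sum_{i=1}^{N} X_i$ with $X_i := Z_i Z_i^\top$. Each $X_i$ is an independent, symmetric, positive semidefinite (in fact rank-one) matrix of dimension $m$ (i.e., we take the ambient dimension $d = m$, as in all applications, so that $\lambda_m$ is the smallest eigenvalue). Its largest eigenvalue satisfies $\lambda_{\max}(X_i) = \|Z_i\|_2^2 \leq R$ almost surely, which supplies the uniform per-summand bound required by the Chernoff estimate. By linearity, $\mathbb{E} G = \sum_i \mathbb{E} X_i$, so the relevant minimal-eigenvalue parameter is exactly $\mu_{\min} := \lambda_m(\mathbb{E} G)$.

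The matrix Chernoff lower-tail bound then gives, for any $\delta \in [0,1)$,
\begin{align*}
\mathbb{P}\left(\lambda_m(G) \leq (1-\delta)\mu_{\min}\right) \leq m \left[\frac{e^{-\delta}}{(1-\delta)^{1-\delta}}\right]^{\mu_{\min}/R},
\end{align*}
where the dimensional prefactor equals the dimension $m$ of the matrices $X_i$. Choosing $\delta = 3/4$ makes $(1-\delta)\mu_{\min} = \mu_{\min}/4$, which matches the target threshold in the statement. It then remains to control the base at $\delta = 3/4$: since $(1/4)^{1/4} = 2^{-1/2}$, we have $\frac{e^{-3/4}}{(1/4)^{1/4}} = e^{-3/4}\, 2^{1/2} \approx 0.668 < 0.7$. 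Because $\mu_{\min}/R \geq 0$, monotonicity gives $0.668^{\mu_{\min}/R} \leq 0.7^{\mu_{\min}/R}$, hence $\mathbb{P}(\lambda_m(G) \leq \mu_{\min}/4) \leq m\cdot 0.7^{\mu_{\min}/R}$, and passing to the complementary event yields the claim.

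There is no genuine obstacle here: the result is a packaged application of a standard concentration inequality. The only points requiring care are the bookkeeping — confirming the summands are positive semidefinite with $\lambda_{\max}(X_i) \leq R$ almost surely so that the per-summand parameter is precisely $R$, and that the prefactor is $m$ — together with the elementary numerical estimate $e^{-3/4}\sqrt{2} < 0.7$, which is exactly what turns the awkward Chernoff base into the stated constant.
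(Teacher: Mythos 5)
Your proof is correct and follows exactly the route the paper intends: the paper's own proof is a one-line appeal to the standard matrix Chernoff bound (Tropp), and your writeup is precisely the instantiation that citation compresses, with the hypotheses $\lambda_{\max}(Z_iZ_i^\top)=\|Z_i\|_2^2\leq R$ checked, $\delta=3/4$ chosen, and the correct numerical estimate $e^{-3/4}\sqrt{2}\approx 0.668<0.7$. Your handling of the paper's $d$ versus $m$ notational slip (taking the ambient dimension to be $m$, matching all applications) is also the right reading.
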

\begin{proof}
	The result follows directly from the standard matrix Chernoff bound.
\end{proof}


\end{document}